\title{Tight Margin-Based Generalization Bounds for Voting Classifiers over Finite Hypothesis Sets}
\author{Kasper Green Larsen \qquad\qquad Natascha Schalburg \\ 
        \texttt{\{larsen,n.schalburg\}@cs.au.dk}\\
        Computer Science, Aarhus University
}
\date{}
\begin{document}
\maketitle
\begin{abstract}
    We prove the first margin-based generalization bound for voting classifiers, that is asymptotically tight in the tradeoff between the size of the hypothesis set, the margin, the fraction of training points with the given margin, the number of training samples and the failure probability.
\end{abstract}
\section{Introduction}
    Ensemble learning is a powerful machine learning tool; it enables us to transform weak learners; hypothesis classes that are barely better than guessing, into learners with state-of-the-art performance.  
    
    In essence, ensemble methods take a set of base classifiers, weigh those classifiers according to performance on the training set and retrieve the final prediction by aggregating according to those weights. An important historical example is AdaBoost~(\cite{adaboost}), a type of voting classifier, which builds the ensemble classifier sequentially; new base classifiers are added to the ensemble to correct the mistakes of the current ensemble. AdaBoost was the first efficient and practical implementation of a boosting algorithm, and hence the relevance of ensemble learners is often attributed to AdaBoost.  
        
    Much theoretical research has been done to explain the impressive practical performance of AdaBoost and other ensemble methods. In the groundbreaking work of~\cite{SFBL98}, it was first demonstrated experimentally that the accuracy of the voting classifier produced by AdaBoost sometimes keeps improving even when training past the point of perfectly classifying the training data. This goes against the conventional wisdom that simple models generalize better. In an attempt to explain this phenomenon, the authors first observed that the so-called margins of the produced voting classifier kept improving when training past the point of perfect training accuracy. This led the authors to introduce the currently most prominent framework for analyzing the generalization performance of voting classifiers. This framework is called margin theory, named after the notion of margins, which can be interpreted as how \textit{confident} the classifier is in its prediction. 
    
    In this paper we consider voting classifiers, i.e.\ ensemble learners in the setting of binary classification over a domain $\Xc$ with labels $\{-1,+1\}$. Formally, we model a voting classifier $f$ as a convex combination of base classifiers $h\in\Hc$. The set of voting classifiers $\Cc(\Hc)$ over a base hypothesis set $\Hc$ is defined as:
    \begin{align*}
        \Cc(\Hc) = \bset{ f = \sum_{h\in\Hc} a_h h \bmid \sum_{h\in\Hc}a_h=1,\, a_h \ge 0 \enspace\forall h\in\Hc },
    \end{align*}
    where the constant $a_h$ is the weight describing the impact of $h$ in the voting classifier. A voting classifier $f \in \Cc(\Hc)$ makes a prediction on a point $x \in \Xc$ by computing $\sign(f(x))$. For every voting classifier $f\in\Cc(\Hc)$, the margin of a data point $(\xr,\yr) \in \Xc \times \{-1,+1\}$ is defined as
    \begin{align*}
        yf(x) 
        = y\sum_{h\in\Hc} a_h h(x) 
        = \sum_{\substack{h\in\Hc\\h(x)=y}} a_h-\sum_{\substack{h\in\Hc\\h(x)\neq y}} a_h.
    \end{align*}
    The margin thus lies in $[-1,+1]$. If a voting classifier's prediction is wrong on a data point $(\xr,\yr)$, $\yr$ and $f(\xr)$ will have different signs, making the margin negative. For a voting classifier, the magnitude of the margin depends on the consensus among the base classifiers; when base classifiers disagree their weights $\alpha_h$ will partially cancel out, making the margin smaller. In a nutshell, margin theory now says that if a voting classifier has large (positive) margins on its training data, then it generalized well to new data.
    
    When studying generalization bounds for voting classifiers, one makes the standard assumption that the training data $\Sr$ is obtained as $n$ i.i.d.\ samples from an unknown distribution $\Dc$ over $\Xc \times \{-1,+1\}$. The goal is to bound the probability of miss-classifying a new data point $(\xr,\yr) \sim \Dc$, i.e.\ to show that $\Lc_\Dc(f) =\Pr_{(\xr,\yr)\sim\Dc}[\yr f(\xr)\le0]$ is small. In margin theory, the loss $\Lc_\Dc(f)$ is related to the margins on the training data. For any margin $0 < \theta \leq 1$, we define  $\Lc_\Sr^\theta(f)=\Pr_{(\xr,\yr)\sim\Sr}[\yr f(\xr)\le\theta] = \frac{1}{n}\sum_{i\in[n]} \indi{\yr_if(\xr_i)\le\theta}$ as the fraction of training points $(\xr,\yr) \in \Sr$ where $f$ has a margin of at most $\theta$. Here the notation $(\xr,\yr) \sim \Sr$ denotes a uniform random point from $\Sr$.
    
    The first margin-based generalization bound for voting classifiers was proved by \cite{SFBL98}. They showed that for any distribution $\Dc$ over $\Xc\times\set{-1,+1}$, any training set $\Sr\sim\Dc^n$ of size $n$, any finite set of base classifiers $\Hc$, any $0 < \delta < 1$ and any margin $0< \theta \leq 1$, it holds with probability at least $1-\delta$ (over the random choice of $\Sr$) that \emph{every} voting classifier $f \in \Cc(\Hc)$ satisfies
    \begin{align}
    \label{eq:schaphire}
        \Lc_\Dc(f) \le \Lc_\Sr^\theta(f) + c\left(\sqrt{\frac{\ln(n)\ln|\Hc|}{\theta^2n}+\frac{\ln(e/\delta)}{n}} \right),
    \end{align}
    where $c>0$ is a universal constant. Since then, much research has gone into understanding the behavior and impact of margins on voting classifiers. We discuss many of these research directions below and for now focus on the main topic of this work, namely margin-based generalization bounds in the simplest setup where the base hypothesis set $\Hc$ is finite.

    Following the seminal work of~\cite{SFBL98},~\cite{breiman1999prediction} studied the special case where $\Lc_{\Sr}^\theta(f) = 0$, i.e.\ when voting classifiers $f$ have margins at least $\theta$ on all training points. There, he showed that the bound in~\eqref{eq:schaphire} improves to
    \begin{align*}
        \Lc_\Dc(f) \le c\left(\frac{\ln(n)\ln|\Hc|}{\theta^2n}+\frac{\ln(e/\delta)}{n} \right),
    \end{align*}
    for all voting classifiers $f$ with $\Lc_\Sr^\theta(f)=0$.
    The current best upper bound on the generalization error, due to~\cite{gz13}, interpolates between two above bounds and gives that with probability $1-\delta$, it holds for every voting classifier $f\in\Cc(\Hc)$ that
    \begin{align}
    \label{eq:kth}
        \Lc_\Dc(f) \le \Lc_\Sr^\theta(f) + c\left(\sqrt{\Lc_\Sr^\theta(f)\left(\frac{\ln(n)\ln(|\Hc|)}{\theta^2n}+\frac{\ln(e/\delta)}{n}\right)} + \frac{\ln(n)\ln(|\Hc|)}{\theta^2n}+\frac{\ln(e/\delta)}{n}\right).
    \end{align}    
A natural question is whether the bound of~\cite{gz13} is tight for finite $\Hc$. This question was studied in two subsequent works~\cite{boostingLowerBound, GKL20} on generalization \emph{lower bounds}. The latter of these gives the currently tightest lower bound, stating that for any cardinality $N$, parameters $1/N<\tau, \theta<c$, and number of samples $c^{-1} \theta^{-2} \ln(N)\leq n \leq 2^{N^c}$, for a small constant $c>0$, there exists a data distribution $\Dc$ over $\Xc\times\set{-1,+1}$ and a finite hypothesis class $\Hc$ with $|\Hc|=N$ such that with constant probability over a training set $\Sr\sim\Dc^n$, there is a voting classifier $f\in\Cc(\Hc)$ such that $\Lc_\Sr^\theta(f) \leq \tau$ and 
    \begin{align}
        \Lc_\Dc(f)
        &\ge
        \tau +c\left( \sqrt{\tau \cdot \frac{\ln(e/\tau)\ln(N)}{\theta^2n}}+\frac{\ln(\theta^2 n/\ln N)\ln(N)}{\theta^2n}\right) \nonumber\\
        &\ge \Lc_\Sr^\theta(f) +c\left( \sqrt{\Lc_\Sr^\theta(f) \cdot \frac{\ln(e/\Lc_\Sr^\theta(f))\ln(|\Hc|)}{\theta^2n}}+\frac{\ln(\theta^2 n/\ln(|\Hc|))\ln(|\Hc|)}{\theta^2n}\right). \label{eq:lower}
    \end{align}
Note that the parameter $\tau$ is used to yield a lower bound for the entire tradeoff of possible values of $\Lc_\Sr^\theta(f)$. Let us also mention that the $\delta$-dependency in~\eqref{eq:kth} can be shown to be tight following classic work, e.g.\ see Chapter 14 in~\cite{DevroyeGyorfiLugosi1996}. Finally, the original work of~\cite{GKL20} states the lower bound only for $(\theta^{-2} \ln(N))^{1+c} \leq n$ but instead with $\ln(\theta^2 n/\ln(|\Hc|))$ replaced by $\ln(n)$. A careful examination of their proof reveals the above more general form.

Comparing the best upper and lower bounds, an intriguing gap of $\sqrt{\ln(n)/\ln(e/\Lc_\Sr^\theta(f))}$ remains. In particular, for voting classifiers $f$ where a constant fraction of training points have margin less than $\theta$, the gap is a $\sqrt{\ln(n)}$ factor. 

\subsection*{Our Contribution}
In this paper, we finally settle the generalization performance of voting classifiers over finite base hypothesis sets $\Hc$ by proving an improved generalization upper bound. Our new upper bound matches the lower bound in~\eqref{eq:lower} across the range of the parameters $\theta, \Lc_\Sr^\theta(f), n$ and $\delta$ and is as follows

\begin{theorem}\label{thm:FullStatement}
    There exists a constant $c>0$ such that the following holds. Let $\Dc$ be a distribution over $\Xc\times\set{- 1,+1}$ and let $\Hc\subset\set{-1,+1}^\Xc$ be a finite base hypothesis set. Then for $n \geq c$ and $0 < \delta < 1$, it holds with probability at least $1-\delta$ over a sample $\Sr\sim \Dc^n$, that for all voting classifiers $f\in\Cc(\Hc)$ and margins $\theta\in\left(\sqrt{e \ln(|\Hc|)/n},1\right]$ we have:
    \begin{align*}
        \Lc_\Dc(f)\le\Lc_\Sr^\theta(f) + 
        c\left(\sqrt{\Lc_\Sr^\theta(f)\left(\frac{\ln(e/\Lc_\Sr^\theta(f))\ln(|\Hc|)}{\theta^2 n}+\frac{\ln(e/\delta)}{n}\right)}+\frac{\ln(\theta^2 n/\ln(|\Hc|))\ln(|\Hc|)}{\theta^2 n}+\frac{\ln(e/\delta)}{n}\right).
    \end{align*}
\end{theorem}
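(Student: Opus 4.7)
The natural starting point is the sampling reduction of \cite{SFBL98}: for a positive integer $N$, replace any voting classifier $f=\sum_h a_h h$ by the random proxy $g_N=\frac{1}{N}\sum_{j=1}^N h_j$ obtained by sampling $h_1,\dots,h_N$ i.i.d.\ from $\Hc$ with probability $a_h$. Every realization of $g_N$ lies in a finite class $\Gc_N$ with $|\Gc_N|\le|\Hc|^N$, and the pointwise Hoeffding estimate $\Pr_{g_N}[yg_N(x)-yf(x)>\theta/2]\le \exp(-N\theta^2/8)$ together with the inclusion $\mathbf{1}\{yf\le 0\}\le\mathbf{1}\{yg_N\le\theta/2\}+\mathbf{1}\{yg_N-yf>\theta/2\}$ yields the two reductions
\begin{align*}
\Lc_\Dc(f)\le \E_{g_N}\bigl[\Lc_\Dc^{\theta/2}(g_N)\bigr]+\exp(-N\theta^2/8),\qquad \E_{g_N}\bigl[\Lc_\Sr^{\theta/2}(g_N)\bigr]\le \Lc_\Sr^\theta(f)+\exp(-N\theta^2/8).
\end{align*}

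Next I would apply an empirical Bernstein (or Bennett) inequality to every fixed $g\in\Gc_N$ and union bound over $\Gc_N$, giving with probability $1-\delta'$ the uniform estimate $\Lc_\Dc^{\theta/2}(g)\le\Lc_\Sr^{\theta/2}(g)+O\bigl(\sqrt{\Lc_\Sr^{\theta/2}(g)\,t/n}+t/n\bigr)$ with $t=N\ln|\Hc|+\ln(1/\delta')$. Taking expectation over $g_N$ and pulling it inside the square root via Jensen replaces $\Lc_\Sr^{\theta/2}(g_N)$ under the root by $\Lc_\Sr^\theta(f)+\exp(-N\theta^2/8)$.

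The crucial step to match the lower bound \eqref{eq:lower} is the choice of $N$. A non-adaptive $N_0=\Theta(\theta^{-2}\ln(\theta^2 n/\ln|\Hc|))$ makes $\exp(-N_0\theta^2/8)\le \ln|\Hc|/(\theta^2 n)$, which is absorbed into $t/n$, and delivers the additive term $\ln(\theta^2 n/\ln|\Hc|)\ln|\Hc|/(\theta^2 n)$ of the theorem; this on its own would reproduce \eqref{eq:kth} with $\ln(n)$ replaced by $\ln(\theta^2 n/\ln|\Hc|)$. To obtain the sharper $\ln(e/\Lc_\Sr^\theta(f))$ factor inside the square root, I would discretize by $\tau_k=2^{-k}$, $k=0,\dots,\lceil\log_2 n\rceil$, set $N_k=\max\!\bigl(N_0,\lceil C\theta^{-2}\ln(e/\tau_k)\rceil\bigr)$ at level $k$ with failure probability $\delta/(k+1)^2$, and union bound over levels at a $\log\log n$ cost absorbed into $\ln(e/\delta)/n$. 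Applying the bound at the level with $\tau_k\in[\Lc_\Sr^\theta(f),2\Lc_\Sr^\theta(f)]$ makes $t/n=O(\ln(e/\Lc_\Sr^\theta(f))\ln|\Hc|/(\theta^2 n))$ under the root, as claimed.

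The main obstacle I foresee is keeping the coefficient of $\Lc_\Sr^\theta(f)$ equal to $1$ across the adaptive scheme. When $N_k$ is tuned to a large $\Lc_\Sr^\theta(f)$, the Chernoff slack $\exp(-N_k\theta^2/8)$ is only a constant factor smaller than $\Lc_\Sr^\theta(f)$, so a plain additive reduction blows $\Lc_\Sr^\theta(f)$ up multiplicatively and the target coefficient $1$ is lost; a direct computation shows that no constant depth $N_k\sim\theta^{-2}\ln(e/\Lc_\Sr^\theta(f))$ can absorb the slack into the square-root gap once $\Lc_\Sr^\theta(f)=\Omega(1)$ and $n\theta^2\gg\ln|\Hc|$. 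Circumventing this appears to require replacing the empirical Bernstein step by a Talagrand-style inequality applied to the loss class $\{(x,y)\mapsto \mathbf{1}\{yg\le \theta/2\}:g\in\Gc_N\}$, whose variance is controlled by its mean, so that the sampling slack is absorbed into the variance-driven $\sqrt{\Lc_\Sr^\theta(f)\,t/n}$ term while the coefficient in front of $\Lc_\Sr^\theta(f)$ stays equal to $1$. Combining this sharper concentration with the adaptive depth and the peeling union bound is where the bulk of the technical work will reside.
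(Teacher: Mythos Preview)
You correctly identify the obstacle: with $N\sim\theta^{-2}\ln(e/\Lc_\Sr^\theta(f))$ the additive sampling slack $\exp(-N\theta^2/8)$ is only a constant multiple of $\Lc_\Sr^\theta(f)$, so in your decomposition $\Lc_\Dc(f)\le \Lc_\Sr^\theta(f)+2\exp(-N\theta^2/8)+\sqrt{(\cdot)t/n}+t/n$ the leading coefficient on $\Lc_\Sr^\theta(f)$ is inflated. Pushing the constant in $N$ higher only trades this for an extra $\ln n$ in $t$, which is precisely the factor you are trying to remove. However, your proposed fix---replacing empirical Bernstein by a Talagrand-type inequality on $\{\mathbf{1}\{yg\le\theta/2\}:g\in\Gc_N\}$---cannot help: the offending $\exp(-N\theta^2/8)$ does not arise in the concentration step at all, it arises in the \emph{sampling reduction} step where you passed from $\mathbf{1}\{yf\le 0\}$ to $\mathbf{1}\{yg\le\theta/2\}$ via a one-sided Hoeffding bound. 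No amount of sharpening the concentration of $\Lc_\Dc^{\theta/2}(g)-\Lc_\Sr^{\theta/2}(g)$ can recover a term that was already discarded before concentration was applied.

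The paper's route avoids this loss by never discarding those terms. It writes the exact identity
\[
\Lc_\Dc(f)-\Lc_\Sr^\theta(f)=\bigl(\Lc_\Dc^{\theta/2}(g)-\Lc_\Sr^{\theta/2}(g)\bigr)+(\text{two cross-terms}),
\]
where each cross-term is itself a \emph{difference} of probabilities over $\Dc$ and over $\Sr$, rather than a single probability bounded by $\exp(-N\theta^2/8)$. After taking $\E_{g\sim\Qc_f}$ and a monotonicity observation, each cross-term becomes $\E_{(\xr,\yr)\sim\Dc}[\phi(\yr f(\xr))]-\E_{(\xr,\yr)\sim\Sr}[\phi(\yr f(\xr))]$ for a bounded function $\phi$ that can be massaged to be Lipschitz with constant $L\le c\,\theta^{-1}\exp(-N\theta^2/c)$. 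This difference is then controlled by Rademacher complexity plus the contraction inequality and the finite-class lemma, yielding a bound of order $L\sqrt{\ln|\Hc|/n}=\exp(-N\theta^2/c)\sqrt{\ln|\Hc|/(\theta^2 n)}$. The key point is the extra factor $\sqrt{\ln|\Hc|/(\theta^2 n)}$ multiplying the sampling slack: with $N\sim\theta^{-2}\ln(e/\Lc_\Sr^\theta(f))$ the cross-terms are now of order $\Lc_\Sr^\theta(f)\sqrt{\ln|\Hc|/(\theta^2 n)}$, which is dominated by the target square-root term rather than sitting as a constant multiple of $\Lc_\Sr^\theta(f)$. This Rademacher/contraction treatment of the sampling error---not a sharper concentration on $\Gc_N$---is the missing idea.
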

While it is reasonable to argue that the gap between the previous tightest upper and lower bound is small in magnitude, our result finally settles the generalization performance of one of the most influential learning techniques, except potentially in the extreme case of $n > 2^{|\Hc|^c}$ not covered by the lower bound in~\eqref{eq:lower}. Additionally, our proof technique elegantly extends a recent novel framework by~\cite{LS25} for proving generalization of large-margin halfspaces, demonstrating that the framework can be more broadly applied and hopefully paving the way for further tight generalization bounds.

We now conclude the introduction by a brief discussion of other related work. Following that, we present the high-level ideas of our proof, focusing on the significant new contributions in Section~\ref{sec:proofOverview}. After the proof overview, the fully detailed proof is given in Section~\ref{sec:proof}.

\subsection*{Other Related Work}
Closest to our work is a sequence of results on margin-based generalization bounds for voting classifiers when $\Hc$ is not finite, but instead has finite VC-dimension $d$. The works~\cite{SFBL98, breiman1999prediction, gz13} discussed above also present generalization bounds based on the VC-dimension of $\Hc$. These were all recently improved by~\cite{DBLP:conf/colt/HogsgaardL25}, who gave the currently tightest generalization bound of
\begin{align*}
\Lc_\Dc(f) \le \Lc_\Sr^\theta(f) + c\left(\sqrt{\Lc_\Sr^\theta(f)\left(\frac{\Gamma(\theta^2 n/d)d}{\theta^2n}+\frac{\ln(e/\delta)}{n}\right)} + \frac{\Gamma(\theta^2 n/d)d}{\theta^2n}+\frac{\ln(e/\delta)}{n}\right).
\end{align*}    
where $\Gamma(x) = \ln(x)\ln^2(\ln x)$. In terms of lower bounds, one may take the bound from the finite $\Hc$ case in~\eqref{eq:lower} and replace $\ln(|\Hc|)$ by $d$. This shows that the latter $\Gamma(\theta^2 n/d)$ term is tight if we replace $\Gamma(x) = \ln(x)\ln^2(\ln x)$ by $\Gamma(x) = \ln x$. For the former $\Gamma(\theta^2 n/d)$ term however, it is conceivable that an improvement to $\ln(e/\Lc_\Sr^\theta(f))$, like our improvement in the finite $\Hc$ case, is possible. We have not been able to extend our techniques to this setting, but leave it as an interesting direction for future work.

Since large margins imply generalization, much effort has gone into developing algorithms explicitly focusing on maximizing margins. The sequence of works~\cite{breiman1999prediction, grove1998boosting,bennet,ratsch2002maximizing, ratsch2005efficient,pmlr-v97-mathiasen19a} for instance focuses on maximizing the smallest margin.

Finally, let us mention that AdaBoost was originally introduced to address a theoretical question on so-called weak-to-strong learning by~\cite{kearns1988learning, kearns1994cryptographic}. Informally, a $\gamma$-weak learner, is a learning algorithm that for any distribution $\Dc$, when given enough samples from $\Dc$, guarantees to produce a classifier with accuracy at least $1/2+\gamma$. If a $\gamma$-weak learner is used to obtain each base classifier used by AdaBoost, then it is possible to show that AdaBoost eventually produces a voting classifier with all margins $\Omega(\gamma)$, see e.g.~\cite{boostbook} [Theorem 5.8]. The generalization bounds for large-margin voting classifiers above thus apply to AdaBoost when invoked with margins $\theta=\Omega(\gamma)$ and $\Lc^\theta_\Sr(f)=0$ when AdaBoost is run for enough iterations with a $\gamma$-weak learner. A lower bound by~\cite{adaboostSubopt} shows that this analysis via large margins is tight for AdaBoost. However for $\Hc$ with finite VC-dimension, it is possible to develop alternative boosting algorithms with a generalization error that is a $\ln(\gamma^2 n/d)$ factor better than AdaBoost when using a $\gamma$-weak learner, see~\cite{optimalWeakToStrong, DBLP:conf/colt/Larsen23, DBLP:conf/nips/HogsgaardLM24, DBLP:conf/colt/HogsgaardL25}.

\section{Proof Overview}\label{sec:proofOverview}




In this section, we give a high level presentation of the main ideas of our proof of Theorem~\ref{thm:FullStatement} and how we manage to improve over previous works. The proof sketch is not meant to be formal, but only to give the intuition and overall structure of the proof. For clarity we ignore all dependencies on the failure probability $\delta$.

Our proof follows the framework of \cite{SFBL98} with techniques adapted from \cite{LS25}. The main idea of the framework is to randomly discretize a voting classifier $f$ in $\Cc(\Hc)$ to obtain a hypothesis $g \approx f$ in a small finite set of voting classifiers $\Cc_N$ to be defined below. Since $\Cc_N$ is small, a union bound shows that all hypotheses in $\Cc_N$ generalize well, and since $g \approx f$, so does $f$.

While discretization makes it easier to bound the generalization error, the transition from the hypothesis $f$ to the discretized hypothesis $g$ requires us to carefully analyze the difference in accuracy between $f$ and $g$. Our key improvement comes from a much tighter analysis of this cost of discretization compared to the original analysis of~\cite{SFBL98}. Our approach borrows and extends ideas from~\cite{LS25}, who introduced a novel technique using Rademacher complexity to bound the difference in performance between a large-margin halfspace classifier and a random discretization of it.

\subsubsection*{Random Discretization by Sampling}
Following the framework of \cite{SFBL98}, we randomly discretize the voting classifiers $f \in \Cc(\Hc)$. We define the set of $N$ unweighted averages from $\Hc$ as $\Cc_N = \set{\frac{1}{N}\sum_{i=1}^N h_i\mid h_i\in\Hc}$. Our goal is to randomly discretize any $f \in \Cc(\Hc)$ to a voting classifier $g \in \Cc_N$. We do this by associating $f$ with a distribution $\Qc_f$ over $\Cc_N$, in such a way that for all $x \in \Xc$, we have $\E_{\gr \sim \Qc_f}[\gr(x)]=f(x)$. That is, the expectation of the random discretization $g$ equals $f$.

In more detail, let $f\in\Cc(\Hc)$ be a voting classifier $f=\sum_{h\in\Hc}a_h h$. Define a distribution $\Dc_f$ over $\Hc$ such that the probability of sampling $h$ using $\Dc_f$ is $a_h$. Specifically, set $\Pr_{\hr\sim\Dc_f}[\hr=h] = a_h$. 
Then from $\Dc_f$ we sample an element $g\sim \Qc_f$ by sampling $N$ i.i.d.\ hypotheses $h_1,\dots, h_N$ and outputting their average $g= \frac{1}{N}\sum_{i=1}^N h_i$. Note that any $g$ in the support of $\Qc_f$ thus lies in $\Cc_N$.

\subsubsection*{Initial Analysis}
Starting just like the original framework of~\cite{SFBL98}, we relate the error $\Lc_\Dc(f)$ of a voting classifier $f \in \Cc(\Hc)$ with the half-margin error $\Lc_\Dc^{\theta/2}(g)$ of the discretized classifier $g\sim\Qc_f$. Here we observe that for any $g \in \Cc_N$, it holds that
\begin{align*}
    \Lc_\Dc(f)
    = \Lc_\Dc^{\theta/2}(g) + \Pr_{(\xr,\yr)\sim\Dc}[\yr g(\xr)>\theta/2 \,\wedge\,\yr f(\xr)\le0] - \Pr_{(\xr,\yr)\sim\Dc}[\yr g(\xr)\le\theta/2 \,\wedge\,\yr f(\xr)>0] 
\end{align*}
We do the same for the empirical margin error $\Lc_\Sr^\theta(f)$ and the empirical half-margin error $\Lc_\Sr^{\theta/2}(g)$:
\begin{align*}
    \Lc_\Sr^\theta(f)
    = \Lc_\Sr^{\theta/2}(g) + \Pr_{(\xr,\yr)\sim\Sr}[\yr g(\xr)>\theta/2 \,\wedge\,\yr f(\xr)\le\theta] - \Pr_{(\xr,\yr)\sim\Sr}[\yr g(\xr)\le\theta/2 \,\wedge\,\yr f (\xr)>\theta].
\end{align*}
Combining these equalities, we get
\begin{align}
    \Lc_\Dc(f) - \Lc_\Sr^\theta(f) 
    = &\, \Lc_\Dc^{\theta/2}(g) 
    - \Lc_\Sr^{\theta/2}(g)\label{eq:fulldetail} \\
    &+ \Pr_{(\xr,\yr)\sim\Dc}[\yr g(\xr)>\theta/2 \,\wedge\,\yr f(\xr)\le0] 
    - \Pr_{(\xr,\yr)\sim\Sr}[\yr g(\xr)>\theta/2 \,\wedge\,\yr f(\xr)\le\theta]\nonumber\\ 
    &+ \Pr_{(\xr,\yr)\sim\Sr}[\yr g(\xr)\le\theta/2 \,\wedge\,\yr f(\xr)>\theta] 
    - \Pr_{(\xr,\yr)\sim\Dc}[\yr g(\xr)\le\theta/2 \,\wedge\,\yr f(\xr)>0]\nonumber
\end{align}

\subsubsection*{Proof of~\cite{SFBL98}}

In the original proof of~\cite{SFBL98}, they simply drop the two terms $\Pr_{(\xr,\yr)\sim\Dc}[\yr g(\xr)\le\theta/2 \,\wedge\,\yr f(\xr)>0]$ and $\Pr_{(\xr,\yr)\sim\Sr}[\yr g(\xr)>\theta/2 \,\wedge\,\yr f(\xr)\le\theta]$. Taking expectation over $g \sim \Qc_f$ on both sides of the inequality then gives 
\begin{align*}
    \Lc_\Dc(f) - \Lc_\Sr^\theta(f)
    \le &\, \E_\gr[\Lc_\Dc^{\theta/2}(\gr) 
    - \Lc_\Sr^{\theta/2}(\gr)]\\
    &+ \E_\gr[\Pr_{(\xr,\yr)\sim\Dc}[\yr \gr(\xr)>\theta/2 \,\wedge\,\yr f(\xr)\le0]]\\ 
    &+ \E_\gr[\Pr_{(\xr,\yr)\sim\Sr}[\yr \gr(\xr)\le\theta/2 \,\wedge\,\yr f(\xr)>\theta]].
\end{align*}
Here we used that the left hand side is independent of $g$ and thus the expectation may be dropped. Using that $\Cc_N$ is small in combination with Hoeffding's inequality and a union bound gives
\begin{align}
    \E_\gr[\Lc_\Dc^{\theta/2}(\gr) 
    - \Lc_\Sr^{\theta/2}(\gr)] \leq \sup_{g \in \Cc_N}[\Lc_\Dc^{\theta/2}(g) 
    - \Lc_\Sr^{\theta/2}(g)] \leq c \sqrt{\ln(|\Cc_N|)/n} = c \sqrt{N \ln(|\Hc|)/n}. \label{eq:halfmargin}
\end{align}
The remaining two terms are handled symmetrically, so let us focus on $\E_\gr[\Pr_{(\xr,\yr)\sim\Sr}[\yr \gr(\xr)\le\theta/2 \,\wedge\,\yr f(\xr)>\theta]]$. Swapping the order of expectation and probability gives
\begin{align*}
    \E_\gr[\Pr_{(\xr,\yr)\sim\Sr}[\yr \gr(\xr)\le\theta/2 \,\wedge\,\yr f(\xr)>\theta]] 
    &= \E_{(\xr,\yr)\sim\Sr}[\Pr_{\gr \sim \Qc_f}[\yr \gr(\xr)\le\theta/2 \,\wedge\,\yr f(\xr)>\theta]] \\
    &\leq \E_{(\xr,\yr)\sim\Sr}[\Pr_{\gr \sim \Qc_f}[|\gr(\xr) -f(\xr)|>\theta/2]].
\end{align*}
Now observe that for any $x$, the prediction $\gr(x)$ when $\gr \sim \Qc_f$ is obtained by averaging $N$ i.i.d.\ predictions in $\{-1,+1\}$, each with expectation $f(x)$. By Hoeffding's inequality, we thus have
\begin{align*}
    \E_\gr[\Pr_{(\xr,\yr)\sim\Sr}[\yr \gr(\xr)\le\theta/2 \,\wedge\,\yr f(\xr)>\theta]] &\leq 2\exp(-\theta^2 N/8).
\end{align*}
Combining it all,~\cite{SFBL98} conclude
\begin{align*}
    \Lc_\Dc(f) - \Lc_\Sr^\theta(f)
    &\le c\left(\sqrt{N \ln(|\Hc|)/n} + \exp(-\theta^2 N/8)\right).
\end{align*}
Choosing $N \approx \theta^{-2} \ln n$ finally gives $\Lc_\Dc(f) - \Lc_\Sr^\theta(f) \leq c \sqrt{\ln(|\Hc|) \ln(n)/(\theta^2 n)}$.

\subsubsection*{Our Key Improvements}
Our new proof departs from~\cite{SFBL98} after the equality~\eqref{eq:fulldetail}. Where they simply drop the two negative terms, we carefully exploit these to get our improvement. Repeating the expectation over $\gr \sim \Qc_f$, we get
\begin{align}
\Lc_\Dc(f) - \Lc_\Sr^\theta(f)
    \le &\, \E_\gr[\Lc_\Dc^{\theta/2}(\gr) 
    - \Lc_\Sr^{\theta/2}(\gr)] \label{eq:ourfull}\\
    &+ \E_\gr[\Pr_{(\xr,\yr)\sim\Dc}[\yr \gr(\xr)>\theta/2 \,\wedge\,\yr f(\xr)\le0]]-\E_\gr[\Pr_{(\xr,\yr)\sim\Sr}[\yr \gr(\xr)>\theta/2 \,\wedge\,\yr f(\xr)\le\theta]] \nonumber\\ 
    &+ \E_\gr[\Pr_{(\xr,\yr)\sim\Sr}[\yr \gr(\xr)\le\theta/2 \,\wedge\,\yr f(\xr)>\theta]] - \E_\gr[\Pr_{(\xr,\yr)\sim\Dc}[\yr \gr(\xr)\le\theta/2 \,\wedge\,\yr f(\xr)>0]]. \nonumber
\end{align}
If we carefully group the different $f \in \Cc(\Hc)$ based on $\Lc_\Dc^\theta(f)$ and use Chernoff/Bernstein instead of Hoeffding's inequality, we may again union bound over $\Cc_N$ to conclude
\begin{align*}
    \E_\gr[\Lc_\Dc^{\theta/2}(\gr) 
    - \Lc_\Sr^{\theta/2}(\gr)] &\leq c\sqrt{\Lc^\theta_\Sr(f) N \ln(|\Hc|)/n}.
\end{align*}
The grouping and use of Chernoff/Bernstein thus buys us the factor $\Lc^\theta_\Sr(f)$ compared to~\eqref{eq:halfmargin}. This is as such not particularly novel and could also have been carried out in the original proof of~\cite{SFBL98}.

The two remaining lines in~\eqref{eq:ourfull} are where we make novel use of the recent ideas of~\cite{LS25}. The two lines are handled symmetrically, so we focus on the first one in this overview. We see that
\begin{align}
    \E_{\gr\sim\Qc_f}\left[\Pr_{(\xr,\yr)\sim\Dc}[\yr \gr(\xr)>\theta/2 \,\wedge\,\yr f(\xr)\le0]\right] 
    - \E_{\gr\sim\Qc_f}\left[\Pr_{(\xr,\yr)\sim\Sr}[\yr \gr(\xr)>\theta/2 \,\wedge\,\yr f(\xr)\le\theta]\right] &= \nonumber\\
    \E_{(\xr,\yr)\sim\Dc}\left[\Pr_{\gr\sim\Qc_f}[\yr\gr(\xr)>\theta/2 \,\wedge\,\yr f(\xr)\le0]\right] 
    - \E_{(\xr,\yr)\sim\Sr}\left[\Pr_{\gr\sim\Qc_f}[\yr\gr(\xr)>\theta/2 \,\wedge\,\yr f(\xr)\le\theta]\right]. \label{eq:nearRade}
\end{align}
Notice that~\eqref{eq:nearRade} is a difference in expectation over the data distribution $\Dc$ and the training sample $\Sr \sim \Dc^n$. The familiar reader may recall that Rademacher complexity provides a powerful tool for bounding such differences when considering the expectation of the same function $\phi(\xr,\yr)$ over $(\xr,\yr) \sim \Dc$ and $(\xr,\yr) \sim \Sr$. Unfortunately, the two functions inside the expectations in~\eqref{eq:nearRade} are not identical. A critical observation here is that the distribution of the margin of $\gr\sim\Qc_f$ on a point $(x,y)$ depends only on the margin of $f$ on the same point. In particular, smaller margins for $f$ reduces the probability that the margin of $\gr$ is large and thus
\begin{align*}
    \Pr_{\gr\sim\Qc_f}[\yr\gr(\xr)>\theta/2 \,\wedge\,\yr f(\xr)\le\theta]\ge \Pr_{\gr\sim\Qc_f}[\yr\gr(\xr)>\theta/2 \,\wedge\,\yr f(\xr)\le 0].
\end{align*}
Again exploiting that the distribution of $y\gr(x)$ with $\gr \sim \Qc_f$ is determined solely from $yf(x)$, we can now define the function $\phi : \R \to \R$ so that $\phi(yf(x)) = \indi{y f(x)\le0} \Pr_{\gr\sim\Qc_f}[y \gr(x)>\theta/2]$. We then have that
\begin{align*}
    \eqref{eq:nearRade} &\leq \E_{(\xr,\yr) \sim \Dc}[\phi(\yr f(\xr))]-\E_{(\xr,\yr) \sim \Sr}[\phi(\yr f(\xr))].
\end{align*}
We are now in a position where we can use Rademacher complexity. In particular, the seminal Contraction inequality of~\cite{LeTal91} may be applied, allowing us to remove $\phi$ at the cost of multiplying with its Lipschitz constant $L$. This of course requires $\phi$ to be a Lipschitz continuous function with a bounded Lipschitz constant $L$. Unfortunately the $\phi$ defined above is discontinuous due to the multiplication with the indicator function. After some careful massaging of $\phi$, we end up being able to bound its Lipschitz constant by $L\le c\theta^{-1}\exp(-N\theta^2/c)$. Using the Contraction inequality this gives
\begin{align*}
    \eqref{eq:nearRade} &\leq c\theta^{-1}\exp(-N\theta^2/c) \cdot \sup_{f \in \Cc(\Hc)}[\E_{(\xr,\yr) \sim \Dc}[\yr f(\xr)]-\E_{(\xr,\yr) \sim \Sr}[\yr f(\xr)]].
\end{align*}
Using that every $f \in \Cc(\Hc)$ is a convex combination of hypotheses $h \in \Hc$, we get that the $\sup$ is bounded by a $\sup_{h \in \Hc}$, i.e.
\begin{align*}
    \eqref{eq:nearRade} &\leq c\theta^{-1}\exp(-N\theta^2/c) \cdot \sup_{h \in \Hc}[\E_{(\xr,\yr) \sim \Dc}[\yr h(\xr)]-\E_{(\xr,\yr) \sim \Sr}[\yr h(\xr)]].
\end{align*}
Using the finite class lemma of \cite{M00} finally gives
\begin{align*}
    \eqref{eq:nearRade} &\leq c\theta^{-1}\exp(-N\theta^2/c) \cdot \sqrt{\frac{\ln(|\Hc|)}{n}} = c\exp(-N\theta^2/c) \cdot \sqrt{\frac{\ln(|\Hc|)}{\theta^2 n}}.
\end{align*}
If we compare this to the proof of~\cite{SFBL98}, we have thus reduced the contribution of the two last lines of~\eqref{eq:ourfull} by a factor $c \sqrt{\ln(|\Hc|)/n}$. Combining it all, we have shown
\begin{align*}
    \Lc_\Dc(f) - \Lc_\Sr^\theta(f)
    \le c \left(\sqrt{\frac{\Lc_\Sr^\theta(f) N \ln(|\Hc|)}{n}} + \exp(-N\theta^2/c) \cdot \sqrt{\frac{\ln(|\Hc|)}{\theta^2 n}} \right).
\end{align*}
Picking $N \approx \theta^{-2} \ln(e/\Lc_{\Sr}^\theta(f))$ balances the terms and finally completes the proof.

\section{Main Proof}\label{sec:proof}
We now start on the proof of Theorem \ref{thm:FullStatement}, before doing the main part of work, we do a series of reductions which allows us to focus on establishing the theorem for margins $\theta$ and empirical margin-errors $\Lc_\Sr^\theta(f)$ in small ranges at a time. We start by establishing these reductions in Subsection \ref{subsec:Prelim}, and continue with the main body of the proof in Subsection \ref{subsec:proof}.
\subsection{Preliminaries}\label{subsec:Prelim}
\subsubsection*{Statement Reduction}
Firstly, to ease later analysis, we want to ensure margins in $[-c_\theta,c_\theta]$ for a constant $c_\theta<1$. which we do by scaling all voting classifiers by $c_\theta$ and adding the all-one and all-negative-one with appropriate constants afterwards, the argument can be found in full detail in Appendix \ref{subsec:reduxClaim}.

From now on, let $\Dc$ be an arbitrary distribution over $\Xc\times\set{-1,+1}$, we want to prove that there is a constant $c>0$, such that with probability at least $1-\delta$ over the training sample $\Sr\sim\Dc^n$, it holds for all margins $\theta\in\left(\sqrt{e \ln(|\Hc|)/n},1\right]$ and all $f\in\Cc(\Hc)$ that:

\begin{align}\label{eq:reduced_main_theorem}
    \Lc_\Dc(f)\le\Lc_\Sr^\theta(f) + 
    c\left(\sqrt{\Lc_\Sr^\theta(f)\left(\frac{\ln(e/\Lc_\Sr^\theta(f))\ln(|\Hc|)}{\theta^2 n}+\frac{\ln(e/\delta)}{n}\right)}+\frac{\ln(\theta^2 n/\ln(|\Hc|)\ln(|\Hc|)}{\theta^2 n}+\frac{\ln(e/\delta)}{n}\right)
\end{align}
Theorem \ref{thm:FullStatement} follows as a corollary.

\subsubsection*{Splitting into Smaller Tasks}
We break the task of proving \eqref{eq:reduced_main_theorem} into smaller tasks, which then gives the full statement after union bounding over the sub-tasks. 

Firstly we partition the sets of possible margins and margin losses into collections of manageable subsets, we define:
\begin{align*}
    \Theta_i = \left(e2^{i-1}\sqrt{\frac{\ln|\Hc|}{n}},e2^{i}\sqrt{\frac{\ln|\Hc|}{n}}\right],\qquad
    L_0 = [0,n^{-1}]
    \qquad\text{and}\qquad
    L_j = \left(2^{i-1}n^{-1},2^{i}n^{-1}\right]
\end{align*}
Then the collections $\set{\Theta_i}_{i=1}^{\log_2((e/c_\theta)\sqrt{n/\ln|\Hc|})},\set{L_j}_{j=0}^{\log_2(n)}$ partition the sets of possible margins $\theta\in\left(\sqrt{e \ln(|\Hc|)/n},1\right]$ and true margin losses  $\Lc_\Dc^\theta(f) \in [0,1]$. 

For each pair of parameter sets $(\Theta_i, L_j)$, where $\Theta_i = \left(\theta_i, \theta_{i+1}\right]$, we define a corresponding set of voting classifiers:
\begin{align*}
    \Cc_\Hc(\Theta_i, L_j) = \bset{f\in\Cc(\Hc)\bmid \Lc_\Dc^{(3/4)\theta_i}(f)\in L_j}
\end{align*}
We now want to prove a version of \eqref{eq:reduced_main_theorem}, but tailored to a pair of parameter sets $(\Theta_i, L_j)$.
\begin{lemma}\label{lem:PiecewiseFullStatement}
    There exists a constant $c>0$ such that the following holds. For any $0<\delta<1$, and $(\Theta_i,L_j) = (\left(\theta_i,\theta_{i+1}\right],\left(l_j,l_{j+1}\right])$ it holds with probability at least $1-\delta$ over a the sample $\Sr\sim \Dc^n$, that:
    \begin{align}\label{eq:Piecewisemain_theorem}
        \sup_{\substack{f\in\Cc_\Hc(\Theta_i, L_j)\\\theta\in \Theta_i}}|\Lc_\Dc(f)- \Lc_\Sr^\theta(f)|
        \le c\left(\sqrt{l_{j+1}\left(\frac{\ln(e/l_{j+1})\ln(|\Hc|)}{\theta_{i+1}^2 n}+\frac{\ln(e/\delta)}{n}\right)}+\frac{\ln(e/l_{j+1})\ln(|\Hc|)}{\theta_{i+1}^2 n}+\frac{\ln(e/\delta)}{n}\right)
    \end{align}
\end{lemma}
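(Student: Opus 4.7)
The plan is to adapt the random-discretization framework of the proof overview to the bucket $(\Theta_i, L_j)$, choosing a bucket-specific discretization level $N = N_{ij} \asymp \theta_i^{-2}\ln(e/l_{j+1})$. For each $f \in \Cc_\Hc(\Theta_i, L_j)$ and each $\theta \in \Theta_i$, I would take the expectation over $\gr \sim \Qc_f$ of the identity \eqref{eq:fulldetail} to arrive at the three-term decomposition \eqref{eq:ourfull}. The first (half-margin) term is controlled via Bernstein's inequality with a union bound over $\Cc_N$, and the two symmetric pairs via Rademacher complexity plus the Ledoux--Talagrand contraction inequality, as sketched around \eqref{eq:nearRade}. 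The value of $N_{ij}$ is then chosen to balance these two kinds of contributions.

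For the first term, I would fix a half-margin threshold $\theta_g$ for $\gr$ that is a fixed fraction of $\theta_i$, e.g.\ $\theta_g = \theta_i/2$, so that $\theta_g < \theta$ for every $\theta \in \Theta_i$ while $(3/4)\theta_i - \theta_g = \Theta(\theta_i)$. For each fixed $g \in \Cc_N$, Bernstein's inequality gives
\begin{equation*}
\Lc_\Dc^{\theta_g}(g) - \Lc_\Sr^{\theta_g}(g) \le c\Bigl(\sqrt{\Lc_\Dc^{\theta_g}(g)\ln(1/\delta')/n} + \ln(1/\delta')/n\Bigr)
\end{equation*}
with probability $1-\delta'$, and a union bound over the $|\Cc_N| \le |\Hc|^N$ choices of $g$ absorbs $N_{ij}\ln|\Hc|$ into the log factor. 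Taking expectation over $\gr \sim \Qc_f$ and using Jensen, the main quantity to control is $\E_\gr[\Lc_\Dc^{\theta_g}(\gr)]$: swapping expectation with probability and applying Hoeffding to $y\gr(x)$ around $yf(x)$ shows $\E_\gr[\Lc_\Dc^{\theta_g}(\gr)] \le \Lc_\Dc^{(3/4)\theta_i}(f) + 2\exp(-cN_{ij}\theta_i^2) \le 2l_{j+1}$ by the choice of $N_{ij}$. Using $\theta_i \asymp \theta_{i+1}$, this matches the dominant contribution $\sqrt{l_{j+1}\ln(e/l_{j+1})\ln|\Hc|/(\theta_{i+1}^2 n)}$ in \eqref{eq:Piecewisemain_theorem}.

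For the two symmetric lines of \eqref{eq:ourfull}, the critical observation is that, conditional on $x$, the law of $y\gr(x)$ under $\gr \sim \Qc_f$ depends on $(x,y)$ only through $yf(x)$. Using the inclusions $\{yf(x)\le 0\} \subseteq \{yf(x)\le\theta\}$ and $\{yf(x)>\theta\} \subseteq \{yf(x)>0\}$, each line can be upper bounded by a quantity of the form $\E_\Dc[\psi(yf(x))]-\E_\Sr[\psi(yf(x))]$ for a one-dimensional function $\psi$. Standard Rademacher symmetrization and the contraction inequality then bound this by $cL\cdot\mathcal{R}_n(\Cc(\Hc))$, where $L$ is the Lipschitz constant of $\psi$; since $\Cc(\Hc)$ is the convex hull of $\Hc$, $\mathcal{R}_n(\Cc(\Hc)) = \mathcal{R}_n(\Hc) \le c\sqrt{\ln|\Hc|/n}$ by Massart's finite-class lemma. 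Establishing $L \le c\theta_i^{-1}\exp(-cN_{ij}\theta_i^2)$ then yields a contribution of $cl_{j+1}\sqrt{\ln|\Hc|/(\theta_i^2 n)}$ after substituting the choice of $N_{ij}$, which is dominated by the first-term contribution since $l_{j+1} \le \ln(e/l_{j+1})$ on $[0,1]$.

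The main obstacle will be the construction and Lipschitz analysis of $\psi$. The natural $\psi$ is the product of a jump discontinuity $\indi{\cdot\le 0}$ (or $\indi{\cdot>\theta}$ for the symmetric line) and a tail probability $\Pr_\gr[y\gr(x)>\theta_g\mid yf(x)=\cdot]$ (or the symmetric tail), so it is discontinuous at the jump location. One must absorb the jump into the exponential decay of the tail by verifying that the tail probability is already of order $\exp(-cN_{ij}\theta_i^2)$ at the jump, and that a smoothing of the indicator over an interval of width $\Theta(\theta_i)$ produces a dominating function with slope at most $c\theta_i^{-1}\exp(-cN_{ij}\theta_i^2)$. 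This requires explicit binomial-tail estimates on $\gr(x)$ near the threshold, adapted from the framework of \cite{LS25} to the voting-classifier setting. A minor additional step is to convert the expected Rademacher bound into a high-probability bound via McDiarmid, contributing a lower-order $\sqrt{\ln(e/\delta)/n}$ term already present in \eqref{eq:Piecewisemain_theorem}.
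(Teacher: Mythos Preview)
Your proposal is correct and matches the paper's proof essentially step for step: the same three-term decomposition after taking $\E_{\gr\sim\Qc_f}$, Bernstein plus a union bound over $\Cc_N$ for the half-margin term (the paper's Lemma~\ref{lem:halfMargin}), Rademacher symmetrization with Ledoux--Talagrand contraction and Massart's lemma for the two symmetric lines (Lemma~\ref{lem:rademacherBound}), and the balancing choice $N\asymp\theta_{i+1}^{-2}\ln(e/l_{j+1})$. The only point to watch is the Lipschitz constant: the paper obtains $L\le c(\theta_{i+1}N+\theta_{i+1}^{-1})\exp(-N\theta_{i+1}^2/c)$ (the $\theta N$ term comes from differentiating the binomial tail itself, not from smoothing the indicator), but since the polynomial prefactor can be absorbed into the exponential at the cost of a worse constant, your stated bound $L\le c\theta_i^{-1}\exp(-cN_{ij}\theta_i^2)$ is equivalent and the rest of your argument goes through unchanged.
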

\noindent
For this to be enough to prove \eqref{eq:reduced_main_theorem}, we need to relate $l_{j+1}$ to $\Lc_\Sr^\theta(f)$, which we do using the following lemma
\begin{lemma}\label{lem:withinConst}
    There is a constant $c>0$, such that for any $0<\delta<1$ and any $\Theta_i=\left(\theta_i, \theta_{i+1}\right]$, it holds with probability at least $1-\delta$ over the sample $\Sr\sim\Dc^n$
    \begin{align}\label{eq:withinConst}
        \forall f\in\Cc(\Hc):\qquad \frac{\Lc_\Dc^{(3/4)\theta_i}(f)}{2}- \Lc_\Sr^{\theta_i}(f)\le c\left(\frac{\ln(\theta_{i+1}^2n/\ln(|\Hc|))\ln(|\Hc|)}{\theta_{i+1}^2n} + \frac{\ln(e/\delta)}{n}\right)
    \end{align}
    When $N\ge 64\cdot\theta_{i+1}^{-2}$
\end{lemma}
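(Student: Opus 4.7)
The plan is to adapt the random-discretization framework from Section~\ref{sec:proofOverview} with two key twists: use Bernstein (rather than Hoeffding) to obtain a small-loss generalization bound for every $g\in\Cc_N$, and apply AM-GM to convert the resulting $\sqrt{\Lc_\Dc}$ term into the factor $\tfrac{1}{2}\Lc_\Dc$ appearing on the left-hand side of the lemma. Concretely, I will fix the discretization parameter $N=\lceil C\,\theta_{i+1}^{-2}\ln(\theta_{i+1}^2 n/\ln|\Hc|)\rceil$ for a sufficiently large constant $C$ (so that the precondition $N\ge 64\theta_{i+1}^{-2}$ is met), set an intermediate margin $\theta_m=\tfrac{7}{8}\theta_i$ (leaving a cushion of $\theta_i/8$ between it and the two relevant thresholds $(3/4)\theta_i$ and $\theta_i$), and work throughout with the random discretization $\gr\sim\Qc_f$ from the overview.

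The first step is to translate quantities involving $f$ into expectations over $\gr\sim\Qc_f$. By Hoeffding applied to $\gr(x)=(1/N)\sum_i h_i(x)$ (an average of i.i.d.\ $\{-1,+1\}$-valued variables with mean $f(x)$), for every $(x,y)$,
\begin{align*}
\Pr_{\gr\sim\Qc_f}[\,y\gr(x)\le yf(x)-\theta_i/8\,]\le \exp(-N\theta_i^2/128)=:\epsilon.
\end{align*}
Splitting the data distributions by whether $yf(x)>\theta_i$ or $yf(x)\le(3/4)\theta_i$ immediately yields the two deterministic relations
\begin{align*}
\E_{\gr}[\Lc_\Sr^{\theta_m}(\gr)]\le \Lc_\Sr^{\theta_i}(f)+\epsilon,\qquad \E_{\gr}[\Lc_\Dc^{\theta_m}(\gr)]\ge (1-\epsilon)\,\Lc_\Dc^{(3/4)\theta_i}(f).
\end{align*}

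Next, I apply Bernstein's inequality to the indicator $\indi{yg(x)\le\theta_m}$ for each fixed $g\in\Cc_N$ and union-bound over the at most $|\Hc|^N$ elements of $\Cc_N$. This gives, with probability at least $1-\delta$ over $\Sr\sim\Dc^n$, simultaneously for every $g\in\Cc_N$,
\begin{align*}
\Lc_\Dc^{\theta_m}(g)-\Lc_\Sr^{\theta_m}(g)\le c_1\sqrt{\Lc_\Dc^{\theta_m}(g)\cdot\tfrac{N\ln|\Hc|+\ln(e/\delta)}{n}}+c_1\,\tfrac{N\ln|\Hc|+\ln(e/\delta)}{n}.
\end{align*}
Applying $\sqrt{ab}\le\tfrac{1}{2}a+\tfrac{1}{2}b$ to the square-root term and rearranging produces the clean form $\tfrac{1}{2}\Lc_\Dc^{\theta_m}(g)\le \Lc_\Sr^{\theta_m}(g)+c_2(N\ln|\Hc|+\ln(e/\delta))/n$ for all $g\in\Cc_N$ simultaneously. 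Averaging both sides over $\gr\sim\Qc_f$, substituting the two expectation bounds from the previous paragraph, and noting $\epsilon\Lc_\Dc^{(3/4)\theta_i}(f)\le\epsilon$ yields
\begin{align*}
\tfrac{1}{2}\Lc_\Dc^{(3/4)\theta_i}(f)-\Lc_\Sr^{\theta_i}(f)\le c_3\Bigl(\epsilon+\tfrac{N\ln|\Hc|+\ln(e/\delta)}{n}\Bigr).
\end{align*}
With the chosen $N$, both $\epsilon$ and $N\ln|\Hc|/n$ are $O\bigl(\ln(\theta_{i+1}^2 n/\ln|\Hc|)\ln|\Hc|/(\theta_{i+1}^2 n)\bigr)$, matching the right-hand side of the lemma.

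\textbf{Main obstacle.} The delicate point is doing the AM-GM step \emph{before} averaging over $\gr$: Jensen's inequality flows the wrong way through $\sqrt{\cdot}$, so averaging first would leave an undesirable $\sqrt{\E_\gr[\Lc_\Dc^{\theta_m}(\gr)]\cdot t/n}$ term and hence a residual $\sqrt{\Lc_\Dc^{(3/4)\theta_i}(f)}$ factor in the final bound, which is exactly the form the lemma is trying to avoid. Performing AM-GM before averaging is what manufactures the factor $\tfrac{1}{2}$ on the left-hand side demanded by the lemma. A secondary bookkeeping task is to tune $N$ so that the Hoeffding discretization error $\epsilon$ and the union-bound complexity term $N\ln|\Hc|/n$ balance at the $\ln(\theta_{i+1}^2 n/\ln|\Hc|)\ln|\Hc|/(\theta_{i+1}^2 n)$ scale of the right-hand side, simultaneously respecting the precondition $N\ge 64\theta_{i+1}^{-2}$.
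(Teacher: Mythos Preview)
Your proposal is correct and follows essentially the same route as the paper's proof: discretize via $\Cc_N$, use the intermediate margin $\theta_m=\tfrac{7}{8}\theta_i$, apply Bernstein plus a union bound over $|\Hc|^N$ discretized classifiers, absorb the $\sqrt{\Lc_\Dc^{\theta_m}(g)}$ term into $\tfrac{1}{2}\Lc_\Dc^{\theta_m}(g)$, bound the two Hoeffding discretization errors, and finally set $N\asymp\theta_{i+1}^{-2}\ln(\theta_{i+1}^2 n/\ln|\Hc|)$. The only cosmetic difference is that the paper obtains the $\tfrac{1}{2}\Lc_\Dc$ on the left by choosing the Bernstein deviation parameter as $t=n(\tfrac{1}{2}\Lc_\Dc^{\theta_m}(g)+Z/n)$ directly, whereas you reach the same conclusion via AM--GM after the standard Bernstein form; these are equivalent manipulations.
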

\noindent
To use this lemma, together with Lemma~\ref{lem:PiecewiseFullStatement}, we need them to hold simultaneously for all pairs $(\Theta_i, L_j)$, after which we can use them together and obtain \eqref{eq:reduced_main_theorem}.
\begin{claim}\label{clm:1+2comb}
    For any $0<\delta<1$, the following holds:
    \begin{enumerate}
        \item If the sample $\Sr\sim\Dc^n$ satisfies \eqref{eq:Piecewisemain_theorem} and \eqref{eq:withinConst} simultaneously for all $(\Theta_i, L_j)$ and $\Theta_i$, with slightly different constants, then for that sample, \eqref{eq:reduced_main_theorem} holds for all margins $\theta\in\left(\sqrt{e \ln(|\Hc|)/n},1\right]$ and all $f\in\Cc(\Hc)$.
        \item With probability at least $1-\delta$ over the sample $\Sr\sim\Dc^n$,  the above event happens 
    \end{enumerate}
\end{claim}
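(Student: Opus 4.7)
My plan is to prove the two items in sequence. For item 1, I would take an arbitrary pair $(\theta, f)$ with $\theta \in (\sqrt{e\ln|\Hc|/n}, 1]$ and $f \in \Cc(\Hc)$, and locate the unique indices $(i, j)$ such that $\theta \in \Theta_i$ and $\Lc_\Dc^{(3/4)\theta_i}(f) \in L_j$. The hypothesis lets me simultaneously invoke Lemma \ref{lem:PiecewiseFullStatement} and Lemma \ref{lem:withinConst} for this bucket. Writing the bound from the former as $c\bigl(\sqrt{l_{j+1}(A+B)} + A + B\bigr)$ with $A = \ln(e/l_{j+1})\ln|\Hc|/(\theta_{i+1}^2 n)$ and $B = \ln(e/\delta)/n$, and noting that $\theta \le \theta_{i+1} \le 2\theta$ differ only by a constant factor, the rest of item 1 reduces to a deterministic calculation converting these bounds into the form of \eqref{eq:reduced_main_theorem}.

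The first key step is to upper bound $l_{j+1}$. Since the intervals $L_j$ are doubling, $l_{j+1} \le 2\Lc_\Dc^{(3/4)\theta_i}(f)$ for $j \ge 1$ (the $j=0$ case gives $l_{j+1} = 1/n$, absorbed into $B$), and Lemma \ref{lem:withinConst} combined with the monotonicity $\Lc_\Sr^{\theta_i}(f) \le \Lc_\Sr^\theta(f)$ (which holds since $\theta_i < \theta$) yields $l_{j+1} \le 4\Lc_\Sr^\theta(f) + cE$, where $E$ equals the non-square-root part of \eqref{eq:reduced_main_theorem} up to constants. Exploiting the monotonicity of $x \mapsto \sqrt{x\ln(e/x)}$ on $(0,1]$ and the subadditivity $\sqrt{u+v} \le \sqrt u + \sqrt v$, I split $\sqrt{l_{j+1}(A+B)}$ into a main term $\sqrt{\Lc_\Sr^\theta(f)(A+B)}$ and a slack term $\sqrt{E(A+B)}$; the latter is absorbed into $E + A + B$ by AM-GM. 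The main term already matches the square-root part of \eqref{eq:reduced_main_theorem} after checking that $\ln(e/l_{j+1}) \le O(\ln(e/\Lc_\Sr^\theta(f)))$ in the regime $l_{j+1} = \Omega(\Lc_\Sr^\theta(f))$ (and using monotonicity of $x\ln(e/x)$ in the opposite regime), while $A$ itself is dominated by the non-square-root term of \eqref{eq:reduced_main_theorem} after using $l_{j+1} \ge 1/n$. I expect the calibration of these logarithmic factors, and in particular relating $\ln(e/l_{j+1})$ to $\ln(e/\Lc_\Sr^\theta(f))$ and $\ln(\theta^2 n/\ln|\Hc|)$ in each regime, to be the main technical obstacle.

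For item 2, a union bound suffices. The number of intervals $\Theta_i$ is $O(\log(n/\ln|\Hc|))$ and the number of intervals $L_j$ is $O(\log n)$, so there are at most $O(\log^2 n)$ pairs $(\Theta_i, L_j)$. Invoking Lemma \ref{lem:PiecewiseFullStatement} with failure probability $\delta/(c_1 \log^2 n)$ for each pair, and Lemma \ref{lem:withinConst} with $\delta/(c_1 \log n)$ for each $\Theta_i$, the union bound gives total failure probability at most $\delta$. The rescaling of $\delta$ inflates the $\ln(e/\delta)/n$ term only by an additive $O(\log\log n)/n$, which is absorbed into the universal constant $c$ (the "slightly different constants" in the statement of item~1).
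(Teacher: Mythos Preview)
Your overall plan for both parts is reasonable, and your treatment of Part~2 via a plain union bound over the $O(\log^2 n)$ buckets does work (the $O(\ln\ln n)/n$ overhead is dominated by the additive term $\ln(\theta^2 n/\ln|\Hc|)\ln|\Hc|/(\theta^2 n)$ in the final bound). The paper instead chooses bucket-dependent failure probabilities $\delta_{i,j}\propto(\delta/e)^3\exp(-\ln(e/l_{j+1})\ln|\Hc|/\theta_{i+1}^2)$ so that the extra $\ln(e/\delta_{i,j})/n$ is absorbed \emph{exactly} into the terms already present, but your simpler route is fine.

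There is, however, a real gap in your Part~1 argument at the point where you bound the additive term $A=\ln(e/l_{j+1})\ln|\Hc|/(\theta_{i+1}^2 n)$. You propose to control $A$ using only $l_{j+1}\ge 1/n$, which gives $\ln(e/l_{j+1})\le \ln(en)$. But the target additive term in \eqref{eq:reduced_main_theorem} is $\ln(\theta^2 n/\ln|\Hc|)\ln|\Hc|/(\theta^2 n)$, and when $\theta$ is close to the lower edge $\sqrt{e\ln|\Hc|/n}$ we have $\ln(\theta^2 n/\ln|\Hc|)=O(1)$ while $\ln(en)$ is unbounded. So your bound on $A$ overshoots the target by a factor $\approx\ln n$ in this regime, and the step ``$A$ is dominated by the non-square-root term of \eqref{eq:reduced_main_theorem}'' fails.

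The paper fixes this with a case split on whether $\Lc_\Dc^{(3/4)\theta_i}(f)$ exceeds (a constant times) the target additive term. In the small case one bypasses Lemma~\ref{lem:PiecewiseFullStatement} entirely and bounds $\Lc_\Dc(f)\le \Lc_\Dc^{(3/4)\theta_i}(f)$ directly by the additive term. In the large case one gets the much stronger lower bound $l_{j+1}\ge \Lc_\Dc^{(3/4)\theta_i}(f)\ge \ln|\Hc|/(\theta_{i+1}^2 n)$, which gives $\ln(e/l_{j+1})\le \ln(e\theta_{i+1}^2 n/\ln|\Hc|)$ and hence the correct bound on $A$. In the same large case the paper also exploits that Lemma~\ref{lem:PiecewiseFullStatement} is two-sided: applying it in the direction $\Lc_\Sr^\theta(f)\le \Lc_\Dc(f)+c(\cdots)$ and using the case assumption yields $\Lc_\Sr^\theta(f)\le 3c\,l_{j+1}$, i.e.\ a \emph{lower} bound $l_{j+1}\ge \Lc_\Sr^\theta(f)/(3c)$, which directly gives $\ln(e/l_{j+1})\le O(\ln(e/\Lc_\Sr^\theta(f)))$ for the square-root term without the regime analysis you sketch. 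Your monotonicity-of-$x\ln(e/x)$ idea can be made to work for the square-root term, but it does not rescue the additive $A$ term; for that you need the case split (or some equivalent mechanism that produces $l_{j+1}\ge c\ln|\Hc|/(\theta_{i+1}^2 n)$ whenever $A$ is actually being used).
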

The proof of Claim \ref{clm:1+2comb} can be found in Appendix \ref{subsec:reduxClaim}. To prove the first part, Lemma \ref{lem:withinConst} is used to relate $l_{j+1}$ and $\Lc_\Sr^\theta(f)$ in Lemma~\ref{lem:PiecewiseFullStatement}. To prove the second part, a simple union bound is enough. 

The two parts of Claim \ref{clm:1+2comb} together finishes the proof. It remains to prove Lemmas \ref{lem:withinConst} and \ref{lem:PiecewiseFullStatement}. 
The proof of Lemma \ref{lem:withinConst} can be found in Appendix \ref{sec:BuckedRedux}, as it is standard. We proceed with the final part of the proof, the proof of Lemma~\ref{lem:PiecewiseFullStatement}. 

\subsection{Proof}\label{subsec:proof}
Now we focus on the proof of Lemma \ref{lem:PiecewiseFullStatement}. 

So let $0<\delta<1$ and fix a pair $(\Theta_i, L_j)$. We follow the process described in Section \ref{sec:proofOverview}, i.e we want to discretize a voting classifier $f\in\Cc(\Hc)$ to a unweighted average $g$ in $\Cc_N= \set{\frac{1}{N}\sum_{i=1}^N h_i\mid h_i\in\Hc}$. 

For $f\in\Cc(\Hc)$, given by $f=\sum_{h\in\Hc}a_h h$, define distribution $\Dc_f$ over $\Hc$ by setting the probability of sampling $h$ to $a_h$. Let $\Qc_f$ be the distribution of a $\gr \in \Cc_N$ obtained by sampling $N$ i.i.d.\ $\hr_1,\dots, \hr_N\in\Hc$ using $\Dc_f$ and taking their average $\gr=\frac{1}{N}\sum_{i=1}^N \hr_i$.

Define $\Lc_\Dc^\theta(f) = \Pr_{(\xr,\yr)\sim\Dc}[\yr f(\xr)\leq \theta]$ to be the probability that the margin of a point $(\xr,\yr)\sim \Dc$ is less than $\theta$. And write $\Lc_\Dc(f)$ for $\Lc_\Dc^0(f)$.

We want to relate the true loss $\Lc_\Dc(f)$ of $f$, to the margin loss $\Lc_\Dc^{\theta/2}(\gr)$ of a $\gr \sim \Qc_f$. Firstly for every $\theta\in\Theta_i$, $g \in \Cc_N$ and training sample $S$, it holds that:
$$\Lc_\Dc(f) 
= \Lc_\Dc^{\theta_{i}/2}(g)
+ \Pr_{(\xr,\yr)\sim\Dc}[\yr g(\xr)>\theta_{i}/2\wedge \yr f(\xr) \le 0] 
- \Pr_{(\xr,\yr)\sim\Dc}[\yr g(\xr)\le\theta_{i}/2\wedge \yr f(\xr) > 0],$$
and 
$$\Lc_S^\theta(f) 
= \Lc_S^{\theta_{i}/2}(g)
+ \Pr_{(\xr,\yr)\sim S}[\yr g(\xr)>\theta_{i}/2\wedge \yr f(\xr) \le \theta] 
- \Pr_{(\xr,\yr)\sim S}[\yr g(\xr)\le\theta_{i}/2\wedge \yr f(\xr) > \theta].$$
This allows us to split the difference $\Lc_\Dc(f) - \Lc^\theta_S(f)$ into the sum of 3 differences:
\begin{align*}
    \Lc_\Dc(f) - \Lc_S^\theta(f)
    & = \Lc_\Dc^{\theta_{i}/2}(g) - \Lc_S^{\theta_{i}/2}(g)\\
      &+ \Pr_{(\xr,\yr)\sim\Dc}[\yr g(\xr)>\theta_{i}/2\wedge \yr f(\xr) \le 0] 
      - \Pr_{(\xr,\yr)\sim S}[\yr g(\xr)>\theta_{i}/2\wedge \yr f(\xr) \le \theta]\\ 
      &+ \Pr_{(\xr,\yr)\sim S}[\yr g(\xr)\le\theta_{i}/2\wedge \yr f(\xr) > \theta]
      - \Pr_{(\xr,\yr)\sim\Dc}[\yr g(\xr)\le\theta_{i}/2\wedge \yr f(\xr) > 0]
\end{align*}
Taking expectation over $\gr\sim\Qc_f$ and then supremum over $f\in\Cc_\Hc(\Theta_i, L_j)$, gives:
\begin{align}
    \sup_{f\in\Cc_\Hc(\Theta_i, L_j)}\big(\Lc_\Dc&(f) - \Lc_S^\theta(f)\big)
     = \sup_{f\in\Cc_\Hc(\Theta_i, L_j)}\bigg(\E_{\gr\sim\Qc_f}\left[\Lc_\Dc^{\theta_{i}/2}(\gr) - \Lc_\Sr^{\theta_{i}/2}(\gr)\right]\nonumber\\
    & + \E_{\gr\sim\Qc_f}\left[\Pr_{(\xr,\yr)\sim\Dc}[\yr\gr(\xr)>\theta_{i}/2\wedge \yr f(\xr) \le 0] 
    - \Pr_{(\xr,\yr)\sim S}[\yr\gr(\xr)>\theta_{i}/2\wedge \yr f(\xr) \le \theta]\right]\label{eq:becomesPhiDiff}\\
    & + \E_{\gr\sim\Qc_f}\left[\Pr_{(\xr,\yr)\sim S}[\yr\gr(\xr)\le\theta_{i}/2\wedge \yr f(\xr) > \theta]
    - \Pr_{(\xr,\yr)\sim\Dc}[\yr\gr(\xr)\le\theta_{i}/2\wedge \yr f(\xr) > 0]\right]\bigg).\label{eq:becomesRhoDiff}
\end{align}
To bound the last two terms, we define two continuous functions: $\phi, \rho: [-c_\theta,c_\theta]\to\R_{\ge 0}$

$$\phi(\lambda)= \begin{dcases}
    \Pr_{\gr\sim\Qc_f}[y\gr(x)>\theta_{i}/2\mid yf(x) = \lambda]                                &-c_\theta<\lambda\le0\\
    \frac{\theta_{i}-\lambda}{\theta_{i}}\Pr_{\gr\sim\Qc_f}[y\gr(x)>\theta_{i}/2\mid yf(x) = 0]  &\ \ \ 0<\lambda\le\theta_{i}\\
    0                                                                           &\ \ \ \theta_{i}<\lambda\le c_\theta
\end{dcases}$$
$$\rho(\lambda)= \begin{dcases}
    0                               &-c_\theta<\lambda\le0\\
    \frac{\lambda}{\theta_{i}}\Pr_{\gr\sim\Qc_f}[y\gr(x)\le\theta_{i}/2\mid yf(x) = \theta_{i}]  &\ \ \ 0<\lambda\le\theta_{i}\\
    \Pr_{\gr\sim\Qc_f}[y\gr(x)\le\theta_{i}/2\mid yf(x) = \lambda]                                                                            &\ \ \ \theta_{i}<\lambda\le c_\theta
\end{dcases}$$
Note that we abuse notation slightly in the above by writing conditioned on $yf(x)=\lambda$. Here we use that the distribution of $y\gr(x)$ for $\gr \sim \Qc_f$ is determined solely from the value $yf(x)$. Thus when we write conditioned on $yf(x)=\lambda$ we implicitly mean for any $(x,y)$ and $f$ with $yf(x)=\lambda$.

Now we can replace \eqref{eq:becomesPhiDiff} and \eqref{eq:becomesRhoDiff} with a difference in expectations of $\phi$ and $\rho$, because they upper bound the terms
\begin{lemma}\label{lem:diffReplacementPhiRho}
    \begin{align*}
        \E_{\gr\sim\Qc_f}\left[\Pr_{(\xr,\yr)\sim\Dc}[\yr\gr(\xr)>\theta_{i}/2\wedge \yr f(\xr) \le 0]\right]
        &\le \E_{(\xr,\yr)\sim\Dc}\left[\phi(\yr f(\xr))\right] \\
        \E_{\gr\sim\Qc_f}\left[\Pr_{(\xr,\yr)\sim S}[\yr\gr(\xr)>\theta_{i}/2\wedge \yr f(\xr) \le \theta]\right] 
        &\ge \E_{(\xr,\yr)\sim S}\left[\phi(\yr f(\xr))\right]\\
        \E_{\gr\sim\Qc_f}\left[\Pr_{(\xr,\yr)\sim S}[\yr\gr(\xr)\le\theta_{i}/2\wedge \yr f(\xr) > \theta]\right] 
        &\le \E_{(\xr,\yr)\sim S}\left[\rho(\yr f(\xr))\right]\\
        \E_{\gr\sim\Qc_f}\left[\Pr_{(\xr,\yr)\sim\Dc}[\yr\gr(\xr)\le\theta_{i}/2\wedge \yr f(\xr) > 0]\right] 
        &\ge \E_{(\xr,\yr)\sim \Dc}\left[\rho(\yr f(\xr))\right]
    \end{align*}
\end{lemma}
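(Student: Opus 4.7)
The plan is to reduce each of the four displayed inequalities to a pointwise comparison between the integrand and $\phi$ (respectively $\rho$), evaluated at $\lambda = \yr f(\xr)$. Since $\yr \gr(\xr) = \frac{1}{N}\sum_{k=1}^N \yr \hr_k(\xr)$ is an average of $N$ i.i.d.\ $\pm 1$ variables with common mean $\yr f(\xr)$, the distribution of $\yr \gr(\xr)$ under $\gr \sim \Qc_f$ really does depend only on $\lambda = \yr f(\xr)$, justifying the conditional-probability notation in the definitions of $\phi$ and $\rho$. By Fubini, each left-hand side equals an expectation over $(\xr,\yr)$ of $\indi{\,\yr f(\xr) \text{ in a specified range}\,} \cdot p_\pm(\yr f(\xr))$, where I write $p_+(\lambda)=\Pr_{\gr\sim\Qc_f}[\yr\gr(\xr)>\theta_i/2 \mid \yr f(\xr)=\lambda]$ and $p_-(\lambda)=1-p_+(\lambda)$. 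Hence it suffices to establish, pointwise in $\lambda\in[-c_\theta,c_\theta]$, four deterministic inequalities relating the truncated functions $\lambda\mapsto \indi{\cdots}p_\pm(\lambda)$ to $\phi(\lambda)$ and $\rho(\lambda)$.

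The only analytic ingredient I need is that $p_+$ is non-decreasing and $p_-$ is non-increasing in $\lambda$. I would prove this by a standard stochastic-dominance coupling: write each $\yr \hr_k(\xr)=2B_k-1$ with $B_k\sim\text{Bernoulli}((1+\lambda)/2)$, and couple the $B_k$'s across different values of $\lambda$ via a common $\text{Uniform}[0,1]$. A larger $\lambda$ can only push each $B_k$ upward, hence the average $\yr\gr(\xr)$, and therefore the upper tail $\{\yr\gr(\xr)>\theta_i/2\}$ is stochastically monotone in $\lambda$.

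Given monotonicity, each inequality reduces to a three-interval case check following the piecewise definitions. For the first inequality, the integrand $\indi{\lambda\le 0}\,p_+(\lambda)$ matches $\phi(\lambda)$ exactly on $\lambda\le 0$ and is $0\le\phi(\lambda)$ on $\lambda>0$; the third inequality is symmetric with $p_-$ and $\rho$. For the second, using $\theta\in\Theta_i$ so that $\theta>\theta_i$, the integrand $\indi{\lambda\le\theta}\,p_+(\lambda)$ equals $p_+(\lambda)=\phi(\lambda)$ on $\lambda\le0$; on $0<\lambda\le\theta_i$ it equals $p_+(\lambda)\ge p_+(0)\ge \tfrac{\theta_i-\lambda}{\theta_i}p_+(0)=\phi(\lambda)$ by monotonicity; on $\theta_i<\lambda\le\theta$ it equals $p_+(\lambda)\ge 0=\phi(\lambda)$; and on $\lambda>\theta$ both sides vanish. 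The fourth inequality is completely analogous, using $p_-(\lambda)\ge p_-(\theta_i)\ge \tfrac{\lambda}{\theta_i}p_-(\theta_i)=\rho(\lambda)$ on $(0,\theta_i]$.

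The plan has no real obstacle beyond keeping the interval bookkeeping straight; the one place that is genuinely needed is the inequality $\theta>\theta_i$, which comes from $\theta\in\Theta_i=(\theta_i,\theta_{i+1}]$ and ensures that on the "linear" middle range $(0,\theta_i]$ the indicator $\indi{\lambda\le\theta}$ is still $1$, so that monotonicity of $p_+$ can be compared against the downward-sloping linear piece of $\phi$. Everything else is bookkeeping.
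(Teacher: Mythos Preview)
Your proposal is correct and follows essentially the same approach as the paper: swap the order of expectation (Fubini), reduce to a pointwise comparison in $\lambda=\yr f(\xr)$, and verify each inequality by a case split along the pieces of $\phi$ (resp.\ $\rho$), invoking the monotonicity of $\lambda\mapsto\Pr_{\gr\sim\Qc_f}[y\gr(x)>\theta_i/2\mid yf(x)=\lambda]$, which the paper isolates as Lemma~\ref{lem:condMonotonicity} and proves by exactly the uniform-coupling argument you sketch. Your explicit use of $\theta>\theta_i$ (from $\theta\in\Theta_i$) to handle the range $(\theta_i,\theta]$ is a detail the paper's appendix proof glosses over by writing both thresholds as a single $\theta$, so your bookkeeping is in fact slightly more careful.
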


The statement follows from the definitions of $\phi$ and $\rho$ as well as monotonicity of the conditional properties in their definitions, the proof can be found in Section \ref{sec:deferred}: Deferred proofs.

The statement, together with linearity of expectation, sub-additivity of supremum and the triangle inequality, gives us an upper bound
\begin{align}
    \sup_{f\in\Cc_\Hc(\Theta_i, L_j)}\big(\Lc_\Dc(f) - \Lc_S^\theta(f)\big)
    & \le \sup_{f\in\Cc_\Hc(\Theta_i, L_j)}\bigg|\E_{\gr\sim\Qc_f}\left[\Lc_\Dc^{\theta_{i}/2}(\gr) - \Lc_S^{\theta_{i}/2}(\gr)\right]\bigg|   \label{eq:halfMargingLossDiff}\\
    & + \sup_{f\in\Cc_\Hc(\Theta_i, L_j)}\bigg|\E_{(\xr,\yr)\sim\Dc}\left[\phi(\yr f(\xr))\right] 
    - \E_{(\xr,\yr)\sim S}\left[\phi(\yr f(\xr))\right]\bigg|     \label{eq:PhiDiff}\\
    & + \sup_{f\in\Cc_\Hc(\Theta_i, L_j)}\bigg|\E_{(\xr,\yr)\sim\Dc}\left[\rho(\yr f(\xr))\right]
    - \E_{(\xr,\yr)\sim S}\left[\rho(\yr f(\xr))\right]\bigg|     \label{eq:RhoDiff}
\end{align}
The rest of the proof comes down to handle these differences, we will handle the first term and the last two separately. In Section \ref{sec:HalfMargin} we use standard techniques to bound \eqref{eq:halfMargingLossDiff}, as described by the lemma:
\begin{lemma}\label{lem:halfMargin}
    There exists $c>0$ such that with probability greater than $1-\delta$ over the sample $\Sr\sim\Dc^n$ it holds that:
    \begin{align*}
        \sup_{f\in\Cc_\Hc(\Theta_i, L_j)}\left(\left|\E_{\gr\sim\Qc_f}[\Lc_\Dc^{\theta_{i}/2}(\gr)-\Lc_\Sr^{\theta_{i}/2}(\gr)]\right|\right)\le c\left(\sqrt{\frac{(l_{j+1}+\exp\left(-N\theta_{i+1}^2/c\right))\ln(|\Hc|^N/\delta)}{n}}+\frac{\ln(|\Hc|^N/\delta)}{n}\right)
    \end{align*}
    when $N\ge 32\cdot\theta_{i+1}^{-2}$.
\end{lemma}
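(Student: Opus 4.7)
The plan is to apply Bernstein's inequality pointwise for each $g \in \Cc_N$, union bound over this finite set (whose size is at most $|\Hc|^N$), and then translate the resulting per-$g$ bound into an $f$-dependent bound by taking the expectation over $\gr \sim \Qc_f$. The variance factor in Bernstein naturally involves the true loss $\Lc_\Dc^{\theta_i/2}(g)$, whose expectation over $\gr \sim \Qc_f$ is in turn controlled by a Hoeffding bound that exploits how well the random discretization tracks $f$; this is where the structural assumption $f \in \Cc_\Hc(\Theta_i, L_j)$ is used to introduce the parameter $l_{j+1}$.

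First, Jensen's inequality lets me pull the absolute value inside the expectation, so it suffices to bound $\E_{\gr \sim \Qc_f}[|\Lc_\Dc^{\theta_i/2}(\gr) - \Lc_\Sr^{\theta_i/2}(\gr)|]$. For any fixed $g \in \Cc_N$, $\Lc_\Sr^{\theta_i/2}(g)$ is an empirical mean of $n$ i.i.d.\ Bernoulli indicators with mean $\Lc_\Dc^{\theta_i/2}(g)$, so Bernstein's inequality gives that with probability at least $1-\delta'$ over $\Sr$,
\begin{align*}
\left|\Lc_\Dc^{\theta_i/2}(g) - \Lc_\Sr^{\theta_i/2}(g)\right| \leq c_1\sqrt{\frac{\Lc_\Dc^{\theta_i/2}(g)\ln(1/\delta')}{n}} + c_1\frac{\ln(1/\delta')}{n}.
\end{align*}
Setting $\delta' = \delta/|\Cc_N|$ and union bounding over $\Cc_N$, since $|\Cc_N| \leq |\Hc|^N$, this holds simultaneously for every $g \in \Cc_N$ with $\ln(1/\delta')$ replaced by $\ln(|\Hc|^N/\delta)$. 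Fix $f \in \Cc_\Hc(\Theta_i, L_j)$, take the expectation over $\gr \sim \Qc_f$, and apply Jensen's inequality to the concave map $\sqrt{\cdot}$ to obtain
\begin{align*}
\E_\gr\left[\left|\Lc_\Dc^{\theta_i/2}(\gr) - \Lc_\Sr^{\theta_i/2}(\gr)\right|\right] \leq c_1\sqrt{\frac{\E_\gr[\Lc_\Dc^{\theta_i/2}(\gr)]\ln(|\Hc|^N/\delta)}{n}} + c_1\frac{\ln(|\Hc|^N/\delta)}{n}.
\end{align*}

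The remaining step---and the one that exposes the role of the discretization parameter $N$---is to control $\E_\gr[\Lc_\Dc^{\theta_i/2}(\gr)] = \Pr_{(\xr,\yr)\sim\Dc,\,\gr\sim\Qc_f}[\yr\gr(\xr)\leq\theta_i/2]$. I partition on whether $\yr f(\xr) \leq (3/4)\theta_i$: this event contributes at most $\Lc_\Dc^{(3/4)\theta_i}(f) \leq l_{j+1}$ by the definition of $\Cc_\Hc(\Theta_i, L_j)$. On its complement, $\yr\gr(\xr)$ is the average of $N$ i.i.d.\ $\{-1,+1\}$ variables with mean $\yr f(\xr) > (3/4)\theta_i$, so Hoeffding's inequality gives $\Pr_{\gr\sim\Qc_f}[\yr\gr(\xr) \leq \theta_i/2] \leq \exp(-N\theta_i^2/32) = \exp(-N\theta_{i+1}^2/128)$, using $\theta_i = \theta_{i+1}/2$. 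Combining yields $\E_\gr[\Lc_\Dc^{\theta_i/2}(\gr)] \leq l_{j+1} + \exp(-N\theta_{i+1}^2/c_2)$, and taking the supremum over $f \in \Cc_\Hc(\Theta_i, L_j)$ gives the claimed inequality. The main (mild) obstacle is precisely this translation: the per-$g$ Bernstein bound involves each individual $g$'s true loss, and routing it through Jensen and Hoeffding is what makes the final bound depend only on $f$ via the bucket parameter $l_{j+1}$ plus an exponentially small discretization slack. The hypothesis $N \geq 32\theta_{i+1}^{-2}$ ensures this slack is at most a constant, which keeps the square-root term well-behaved when combined with the other contributions to Theorem \ref{thm:FullStatement}.
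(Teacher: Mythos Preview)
Your proposal is correct and follows essentially the same approach as the paper: Jensen to pull the absolute value inside, a Bernstein/Chernoff bound per $g\in\Cc_N$ followed by a union bound over $|\Cc_N|\le|\Hc|^N$, Jensen again on the concave square root, and finally the split on $\yr f(\xr)\lessgtr(3/4)\theta_i$ with Hoeffding on the discretization to get $\E_{\gr}[\Lc_\Dc^{\theta_i/2}(\gr)]\le l_{j+1}+\exp(-N\theta_{i+1}^2/c)$. The only cosmetic difference is that the paper routes the Hoeffding step through its auxiliary Lemmas~\ref{lem:condMonotonicity} and~\ref{lem:marginProbIsBinom}, whereas you invoke Hoeffding directly on the average of $\{-1,+1\}$ variables; the content is the same.
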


In Section \ref{sec:rademacher} we use Rademacher complexity theory together with a bound on the Lipschitz constants of $\phi, \rho$ to bound \eqref{eq:PhiDiff} and \eqref{eq:RhoDiff}, as described by the lemma
\begin{lemma}\label{lem:rademacherBound}
There exists a constant $c>0$ such that when $N\ge 32\cdot\theta_{i+1}^{-2}$ it holds with probability at least $1-\delta$ over the sample $\Sr\sim\Dc^n$ that: 
\begin{align*}
    \max\set{\eqref{eq:PhiDiff},\eqref{eq:RhoDiff}} 
    \le c\exp\left(-\frac{\theta_{i+1}^2N}{c}\right)\left(\sqrt{\frac{(\theta_{i+1}^2N^2 + \theta_{i+1}^{-2})\ln(|\Hc|)}{n}}+\sqrt{\frac{\ln(e/\delta)}{n}}\right).
\end{align*}
\end{lemma}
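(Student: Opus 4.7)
The plan is to bound \eqref{eq:PhiDiff} and \eqref{eq:RhoDiff} as uniform deviations and reduce them to the Rademacher complexity of $\Hc$ via the Ledoux--Talagrand contraction principle. The two suprema are handled symmetrically, so I focus on the $\phi$ side.

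First, since $\phi$ takes values in $[0,1]$ and is continuous (the three pieces agree at $0$ and at $\theta_i$), standard symmetrization together with McDiarmid's bounded-differences inequality yields that with probability at least $1-\delta/2$ over $\Sr\sim\Dc^n$,
\begin{equation*}
\eqref{eq:PhiDiff} \;\le\; 2R_n(\phi\circ\mathcal F) + c\sqrt{\ln(e/\delta)/n},
\end{equation*}
where $\mathcal F=\{(x,y)\mapsto yf(x):f\in\Cc_\Hc(\Theta_i,L_j)\}$ and $R_n$ is the expected Rademacher complexity. If $L_\phi$ is the Lipschitz constant of $\phi$, the contraction inequality gives $R_n(\phi\circ\mathcal F)\le L_\phi R_n(\mathcal F)$. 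The set $\mathcal F$ sits inside the convex hull of $\{(x,y)\mapsto yh(x):h\in\Hc\}$; convex hulls preserve Rademacher complexity, and the label $y_i\in\{-1,+1\}$ can be absorbed into the Rademacher sign, so Massart's finite class lemma gives $R_n(\mathcal F)\le c\sqrt{\ln|\Hc|/n}$. Thus the core task is to pin down $L_\phi$ (and symmetrically $L_\rho$).

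The Lipschitz constant is then estimated piece by piece on the three intervals in the definition of $\phi$. On $(\theta_i,c_\theta]$, $\phi$ is identically $0$. On $(0,\theta_i]$, $\phi$ is linear with slope $\theta_i^{-1}\Pr_{\gr\sim\Qc_f}[y\gr(x)>\theta_i/2\mid yf(x)=0]$, and this probability is the tail of an average of $N$ i.i.d.\ $\pm 1$ variables with mean $0$ exceeding $\theta_i/2$, so Hoeffding bounds it by $\exp(-N\theta_i^2/8)$ and the slope by $c\theta_i^{-1}\exp(-N\theta_i^2/8)$. On $(-c_\theta,0]$, $\phi(\lambda)$ equals the upper tail $\Pr[\mathrm{Binom}(N,(1+\lambda)/2)\ge k_*]$ at $k_*\approx N(2+\theta_i)/4$; differentiating via the classical identity $\tfrac{d}{dp}\Pr[\mathrm{Binom}(N,p)\ge k_*]=N\binom{N-1}{k_*-1}p^{k_*-1}(1-p)^{N-k_*}$ and combining a sharp local estimate of the binomial PMF at $k_*$ with a Chernoff bound, while exploiting that $p=(1+\lambda)/2$ is bounded away from $0$ thanks to the prior reduction to margins in $[-c_\theta,c_\theta]$, yields $|\phi'(\lambda)|\le c\,\theta_{i+1} N\exp(-N\theta_{i+1}^2/c)$ after using $\theta_i\ge\theta_{i+1}/2$ and $N\theta_{i+1}^2\ge 32$ to absorb polynomial prefactors. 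Taking the worst of the three pieces,
\begin{equation*}
L_\phi \;\le\; c\exp(-N\theta_{i+1}^2/c)\sqrt{\theta_{i+1}^2N^2+\theta_{i+1}^{-2}},
\end{equation*}
and the same bound holds for $L_\rho$ by a mirror argument on the three pieces of $\rho$.

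Substituting $L_\phi$ and $R_n(\mathcal F)\le c\sqrt{\ln|\Hc|/n}$ into the symmetrization bound recovers the stated inequality for $\eqref{eq:PhiDiff}$; a union bound with $\delta/2$ over the analogous deduction for $\rho$ upgrades this to a bound on $\max\{\eqref{eq:PhiDiff},\eqref{eq:RhoDiff}\}$ as required. I expect the main obstacle to be precisely the Lipschitz bound on the piece $(-c_\theta,0]$: a na\"ive Chernoff derivative bound only produces $cN\exp(-N\theta_{i+1}^2/c)$, while the sharper $\theta_{i+1} N$ factor needed to match the statement requires both the $1/\sqrt{Np(1-p)}$ factor from a local limit estimate of the binomial PMF and the trick of trading one power of $\theta_{i+1}$ for a slight relaxation of the constant inside the exponential, which is only legitimate because $N\theta_{i+1}^2\ge 32$ by assumption.
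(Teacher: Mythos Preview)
Your overall architecture matches the paper's: symmetrize, apply Ledoux--Talagrand contraction, pass from $\Cc(\Hc)$ to $\Hc$ via the convex-hull argument, and invoke Massart's finite class lemma. There is, however, one genuine gap.

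\textbf{The concentration term is too large.} You invoke McDiarmid with the blanket range bound $\phi\in[0,1]$, which produces an additive $c\sqrt{\ln(e/\delta)/n}$. But the lemma you are trying to prove asserts the much smaller term $c\exp(-\theta_{i+1}^2N/c)\sqrt{\ln(e/\delta)/n}$, and this exponential prefactor is essential downstream (after setting $N\approx\theta_{i+1}^{-2}\ln(e/l_{j+1})$ it becomes $\approx l_{j+1}$, which is what lets the $\delta$-dependence land inside the $\sqrt{l_{j+1}(\cdots)}$ term of Lemma~\ref{lem:PiecewiseFullStatement}). The fix is straightforward and uses an observation you already make for the linear piece: by the monotonicity of $\lambda\mapsto\Pr_{\gr}[y\gr(x)>\theta_i/2\mid yf(x)=\lambda]$ (Lemma~\ref{lem:condMonotonicity}) and Hoeffding at $\lambda=0$, one has $0\le\phi(\lambda)\le\Pr_{\gr}[y\gr(x)>\theta_i/2\mid yf(x)=0]\le\exp(-N\theta_i^2/16)$ for all $\lambda$. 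Feeding this sup-norm bound into the bounded-differences step yields the correct $\exp(-N\theta_{i+1}^2/c)\sqrt{\ln(e/\delta)/n}$. This is exactly what the paper does before quoting the standard Rademacher generalization bound.

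\textbf{On the Lipschitz bound for $\lambda\in(-c_\theta,0]$.} Your route via the closed-form identity $\tfrac{d}{dp}\Pr[\mathrm{Binom}(N,p)\ge k_\ast]=N\Pr[\mathrm{Binom}(N-1,p)=k_\ast-1]$ is different from the paper's, which instead differentiates the sum term-by-term, partitions $\{k_\ast,\dots,N\}$ into dyadic shells $I_j$ around the mean, and bounds each shell by Hoeffding. Your approach is potentially cleaner, but as you yourself flag, the naive tail bound on the single PMF value only gives $N\exp(-N\theta_{i+1}^2/c)$; extracting the extra factor $\theta_{i+1}$ requires a pointwise sub-Gaussian bound of the form $\Pr[\mathrm{Binom}(m,p)=k]\le C(mp(1-p))^{-1/2}\exp(-c(k-mp)^2/m)$, which is true (via Stirling, using that $p$ is bounded away from $0,1$ thanks to the $c_\theta$-reduction) but is not as off-the-shelf as Hoeffding. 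The paper's shell argument sidesteps this by working only with tail probabilities. Either route is fine, but you should be explicit about the non-asymptotic PMF bound you are relying on.
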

To obtain Lemma \ref{lem:PiecewiseFullStatement}, we balance the statements above, by setting $N = c \theta_{i+1}^{-2}\ln(e/l_{j+1})$ for a constant $c>0$ big enough. 
Then $\exp(-N\theta^2_{i+1}/c)\le l_{j+1}/e$, and combining the statements using a simple union bound, yields:
\begin{align*}
    \sup_{f\in\Cc_\Hc(\Theta_i, L_j)}\big(\Lc_\Dc(f) &- \Lc_\Sr^\theta(f)\big)
    \le c\left(\sqrt{l_{j+1}\left(\frac{\ln(e/l_{j+1})\ln(|\Hc|)}{\theta_{i+1}^2n}+\frac{\ln(1/\delta)}{n}\right)} + \frac{\ln(e/l_{j+1})\ln(|\Hc|)}{\theta_{i+1}^2n}+\frac{\ln(1/\delta)}{n}\right)\\
    &+ c\left(l_{j+1}\sqrt{\frac{\ln(|\Hc|)}{n\theta_{i+1}^2}}(\ln(e/l_{j+1})+1)+l_{j+1}\sqrt{\frac{\ln(e/\delta)}{n}}\right).
\end{align*}
Which yields Lemma \ref{lem:PiecewiseFullStatement}, and thus finishes the proof of our main result, Theorem  \ref{thm:FullStatement}.

\section{Conclusion}
We have proven the first asymptotically tight margin-based generalization bound for voting classifiers over finite hypothesis sets. This finally settles the decades long discussion on the generalization performance of voting classifiers, started by \cite{SFBL98}. Our main theorem explicitly describes the generalizability of voting classifiers in terms of the size of the hypothesis set, the margin, the fraction of training points with the given margin, the number of training samples and the failure probability. 

We reiterate the thoughts of \cite{LS25}, where our techniques are adopted from: This updated version of the framework of \cite{SFBL98} could be applicable to other related settings with similar gaps in their generalization bounds. 
In this paper we have clarified the structure of the updated framework; combined with the natural cohesion between the framework and the setting of voting classifiers, we have improved adaptability.
Lastly, we confirm the suspicion of~\cite{LS25}: indeed their techniques are adaptable in the setting of voting classifiers, culminating in the result of this paper.

\section{Rademacher Bounds}\label{sec:rademacher}

In this section we proceed to use Rademacher complexity to bound \eqref{eq:PhiDiff} and \eqref{eq:RhoDiff}. The bound is stated in the following lemma
\begin{restatelem}{\ref{lem:rademacherBound}}
    There exists a constant $c>0$ such that when $N\ge 32\theta_{i+1}^{-2}$ it holds with probability at least $1-\delta$ over the sample $\Sr\sim\Dc^n$ that: 
\begin{align*}
    \max\set{\eqref{eq:PhiDiff},\eqref{eq:RhoDiff}} 
    \le c\exp\left(-\frac{\theta_{i+1}^2N}{c}\right)\left(\sqrt{\frac{(\theta_{i+1}^2N^2 + \theta_{i+1}^{-2})\ln(|\Hc|)}{n}}+\sqrt{\frac{\ln(e/\delta)}{n}}\right).
\end{align*}
\end{restatelem}
\begin{proof}
As the proof of for \eqref{eq:RhoDiff} is the same, we will only prove the bound for \eqref{eq:PhiDiff}. Let $\sigma\sim\Rc^n$, denote a vector of independent random variables uniformly in $\set{-1,+1}$. Then the empirical Rademacher complexity of $\phi$ with respect to a training sample $S\in(\Xc\times\set{-1,+1})^n$ is:
\begin{align*}
    \hat\Rc_{\phi,\,\Cc(\Hc)}(S) 
    = \frac{1}{n}\E_{\sigma\sim\Rc^n}\left[\sup_{f\in\Cc(\Hc)}\sum_{i\in[n]}\sigma_i\phi(y_if(x_i))\right].
\end{align*}
As we will argue later, $\phi$ is $L_\phi$-Lipschitz, hence using the contraction inequality of \cite{LeTal91}, gives:
\begin{align*}
    \hat\Rc_{\phi,\,\Cc(\Hc)}(S) 
    &\le \frac{L_\phi}{n}\E_{\sigma\sim\Rc^n}\left[\sup_{f\in\Cc(\Hc)}\sum_{i\in[n]}\sigma_iy_if(x_i)\right]
    = \frac{L_\phi}{n}\E_{\sigma\sim\Rc^n}\left[\sup_{f\in\Hc}\sum_{i\in[n]}\sigma_if(x_i)\right]
    =\frac{L_\phi}{n}\hat\Rc_\Hc(S).
\end{align*}
Where the second equality holds since $\sigma_i\eqd \sigma_iy_i$ for all $i\in[n]$, as well as $\sup_{f\in\Cc(\Hc)}\sum_{i\in[n]}\sigma_if(x_i) = \sup_{f\in\Hc}\sum_{i\in[n]}\sigma_if(x_i)$
The $\ge$ part clearly holds since $\Hc\subset\Cc(\Hc)$. For the other way, we remark that for any $f\in\Cc(\Hc)$, it holds that:
\begin{align*}
    \sum_{i\in[n]}\sigma_if(x_i)
    =\sum_{i\in[n]}\sigma_i\sum_{h\in\Hc}a_hh(x_i)
    =\sum_{h\in\Hc}a_h\sum_{i\in[n]}\sigma_ih(x_i)
    \le\sup_{h\in\Hc}\sum_{i\in[n]}\sigma_ih(x_i)
\end{align*}
meaning it also holds for the supremum over $\Cc(\Hc)$

Now, since none of the steps above depend on $S$, it holds in expectation over $\Sr\sim\Dc$, i.e for Rademacher complexity in general:
\begin{align*}
    \Rc_{\Dc,\phi,\,\Cc(\Hc)}(n)
    \le \frac{L_\phi}{n}\Rc_{\Dc,\Hc}(n) 
    \le c\frac{L_\phi}{\sqrt{n}}\sqrt{\ln(|\Hc|)}
\end{align*}
Where the last inequality holds by \cite{M00} finite class lemma. 

Now by Lemma \ref{lem:condMonotonicity} and definition of $\phi$, it holds that:
\begin{align*}
    0\le\phi(\lambda)\le \max_{-c_\theta\le \lambda\le 0}\Pr_{\gr\sim\Qc_f}[y\gr(x)>\theta_{i}/2\mid yf(x) = \lambda] =\Pr_{\gr\sim\Qc_f}[y\gr(x)>\theta_{i}/2\mid yf(x) = 0]
\end{align*}
Which by Lemma \ref{lem:marginProbIsBinom} with $k^* = \floor{(\frac{\theta_i}{4}+\frac{1}{2})N}+1$ and Hoeffding, is bounded by:
\begin{align*}
    \Pr_{H\sim\text{Binom}(N,1/2)}[H-\E[H]\ge k^*- N/2] 
    \le \exp\left(-\frac{2(k^*-\frac{N}{2})^2}{N}\right)  
    \le \exp\left(-\frac{N\theta_{i}^2}{16}\right)   
\end{align*}
Using that $k^*-N/2> N\theta_{i}/4$.

Finally, by standard generalization bounds using Rademacher complexity (see for example~\cite{SS14}), it holds with probability $1-\delta$, over $\Sr\sim\Dc^n$ that: 
\begin{align*}
    \sup_{f\in\Cc}\bigg|\E_{(\xr,\yr)\sim\Dc}\left[\phi(\yr f(\xr))\right] 
    - \E_{(\xr,\yr)\sim S}\left[\phi(\yr f(\xr))\right]\bigg| 
    &\le 2\Rc_{\Dc,\phi, \Cc(\Hc)}(n) + \exp\left(-\frac{N\theta_{i}^2}{16}\right)   c'\sqrt{\frac{\ln(1/\delta)}{n}}\\
    &\le 2c\frac{L_\phi}{\sqrt{n}}\sqrt{\ln(|\Hc|)} + \exp\left(-\frac{N\theta_{i}^2}{16}\right)   c'\sqrt{\frac{\ln(1/\delta)}{n}}\\
\end{align*}
Remembering that $2\theta_i=\theta_{i+1}$ and combining with a bound of the Lipschitz constant $L_\phi$, as described in the following lemma:
\begin{lemma}\label{lem:lipConst}
There exists a constant $c>0$ such that the Lipschitz constants $L_\phi,L_\rho$ of $\phi$ and $\rho$ are bounded by 
\begin{align*}
    c\exp\left(-\frac{N\theta_{i+1}^2}{c}\right)(\theta_{i+1}N + \theta_{i+1}^{-1})
\end{align*}
when $N\ge 32\cdot\theta_{i+1}^{-2}$.
\end{lemma}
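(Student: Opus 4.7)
The plan is to reduce the analysis of $\phi$ and $\rho$ to the derivative of a single real-valued function, namely a binomial-tail probability, and then to control this derivative using a sharp pointwise bound on the binomial p.m.f. The key observation is that when $\gr\sim\Qc_f$, the conditional distribution of $y\gr(x)$ given $yf(x)=\lambda$ equals the distribution of $(2H-N)/N$ with $H\sim\mathrm{Binom}(N,p)$, $p=(1+\lambda)/2$: indeed, sampling $\hr_i\sim\Dc_f$ gives $\Pr[\hr_i(x)=y]=p$ whenever $yf(x)=\lambda$. Setting $k^*=\lfloor N/2+N\theta_i/4\rfloor+1$ and $P(p)=\Pr_{H\sim\mathrm{Binom}(N,p)}[H\ge k^*]$, the definition of $\phi$ rewrites as $\phi(\lambda)=P((1+\lambda)/2)$ on $(-c_\theta,0]$, $\phi(\lambda)=(\theta_i-\lambda)\theta_i^{-1}P(1/2)$ on $(0,\theta_i]$, and $\phi(\lambda)=0$ on $(\theta_i,c_\theta]$. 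Each piece takes the same value as its neighbour at the break points, so it suffices to bound the derivative on each piece separately.

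\textbf{Derivative of the binomial tail.} The standard telescoping identity yields
\begin{align*}
    \frac{dP}{dp} \;=\; N\cdot\Pr\bigl[\mathrm{Binom}(N-1,p)=k^*-1\bigr].
\end{align*}
For $p\in((1-c_\theta)/2,1/2]$, the assumption $N\ge 32\theta_{i+1}^{-2}$ ensures $(k^*-1)/(N-1)-p\ge c\theta_{i+1}$ for a constant. The pointwise Stirling bound on the binomial p.m.f., combined with Pinsker's inequality $\mathrm{KL}(q\,\|\,p)\ge 2(q-p)^2$, gives
\begin{align*}
    \Pr\bigl[\mathrm{Binom}(N-1,p)=k^*-1\bigr] \;\le\; \frac{c}{\sqrt{N}}\exp\!\left(-N\theta_{i+1}^2/c'\right),
\end{align*}
so that $|dP/dp|\le c\sqrt{N}\exp(-N\theta_{i+1}^2/c)$ on this piece. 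On the linear piece $(0,\theta_i]$ the derivative of $\phi$ is $-P(1/2)/\theta_i$, and Hoeffding's inequality bounds $P(1/2)$ by $\exp(-N\theta_{i+1}^2/c)$, so that $|\phi'(\lambda)|\le c\theta_{i+1}^{-1}\exp(-N\theta_{i+1}^2/c)$ there.

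\textbf{Assembling the bound and the case of $\rho$.} Via the chain rule, $p=(1+\lambda)/2$ contributes only a harmless factor of $1/2$, so the two active pieces combine to $|\phi'|\le c\sqrt{N}\exp(-N\theta_{i+1}^2/c)+c\theta_{i+1}^{-1}\exp(-N\theta_{i+1}^2/c)$. The hypothesis $N\ge 32\theta_{i+1}^{-2}$ rearranges to $\sqrt{N}\le \theta_{i+1}N/\sqrt{32}$, converting the first term into $c\theta_{i+1}N\exp(-N\theta_{i+1}^2/c)$, which together with the second yields the claimed Lipschitz bound for $\phi$. The analysis of $\rho$ is the mirror image: the binomial-tail piece is now $1-P$ on $(\theta_i,c_\theta]$, the linear piece lives on $(0,\theta_i]$ with value $(\lambda/\theta_i)(1-P(\theta_i))$, and the distance $p-(k^*-1)/(N-1)\ge c\theta_{i+1}$ holds for $p\in(1/2+\theta_i/2,(1+c_\theta)/2]$, so the same p.m.f. and Hoeffding bounds apply verbatim.

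\textbf{Main obstacle.} The crux is obtaining the $c/\sqrt{N}$ refinement in the binomial p.m.f. bound. A crude estimate $\Pr[\mathrm{Binom}(N-1,p)=k^*-1]\le \Pr[\mathrm{Binom}(N-1,p)\ge k^*-1]\le\exp(-N\theta_{i+1}^2/c)$, which is all one obtains without Stirling, would yield $|dP/dp|\le cN\exp(-N\theta_{i+1}^2/c)$ and produce $N$ rather than $\theta_{i+1}N$ in the final bound; this additional factor $1/\theta_{i+1}$ cannot be absorbed into the exponential for general $\theta_{i+1}$. Extracting the $\sqrt{N}$ from the local form of Stirling's approximation, and then combining it with $N\theta_{i+1}^2\ge 32$, is precisely what matches the statement of the lemma.
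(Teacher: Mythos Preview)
Your proof is correct and takes a genuinely different route from the paper's.

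\textbf{What the paper does.} The paper differentiates the tail sum $\sum_{k\ge k^*}\binom{N}{k}p_h^k(1-p_h)^{N-k}$ term by term, obtaining $\frac{1}{8(1-\lambda^2)}\sum_{k\ge k^*}\binom{N}{k}p_h^k(1-p_h)^{N-k}|k-p_hN|$. It then partitions the range $[k^*,N]$ into dyadic shells $I_j$ around $p_hN$, bounds $|k-p_hN|$ crudely on each shell, applies Hoeffding to $\Pr[H\in I_j]$, and sums a geometric series. This already handles only the ``nice'' ranges $\lambda\in[-\theta_i,0]$ (for $\phi$) and $\lambda\in[\theta_i,(9/4)\theta_i]$ (for $\rho$); the remaining ranges are then reduced to these by a separate monotonicity-in-$\lambda$ argument for the right-hand side. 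The factor $1/(1-\lambda^2)$ forces the choice $c_\theta=1/\sqrt{2}$.

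\textbf{What you do.} You invoke the telescoping identity $dP/dp=N\binom{N-1}{k^*-1}p^{k^*-1}(1-p)^{N-k^*}=N\Pr[\mathrm{Binom}(N-1,p)=k^*-1]$, which collapses the whole computation to bounding a single binomial point mass. Stirling plus Pinsker gives the $c/\sqrt{N}\cdot e^{-N\theta_{i+1}^2/c}$ estimate uniformly over the relevant $p$, and then $N\theta_{i+1}^2\ge 32$ converts $\sqrt{N}$ into $\theta_{i+1}N$. There is no dyadic decomposition, no case split into nice/not-nice $\lambda$, and no need for $c_\theta=1/\sqrt{2}$: the Stirling bound only asks that $q=(k^*-1)/(N-1)\approx 1/2+\theta_i/4$ be bounded away from $0$ and $1$, which is automatic.

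\textbf{Trade-offs.} The paper's argument is more elementary in that it uses only Hoeffding, at the cost of being considerably longer and requiring the auxiliary monotonicity step and the $c_\theta$ restriction. Your argument is shorter and cleaner but leans on the sharper pointwise Stirling/local-CLT bound for the binomial; as you note, the naive tail bound on the p.m.f. would lose the crucial $\sqrt{N}$ and fall short by a factor $\theta_{i+1}^{-1}$. One small slip: on the linear piece for $\rho$ the constant is $1-P((1+\theta_i)/2)$, not $1-P(\theta_i)$, but this does not affect the argument.
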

This finishes the proof.
\end{proof}
\noindent
In the next section, we prove Lemma \ref{lem:lipConst}.

\subsection{Bounding the Lipschitz Constants}
As $\phi$ and $\rho$ are piecewise functions, their Lipshitz constants are bounded by the sum of the Lipshitz constants for the piece functions. 
Hence to prove Lemma \ref{lem:lipConst}, we need to prove the following 2 lemmas:
\begin{lemma}\label{lem:lipConstEasy}
    There exists a constant $c>0$ such that the Lipschitz constant of $\phi$ for $0< \lambda\le \theta$ and $\rho$ for $0< \lambda\le \theta$ is bounded by:
    \begin{align*}
        c\theta_{i+1}^{-1}\exp\left(-\frac{N\theta_{i+1}^2}{c}\right).
    \end{align*}
\end{lemma}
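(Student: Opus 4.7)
The plan is to observe that on the interval $(0,\theta_i]$, both $\phi$ and $\rho$ are \emph{linear} in $\lambda$, so the Lipschitz constant is simply the absolute value of the slope. Specifically, from the definitions
\[
\phi(\lambda)=\frac{\theta_i-\lambda}{\theta_i}\,\Pr_{\gr\sim\Qc_f}[y\gr(x)>\theta_i/2\mid yf(x)=0],
\qquad
\rho(\lambda)=\frac{\lambda}{\theta_i}\,\Pr_{\gr\sim\Qc_f}[y\gr(x)\le\theta_i/2\mid yf(x)=\theta_i],
\]
the slopes have absolute value
\[
\theta_i^{-1}\,\Pr_{\gr\sim\Qc_f}[y\gr(x)>\theta_i/2\mid yf(x)=0]
\quad\text{and}\quad
\theta_i^{-1}\,\Pr_{\gr\sim\Qc_f}[y\gr(x)\le\theta_i/2\mid yf(x)=\theta_i],
\]
respectively. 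Since $\theta_i=\theta_{i+1}/2$, the prefactor is already of the right form $\theta_{i+1}^{-1}$ up to a constant; it therefore remains only to produce the claimed exponential factor by bounding the two tail probabilities.

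\paragraph{Bounding the two tail probabilities.}
For both probabilities I would use Hoeffding's inequality applied to $\gr(x)=\tfrac{1}{N}\sum_{j=1}^N\hr_j(x)$, which is an average of $N$ i.i.d.\ random variables in $\{-1,+1\}$ with mean $f(x)$. For $\phi$, the conditioning $yf(x)=0$ means $\E[y\gr(x)]=0$, so exactly as in the main body of the proof of Lemma~\ref{lem:rademacherBound}, Hoeffding yields
\[
\Pr_{\gr\sim\Qc_f}\bigl[y\gr(x)>\theta_i/2\mid yf(x)=0\bigr]\;\le\;\exp\!\bigl(-N\theta_i^2/16\bigr).
\]
For $\rho$, the conditioning $yf(x)=\theta_i$ means $\E[y\gr(x)]=\theta_i$, and the event $y\gr(x)\le\theta_i/2$ is a one-sided deviation of magnitude $\theta_i/2$ below the mean; Hoeffding again gives a bound of the form $\exp(-N\theta_i^2/8)$. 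Multiplying each slope by its corresponding tail bound, and then using $\theta_{i+1}=2\theta_i$ to absorb a constant factor into $c$, produces the stated bound $c\,\theta_{i+1}^{-1}\exp(-N\theta_{i+1}^2/c)$. The hypothesis $N\ge 32\theta_{i+1}^{-2}$ only needs to be invoked to guarantee that the Hoeffding exponent is in a regime where the bound is meaningful (i.e.\ that $k^*-N/2\ge N\theta_i/4$ for the integer ceiling in the binomial count, exactly as in the argument preceding the statement of Lemma~\ref{lem:lipConst}).

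\paragraph{Main obstacle.}
There is essentially no real obstacle here: once one notices the linearity of $\phi$ and $\rho$ on $(0,\theta_i]$, everything reduces to two standard Hoeffding tail estimates on sums of $\{-1,+1\}$-valued i.i.d.\ variables. The only point one has to be slightly careful about is the symmetry of the argument for $\rho$: the condition in $\rho$ is $yf(x)=\theta_i$ rather than $yf(x)=0$, so one must use the one-sided Hoeffding bound applied to a non-centered binomial. Everything else is bookkeeping, and the passage from $\theta_i$ to $\theta_{i+1}$ costs only constants that are absorbed into the universal $c$.
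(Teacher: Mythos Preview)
Your proposal is correct and follows essentially the same route as the paper: observe linearity on $(0,\theta_i]$, read off the slope $\theta_i^{-1}$ times the relevant tail probability, bound each tail by Hoeffding, and convert $\theta_i$ to $\theta_{i+1}$ at the cost of a constant. One minor remark: the assumption $N\ge 32\theta_{i+1}^{-2}$ is not actually needed for this lemma (and the paper does not invoke it here); the inequality $k^*-N/2> N\theta_i/4$ follows directly from $\lfloor x\rfloor+1>x$ without any lower bound on $N$.
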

\begin{lemma}\label{lem:lipConstHard}
    There exists a constant $c>0$ such that the Lipschitz constant of $\phi$ for $-1/\sqrt{2}\le \lambda\le 0$ and $\rho$ for $\theta_{i} < \lambda\le 1/\sqrt{2}$ is bounded by:
    \begin{align*}
        cN\theta_{i+1}\exp\left(-\frac{N\theta_{i+1}^2}{c}\right)
    \end{align*}
    when $N\ge 32\cdot\theta^{-2}_{i+1}$.
\end{lemma}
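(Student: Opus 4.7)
The plan is to express $\phi$ on $[-1/\sqrt 2, 0]$ and $\rho$ on $(\theta_i, 1/\sqrt 2]$ as binomial tail probabilities in $p = (1+\lambda)/2$, differentiate them via a clean telescoping identity, and bound the resulting PMF via Stirling. I describe the argument for $\phi$; the argument for $\rho$ is symmetric. Conditional on $yf(x) = \lambda$, the signs $y\hr_1(x),\dots,y\hr_N(x)$ used to form $y\gr(x)$ are i.i.d.\ $\pm 1$ variables with $\Pr[y\hr_j(x) = +1] = p = (1+\lambda)/2$. Letting $K$ denote the number of $+1$'s and setting $k^\ast = \lfloor N/2 + N\theta_i/4\rfloor + 1$, the event $y\gr(x) > \theta_i/2$ is exactly $K \ge k^\ast$, so for $\lambda \in [-1/\sqrt 2, 0]$ we have $\phi(\lambda) = \Pr[\mathrm{Binom}(N,p) \ge k^\ast]$, and similarly $\rho(\lambda) = \Pr[\mathrm{Binom}(N,p) \le k^\ast - 1]$ for $\lambda \in (\theta_i, 1/\sqrt 2]$.

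Next, a direct telescoping computation (using $\binom{N}{k}k = N\binom{N-1}{k-1}$) yields the identity
\[
  \frac{d}{dp}\Pr[\mathrm{Binom}(N,p) \ge k^\ast] = N\binom{N-1}{k^\ast-1}p^{k^\ast-1}(1-p)^{N-k^\ast},
\]
i.e.\ $N$ times the PMF of $\mathrm{Binom}(N-1,p)$ at $k^\ast - 1$. Since $dp/d\lambda = 1/2$, both $|\phi'(\lambda)|$ and (after the same telescoping applied to $1 - \Pr[\cdot \ge k^\ast]$) $|\rho'(\lambda)|$ equal $(N/2)\Pr[\mathrm{Binom}(N-1,p) = k^\ast - 1]$ on their respective intervals.

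It remains to bound this PMF uniformly. For $\phi$, $p$ ranges over $[(1-1/\sqrt 2)/2, 1/2]$, which lies to the left of the mode $(k^\ast-1)/(N-1) \ge 1/2$ of the PMF as a function of $p$; the maximum is therefore attained at $p = 1/2$. At $p = 1/2$, a Stirling-type estimate for the central binomial gives $\binom{N-1}{k^\ast - 1}2^{-(N-1)} \le c N^{-1/2}\exp(-N\theta_i^2/c)$, exploiting that $k^\ast - 1 - (N-1)/2 \ge N\theta_i/4 - O(1) = \Omega(N\theta_i)$. For $\rho$, $p$ ranges over $((1+\theta_i)/2, (1+1/\sqrt 2)/2]$, to the right of the mode, so the maximum is at the left endpoint $p = (1+\theta_i)/2$; the same Stirling estimate applies because $(N-1)p - (k^\ast - 1) = \Omega(N\theta_i)$ as well. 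Combining gives $L_\phi, L_\rho \le c\sqrt{N}\exp(-N\theta_i^2/c)$, and the hypothesis $N \ge 32\theta_{i+1}^{-2}$ yields $\sqrt N \le N\theta_{i+1}/\sqrt{32}$; together with $\theta_i = \theta_{i+1}/2$ this gives the claimed bound $c N\theta_{i+1}\exp(-N\theta_{i+1}^2/c)$ after absorbing constants.

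\textbf{Main obstacle.} The delicate point is Step 3: the naive Hoeffding pmf bound $\binom{N-1}{k^\ast-1}2^{-(N-1)} \le \exp(-N\theta_i^2/c)$ (obtained by bounding the PMF by the corresponding tail) only yields $L \le cN\exp(-N\theta_{i+1}^2/c)$, which does not absorb into the target $cN\theta_{i+1}\exp(-N\theta_{i+1}^2/c)$ when $\theta_{i+1}$ is tiny. Recovering the extra $N^{-1/2}$ factor via a sharp Stirling / local-CLT estimate is what makes the $\theta_{i+1}$ factor available through the assumption $N\theta_{i+1}^2 \ge 32$, and is the only nontrivial ingredient in the proof.
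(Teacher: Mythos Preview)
Your approach is correct and is genuinely different from the paper's. The paper differentiates the tail sum termwise to obtain
\[
  \left|\frac{\partial}{\partial\lambda}\phi(\lambda)\right|
  \;=\;\frac{1}{2p(1-p)}\sum_{k=k^\ast}^{N}\binom{N}{k}p^{k}(1-p)^{N-k}\,|k-pN|,
\]
then performs a dyadic decomposition of the range $[k^\ast,N]$ around $pN$, bounds each block by Hoeffding, and finally handles the full ranges $\lambda\in[-1/\sqrt2,-\theta_i]$ and $\lambda\in((9/4)\theta_i,1/\sqrt2]$ by a separate monotonicity argument on the resulting upper bound. This yields $cN\theta_i\exp(-N\theta_i^2/c)$ directly. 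Your telescoping identity $\frac{d}{dp}\Pr[\mathrm{Binom}(N,p)\ge k^\ast]=N\binom{N-1}{k^\ast-1}p^{k^\ast-1}(1-p)^{N-k^\ast}$ collapses the whole sum to a single point mass, after which the unimodality-in-$p$ argument localizes the maximum to one value of $p$ and a Stirling/local-CLT bound finishes. The trade-off: your route is shorter and avoids both the dyadic partition and the nice/not-nice case split, but it needs the sharp $O(N^{-1/2})$ PMF estimate (your ``main obstacle''), whereas the paper gets by with Hoeffding alone. One small point worth making explicit in your write-up: for $\rho$ the Stirling bound is applied at $p=(1+\theta_i)/2$ rather than $1/2$, so you should note that $p(1-p)=(1-\theta_i^2)/4\ge 1/8$ (this is exactly where the reduction to $|\lambda|\le 1/\sqrt2$ is used) and that $k^\ast-1$ and $N-k^\ast$ are both $\Omega(N)$, so that the $1/\sqrt{mp(1-p)}$ prefactor is indeed $O(N^{-1/2})$.
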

Remark: in Subsection \ref{subsec:Prelim} we argued that we can reduce to margins in $[-c_\theta, c_\theta]$ for a constant $0<c_\theta<1$, we have now made use of this reduction, and chosen $c_\theta=1/\sqrt{2}$.

The easiest is Lemma \ref{lem:lipConstEasy}, which we start with:
\begin{proof}[Proof of lemma \ref{lem:lipConstEasy}]
    Firstly, when $0 < \lambda\le \theta_{i}$, $\phi(\lambda)$ and $\rho(\lambda)$ are defined as
\begin{align*}
\phi(\lambda)=\frac{\theta_{i}-\lambda}{\theta_{i}}\Pr_{\gr\sim\Qc_f}[y\gr(x)>\theta_i/2\mid yf(x) = 0]    
\qquad\text{and}\qquad
\rho(\lambda)=\frac{\lambda}{\theta_{i}}\Pr_{\gr\sim\Qc_f}[y\gr(x)>\theta_i/2\mid yf(x) = \theta_{i}].
\end{align*}
Neither probability depends on the value of $\lambda$, hence the Lipschitz constants are bounded by $1/\theta_i$ times the probability. Both probabilities are by Lemma \ref{lem:condMonotonicity}, Lemma \ref{lem:marginProbIsBinom} and the Hoeffding bound bounded by.
\begin{align*}
    \Pr_{\gr\sim\Qc_f}[y\gr(x)>\theta_i/2\mid yf(x) = \theta_i]
    \le \exp\left(-\frac{\theta_i^2N}{8}\right)
    \le \exp\left(-\frac{\theta_{i+1}^2N}{32}\right)
\end{align*}
as wanted
\end{proof}

\begin{proof}[Proof of Lemma \ref{lem:lipConstHard}]
We will go in detail with the argument for $\phi$ when $-1/\sqrt{2}\le \lambda\le 0$, and remark when the argument for $\rho$ with $\theta < \lambda\le 1/\sqrt{2}$ is different.

For $\phi$ with $-1/\sqrt{2}\le \lambda\le 0$, we need to bound the Lipschitz constant of:
\begin{align*}
\Pr_{\gr\sim\Qc_f}[y\gr(x)>\theta_{i}/2\mid yf(x) = \lambda].    
\end{align*}
By Lemma \ref{lem:marginProbIsBinom}, this probability equals:
\begin{align*}
    \Pr_{\Hr\sim\text{Binom}(N,p_h)}\!\!\left[\Hr\ge k^*\right] 
    = \sum_{k=k^*}^N\binom{N}{k}p_h^k(1-p_h)^{N-k}
\end{align*}
where $p_h = \frac{1}{2}+\frac{\lambda}{2}$ and $k^* = \floor{(\frac{\theta_i}{4}+\frac{1}{2})N}+1$. 

To bound the Lipschitz constant, we bound the absolute differential:
\begin{align*}
    \bigg|\frac{\partial}{\partial \lambda}\Pr_{\gr\sim\Qc_f}&[y\gr(x)>\theta_{i}/2\mid yf(x) = \lambda]\bigg|
    =\frac{1}{2}\sum_{k=k^*}^N\binom{N}{k}p_h^{k-1}(1-p_h)^{N-k-1}|k-p_hN|\\
    &=\frac{1}{2p_h(1-p_h)}\sum_{k=k^*}^N\binom{N}{k}p_h^{k}(1-p_h)^{N-k}|k-p_hN|\\
    &=\frac{1}{8(1-\lambda^2)}\sum_{k=k^*}^N\binom{N}{k}p_h^{k}(1-p_h)^{N-k}|k-p_hN|.
\end{align*}
The same analysis gives the same expression for $\rho$, since it is just the converse probability. By choice of $c_\theta$ we have $(1-\lambda^2)\ge 1/2$.

Let now $\Delta = k^* - p_hN$, and define $I_0 = [k^*, k^*+2\Delta]$ and $I_j = [k^* + 2^j\Delta, k^*+2^{j+1}\Delta]$ for $j=1, \dots, j^*$ where $j^*$ is the smallest integer such that $k^* + 2^{j^*+1}\Delta\ge N$.

Then the sum above can be split up according to $I_j$, and we get an upper bound:
\begin{align}\label{eq:LipSumAfterParti}
\frac{1}{16}\sum_{j=0}^{j^*}\sum_{k\in I_j}\binom{N}{k}p_h^{k}(1-p_h)^{N-k}|k-p_hN|.
\end{align}
Then for any $k\in I_j$ with $j=1, \dots, j^*$ it holds that:
\begin{alignat*}{5}
    (2^j+1)\Delta = k^* + 2^j\Delta -p_hN \le k-p_hN \le k^* + 2^{j+1}\Delta-p_hN = (2^{j+1}+1)\Delta
\end{alignat*}
i.e., it holds that: $|k-p_hN|\le\max\set{(2^j+1)|\Delta|,(2^{j+1}+1)|\Delta|} = (2^{j+1}+1)|\Delta|$ as well as for $j=0$.

Returning to \eqref{eq:LipSumAfterParti} we obtain the upper bound for $\phi$ when $\lambda\in[-c_\theta, 0]$ and $\rho$ for $\lambda\in\left(\theta, c_\theta\right]$:
\begin{align*}
    \frac{1}{16}|\Delta|\!\sum_{j=0}^{j^*}(2^{j+1}+1)\sum_{k\in I_j}\binom{N}{k}p_h^{k}(1-p_h)^{N-k}.
\end{align*}
Replacing the innermost sum with the corresponding probability, gives an upper bound:
\begin{align*}
   \frac{1}{16}|\Delta|&\sum_{j=0}^{j^*}(2^{j+1}+1)\Pr_{\Hr\sim\text{Binom}(N,p_h)}[\Hr\in I_j].
\end{align*}
We continue with the probability in the expression.
\begin{align*}
    \Pr_{\Hr\sim\text{Binom}(N,p_h)}[\Hr\in I_j]
    &=\Pr_{\Hr\sim\text{Binom}(N,p_h)}[k^* + 2^{j}\Delta\le\Hr\le k^* + 2^{j+1}\Delta]\\ 
    &=\Pr_{\Hr\sim\text{Binom}(N,p_h)}[(2^{j}+1)\Delta\le\Hr- \E[\Hr]\le(2^{j+1}+1)\Delta].
\end{align*}
Applying Hoeffding on the above, dependending on the sign of $\Delta$:
\begin{align*}
\text{if }&\Delta>0\text{ then }\quad\enspace\Pr_{\Hr\sim\text{Binom}(N,p_h)}[\Hr\in I_j]\le\Pr_{\Hr\sim\text{Binom}(N,p_h)}[(2^{j}+1)\Delta\le\Hr- \E[\Hr]]<\exp\left(-\frac{2(2^j+1)^2\Delta^2}{N}\right).\\
\text{if }&\Delta<0\text{ then }\quad\enspace\Pr_{\Hr\sim\text{Binom}(N,p_h)}[\Hr\in I_j]\le\Pr_{\Hr\sim\text{Binom}(N,p_h)}[\Hr- \E[\Hr]\le-(2^{j+1}+1)|\Delta|]<\exp\left(-\frac{2(2^{j+1}+1)^2|\Delta|^2}{N}\right).
\end{align*}

Hence for both cases, we bound the probability with the largest of the above.
\begin{align*}
    \Pr_{\Hr\sim\text{Binom}(N,p_h)}[\Hr\in I_j] <\exp\left(-\frac{2(2^{j}+1)^2|\Delta|^2}{N}\right).
\end{align*}
To summarize, we have so far argued that:
\begin{align}\label{eq:LipUpperDelta}
    |\frac{\partial}{\partial \lambda}&\Pr_{\gr\sim\Qc_f}[y\gr(x)>\theta_{i}/2\mid yf(x) = \lambda]|
    <\frac{1}{16}|\Delta|\!\sum_{j=0}^{j^*}(2^{j+1}+1)\exp\left(-\frac{2(2^{j}+1)^2|\Delta|^2}{N}\right).
\end{align}
Covering both $\phi$ when $\lambda\in[-c_\theta, 0]$ and $\rho$ when $\lambda\in\left(\theta_{i}, c_\theta\right]$.

Now we examine this bound when $\theta_{i}$ behaves nicely. For $\phi$, the analysis is easiest for $-\theta_{i}\le\lambda\le 0$. Remark that the assumption $N\ge 32\cdot\theta_{i+1}^{-2}$ implies $N\theta_{i}\ge 1$ and $N\theta_{i}/4\ge 1$.
\begin{align*}
    \lambda\le 0& 
    \qquad\Longrightarrow\qquad 
    \Delta
    = \floor{\frac{N}{2}+\frac{N\theta_{i}}{4}} +1 - \frac{N}{2}-\frac{N\lambda}{2}
    > \frac{N\theta_{i}}{4}-\frac{N\lambda}{2}
    \ge \frac{N\theta_{i}}{4}
    >0\\ 
    -\theta_{i}\le\lambda\le 0& 
    \qquad\Longrightarrow\qquad 
    \Delta
    \le \frac{N\theta_{i}}{4} +1-\frac{N\lambda}{2}
    \le \frac{N\theta_{i}}{2} +1+\frac{N\theta_{i}}{2}
    \le 2N\theta_{i} \qquad\qquad\quad{\text{ since }N\theta_{i}\ge 1}
\end{align*}
For $\rho$ the analysis is easiest when $\theta_{i}\le\lambda\le (9/4)\theta_{i}$.
\begin{align*}
    \theta_{i}\le\lambda\le (9/4)\theta_{i}& 
    \qquad\Longrightarrow\qquad 
    \Delta
    > \frac{N\theta_{i}}{4}-\frac{N\lambda}{2}
    \ge \frac{N\theta_{i}}{4} - \frac{9N\theta_{i}}{4}
    =-2N\theta_{i}\\ 
    \theta_{i}\le\lambda& 
    \qquad\Longrightarrow\qquad 
    \Delta
    \le \frac{N\theta_{i}}{4} +1-\frac{N\lambda}{2}
    \le \frac{N\theta_{i}}{4} +1-\frac{N\theta_{i}}{2}
    = -\frac{N\theta_{i}}{2}+1
    \le -\frac{N\theta_{i}}{4} \qquad{\text{ since }N\theta_{i}/4\ge 1}
\end{align*}
Hence for both nice cases, $|\Delta| \le 2N\theta_{i}$ and $|\Delta|^2 = \Delta ^2 \ge N^2\theta_{i}^2/16$, reducing the upper bound to:
\begin{align*}
    \frac{1}{8}N\theta_{i}&\!\sum_{j=0}^{j^*}(2^{j+1}+1)\exp\left(-\frac{2(2^{j}+1)^2N\theta_{i}^2}{16}\right)
    \le\frac{1}{8}N\theta_{i}\!\sum_{j=0}^{j^*}\exp(2^{j+1})\exp\left(-\frac{(2^{j}+1)^2N\theta_{i}^2}{8}\right)
\end{align*}
Now the assumption also implies $N\theta_{i}^2\ge 8$ , which implies that $8N\cdot 2^{j+1}\le2^{j+1}N\theta_{i}^2$ and $\exp(-N\theta_{i}^2/8)\le 1/2$, hence the bound becomes:  
\begin{align*}
    \frac{1}{8}N\theta_{i}\!\sum_{j=0}^{j^*}\exp\left(-\frac{(2^{2j}+1)N\theta_{i}^2}{8}\right)
    &=\frac{1}{8}N\theta_{i}\!\sum_{j=0}^{j^*}\exp\left(-\frac{N\theta_{i}^2}{8}\right)^{2^{2j}+1}\\
    &\le\frac{1}{8}N\theta_{i}\exp\left(-\frac{N\theta_{i}^2}{8}\right)\!\sum_{j=0}^\infty\exp\left(-\frac{N\theta_{i}^2}{8}\right)^{j}
    \le N\theta_{i}\exp\left(-\frac{N\theta_{i}^2}{8}\right)
\end{align*}
Yielding the wanted bound for the nice cases, since $2\theta_i = \theta_{i+1}$.

For the not-nice cases, we return to \eqref{eq:LipUpperDelta}, and differentiate the right side with respect to $\lambda$. For the not-nice cases to follow from the nice cases, we need the differential to be increasing for $\lambda\le -\theta_{i}$, and decreasing for $(9/4)\theta_{i}<\lambda$.

We start with $\lambda\le -\theta_{i}$. Since $\Delta>0$, we can reduce to differentiating \eqref{eq:LipUpperDelta} with respect to $\Delta$.  
\begin{align*}
    \frac{1}{16}&\!\sum_{j=0}^{j^*}(2^{j+1}+1)\frac{\partial}{\partial\Delta}\left(\Delta\exp\left(-\frac{2(2^{j}+1)^2\Delta^2}{N}\right)\right)\\
    &=\frac{1}{16}\!\sum_{j=0}^{j^*}(2^{j+1}+1)\left(\exp\left(-\frac{2(2^{j}+1)^2\Delta^2}{N}\right)+\Delta\exp\left(-\frac{2(2^{j}+1)^2\Delta^2}{N}\right)\!\cdot\!\left(-\frac{2(2^j+1)^2}{N}\right)\!\cdot2\Delta\right).
\end{align*}
We remark that since $\Delta$ is decreasing in $\lambda$, we want this differential to be negative, since then in total it will be increasing in $\lambda$. 

The expression above is negative for all $j=0,\dots, j^*$, if and only if
\begin{align*}
    1-\frac{(2^j+1)^2}{N}4\Delta^2 <0 \qquad\Longleftrightarrow\qquad N<4(2^j+1)^2\Delta^2
\end{align*}
This is indeed true, since $\Delta^2 >N^2\theta_{i}^2/16$ and $N\theta_{i}^2 \ge 8$. Finishing the not nice case for $\phi$

We continue with $(9/4)\theta_{i}<\lambda$, here we remark that: $\Delta> \frac{N\theta_{i}}{4}-\frac{N\lambda}{2} >-\frac{N\lambda}{2}$ making $|\Delta|\le \max\set{|-\frac{N\theta_{i}}{4}|,|-\frac{N\lambda}{2}|} = \frac{N\lambda}{2}$ so now we can remove the absolute value and differentiate:
\begin{align*}
    \frac{N}{32}&\!\sum_{j=0}^{j^*}(2^{j+1}+1)\frac{\partial}{\partial\lambda}\left(\lambda\exp\left(-\frac{2(2^{j}+1)^2\Delta^2}{N}\right)\right)\\
    &=\frac{N}{32}\!\sum_{j=0}^{j^*}(2^{j+1}+1)
    \left(\exp\left(-\frac{2(2^{j}+1)^2\Delta^2}{N}\right)+\lambda\exp\left(-\frac{2(2^{j}+1)^2\Delta^2}{N}\right)\!\cdot\!\left(-\frac{2(2^j+1)^2}{N}\right)\!\cdot2\Delta\left(-\frac{N}{2}\right)\right)
\end{align*}
this differential is negative for all $j=0,\dots, j^*$, if and only if
\begin{align*}
    1+\lambda(2^j+1)^22\Delta < 0 \qquad\Longleftrightarrow\qquad -\lambda(2^j+1)^22\Delta> 1
\end{align*}
Firstly, using the not-nice assumption $\lambda>(9/4)\theta_{i}$, and $N\theta_{i}^2\ge8$
\begin{align*}
    2\Delta 
    \le \frac{N\theta_{i}}{2} + 2 - N\lambda
    < \frac{N\theta_{i}}{2} + 2 - \frac{9N\theta_{i}}{4} 
    = 2 - \frac{7N\theta_{i}}{4}
    \le - \frac{6N\theta_{i}}{4}
\end{align*}
Finally,
\begin{align*}
    -\lambda(2^j+1)^22\Delta
    >\lambda(2^j+1)^2\frac{3}{2}N\theta_{i}
    \ge(2^j+1)^2\frac{3}{2}N\theta_{i}^2\ge12
\end{align*}
Finishing the not-nice case for $\rho$, and thus finishing the proof.
\end{proof}
\section{Half-Margin Generalization Bound}\label{sec:HalfMargin}
In this section, we proceed to bound \eqref{eq:halfMargingLossDiff} as stated in the following lemma:
\begin{restatelem}{\ref{lem:halfMargin}}
    There exists $c>0$ such that with probability greater than $1-\delta$ over the sample $\Sr\sim\Dc^n$ it holds that:
    \begin{align*}
        \sup_{f\in\Cc_\Hc(\Theta_i, L_j)}\left(\left|\E_{\gr\sim\Qc_f}[\Lc_\Dc^{\theta_{i}/2}(\gr)-\Lc_\Sr^{\theta_{i}/2}(\gr)]\right|\right)\le c\left(\sqrt{\frac{(l_{j+1}+\exp\left(-N\theta_{i+1}^2/c\right))\ln(|\Cc_N|/\delta)}{n}}+\frac{\ln(|\Cc_N|/\delta)}{n}\right)
    \end{align*}
    When $N\ge 32\cdot\theta_{i+1}^{-2}$
\end{restatelem}
    \begin{proof}
        Assume $N\ge 32\theta_{i+1}^{-2}$ and $c\ge\sqrt{3}$.
        Firstly, by Jensen's inequality, to prove the Lemma it's enough to prove that, with probability greater than $1-\delta$ over the sample $\Sr\sim\Dc^n$, it holds that:
        \begin{align*}
            \sup_{f\in\Cc_\Hc(\Theta_i, L_j)}\left(\E_{\gr\sim\Qc_f}[\left|\Lc_\Dc^{\theta_{i}/2}(\gr)-\Lc_\Sr^{\theta_{i}/2}(\gr)\right|]\right)\le c\left(\sqrt{\frac{(l_{j+1}+\exp\left(-N\theta_{i+1}^2/c\right))\ln(|\Hc|^N/\delta)}{n}}+\frac{\ln(|\Hc|^N/\delta)}{n}\right)
        \end{align*}
        We start by fixing $g\in\Cc_N$ and remarking that:
        \begin{align*}
            \Lc_\Sr^{\theta_{i}/2}(g) 
            = \Pr_{(\xr,\yr)\sim S}[\yr g(\xr)\le \theta_{i}/2] 
            = \frac{1}{n}\sum_{i\in[n]}\indi{y_ig(x_i)\le\theta_{i}/2} 
        \end{align*}
        and additionally, $\E_{\Sr\sim\Dc^n}[\Lc_\Sr^{\theta_{i}/2}(g)]=\Lc_\Dc^{\theta_{i}/2}(g)$, hence using chernoff with 
        \begin{align*}
        \epsilon_N = \frac{c}{n} \left(\sqrt{\Lc_\Dc^{\theta_{i}/2}(g)n\ln(|\Cc_N|/\delta)} + \ln(|\Cc_N|/\delta)\right)
        \end{align*}
        yields the bound:
        \begin{align*}
            \Pr_{\Sr\sim\Dc^n}[\left|\Lc_\Dc^{\theta_{i}/2}(g)-\Lc_\Sr^{\theta_{i}/2}(g)\right|\ge\epsilon_N]
            &\le2\exp\left(-\frac{\epsilon_N^2n}{3\Lc_\Dc^{\theta_{i}/2}(g)}\right) 
        \end{align*}
        Now since:
        \begin{align*}
            \epsilon_N^2 = \frac{c^2}{n^2} \left(\sqrt{\Lc_\Dc^{\theta_{i}/2}(g)n\ln(|\Cc_N|/\delta)} + \ln(|\Cc_N|/\delta)\right)^2
            &\ge \frac{c^2}{n^2}\max\set{\Lc_\Dc^{\theta_{i}/2}(g)n\ln(|\Cc_N|/\delta),\ln(|\Cc_N|/\delta)^2}\\
            &= \frac{c^2}{n^2}\max\set{\Lc_\Dc^{\theta_{i}/2}(g)n,\ln(|\Cc_N|/\delta)}\ln(|\Cc_N|/\delta)\\
            &\ge \frac{c^2}{n^2}\Lc_\Dc^{\theta_{i}/2}(g)n\ln(|\Cc_N|/\delta)
        \end{align*}
        We end up with a bound:
        \begin{align*}
                \Pr_{\Sr\sim\Dc^n}&\left[\left|\Lc_\Dc^{\theta_{i}/2}(g)-\Lc_\Sr^{\theta_{i}/2}(g)\right|\ge c \left(\sqrt{\frac{\Lc_\Dc^{\theta_{i}/2}(g)n\ln(|\Cc_N|/\delta)}{n}} + \frac{\ln(|\Cc_N|/\delta)}{n}\right)\right]\\
                &\le2\exp\left(-\frac{(\frac{2}{n^2}\Lc_\Dc^{\theta_{i}/2}(g)n\ln(|\Cc_N|/\delta))n}{2\Lc_\Dc^{\theta_{i}/2}(g)}\right)\\
                &=2\exp\left(-\ln(|\Cc_N|/\delta)\right) = 2\delta/|\Cc_N|
        \end{align*}
        Halving $\delta$ and union bounding over $\Cc_N$, we get, with probability greater than $1-\delta$ over the sample $\Sr\sim\Dc^n$ that:
        \begin{align*}
            \forall g\in\Cc_N\qquad
            \left|\Lc_\Dc^{\theta_{i}/2}(g)-\Lc_\Sr^{\theta_{i}/2}(g)\right|\le c \left(\sqrt{\frac{\Lc_\Dc^{\theta_{i}/2}(g)\ln(|\Cc_N|/\delta)}{n}} + \frac{\ln(|\Cc_N|/\delta)}{n}\right)
        \end{align*}
        Taking expectation over $\gr\sim\Qc_f$, the bound becomes:
        \begin{align*}
            \E_{\gr\sim\Qc_f}\left[\left|\Lc_\Dc^{\theta_{i}/2}(\gr)-\Lc_\Sr^{\theta_{i}/2}(\gr)\right|\right]
            \le \E_{\gr\sim\Qc_f}\left[c \left(\sqrt{\frac{\Lc_\Dc^{\theta_{i}/2}(\gr)\ln(|\Cc_N|/\delta)}{n}} + \frac{\ln(|\Cc_N|/\delta)}{n}\right)\right]
        \end{align*}
        Since $f(x)=a\sqrt{x}+c$ is concave, Jensen's inequality yields:
        \begin{align*}
            \E_{\gr\sim\Qc_f}\left[\left|\Lc_\Dc^{\theta_{i}/2}(\gr)-\Lc_\Sr^{\theta_{i}/2}(\gr)\right|\right]
            \le c \left(\sqrt{\frac{\E_{\gr\sim\Qc_f}\left[\Lc_\Dc^{\theta_{i}/2}(\gr)\right]\ln(|\Cc_N|/\delta)}{n}} + \frac{\ln(|\Cc_N|/\delta)}{n}\right)
        \end{align*}
         Now we can take $\sup_{f\in\Cc_\Hc(\Theta_i,L_j)}$ on both sides  and drop it from the right side since it does not depend on $f$. To finish the proof, we just need to bound: $\E_{\gr\sim\Qc_f}[\Lc_\Dc^{\theta_{i}/2}(\gr)]\le l_{j+1}+\exp(-N\theta_{i}^2/c)$.

        Using the definition of margin loss and law of total expectation, we get:
        \begin{align*}
            \E_{\gr\sim\Qc_f}[\Lc_\Dc^{\theta_{i}/2}(\gr)] 
            &= \E_{\gr\sim\Qc_f}\left[\Pr_{(\xr,\yr)\sim\Dc}\left[\yr\gr(\xr)\le\frac{\theta_{i}}{2}\right]\right]\\ 
            &= \E_{(\xr,\yr)\sim\Dc}\left[\Pr_{\gr\sim\Qc_f}\left[\yr\gr(\xr)\le\frac{\theta_{i}}{2}\right]\right]\\ 
            &= \E_{(\xr,\yr)\sim\Dc}\left[\Pr_{\gr\sim\Qc_f}\left[\yr\gr(\xr)\le\frac{\theta_{i}}{2}\right]\Bigg|\enspace \yr f(\xr)>\frac{3}{4}\theta_{i} \right]
            \cdot\Pr_{(\xr,\yr)\sim\Dc}\left[\yr f(\xr) > \frac{3}{4}\theta_{i}\right]\\ 
            &\qquad\qquad+ \E_{(\xr,\yr)\sim\Dc}\left[\Pr_{\gr\sim\Qc_f}\left[\yr\gr(\xr)\le\frac{\theta_{i}}{2}\right]\Bigg|\enspace \yr f(\xr)\le\frac{3}{4}\theta_{i} \right]
            \cdot\Lc_\Dc^{(3/4)\theta_{i}}(f)\\
            &\le\E_{(\xr,\yr)\sim\Dc}\left[\Pr_{\gr\sim\Qc_f}\left[\yr\gr(\xr)\le\frac{\theta_{i}}{2}\right]\Bigg|\enspace \yr f(\xr)>\frac{3}{4}\theta_{i} \right]
            +\Lc_\Dc^{(3/4)\theta_{i}}(f)
        \end{align*}
        Now replacing the expectation with a supremum over margins and utilizing lemma \ref{lem:condMonotonicity} and \ref{lem:marginProbIsBinom},
        \begin{align*}
            \Lc_\Dc^{(3/4)\theta_{i}}(f)&
            +\E_{(\xr,\yr)\sim\Dc}\left[\Pr_{\gr\sim\Qc_f}\left[\yr\gr(\xr)\le\frac{\theta_{i}}{2}\right]\Bigg|\enspace \yr f(\xr)>\frac{3}{4}\theta_{i} \right]\\
            &\le\Lc_\Dc^{(3/4)\theta_{i}}(f)
            +\sup_{\lambda>(3/4)\theta_{i}}\left(\Pr_{\gr\sim\Qc_f}\left[\yr\gr(\xr)\le\frac{\theta_{i}}{2}\enspace\Bigg|\enspace\yr f(\xr) =\lambda \right]\right)\\
            &\le\Lc_\Dc^{(3/4)\theta_{i}}(f)
            +\Pr_{\gr\sim\Qc_f}\left[\yr\gr(\xr)\le\frac{\theta_{i}}{2}\enspace\Bigg|\enspace\yr f(\xr) =\frac{3}{4}\theta_{i} \right]\\
            &\le\Lc_\Dc^{(3/4)\theta_{i}}(f)
            +\Pr_{\Hr\sim\text{binom}(N,\frac{1}{2}+\frac{3}{8}\theta_{i})}\left[\Hr\le \floor{(\frac{\theta_{i}}{4}+\frac{1}{2})N}+1\right]\\
            &=\Lc_\Dc^{(3/4)\theta_{i}}(f)
            +\Pr_{\Hr\sim\text{binom}(N,\frac{1}{2}+\frac{3}{8}\theta_{i})}\left[\Hr-\E[\Hr]\le \floor{(\frac{\theta_{i}}{4}+\frac{1}{2})N}+1-\E[\Hr]\right]
        \end{align*}
        Remarking that $N\theta_{i}\ge16$ by assumption and $\E[\Hr]=\frac{N}{2}+\frac{3}{8}N\theta_{i}$:
        \begin{align*}
            -\frac{N\theta_{i}}{8}
            =\frac{N\theta_{i}}{4}-\frac{3}{8}N\theta_{i}
            <\floor{(\frac{\theta_{i}}{4}+\frac{1}{2})N}+1-\E[\Hr] 
            \le \frac{N\theta_{i}}{4}+1-\frac{3}{8}N\theta_{i} 
            = 1-\frac{N\theta_{i}}{8}\le-\frac{N\theta_{i}}{16}\le 0
        \end{align*}
        Allows us to use the Hoeffding bound on the probability above, hence in total we archive the bound:
        \begin{align*}
            \E_{\gr\sim\Qc_f}[\Lc_\Dc^{\theta_{i}/2}(\gr)] 
            &< \Lc_\Dc^{(3/4)\theta_{i}}(f)+\exp\left(-\frac{2(\floor{(\frac{\theta_{i}}{4}+\frac{1}{2})N}+1-\E[\Hr])^2}{N}\right)\\
            &\le \Lc_\Dc^{(3/4)\theta_{i}}(f)+\exp\left(-\frac{2N^2\theta_{i}^2}{256N}\right)\\
            &\le \Lc_\Dc^{(3/4)\theta_{i}}(f)+\exp\left(-\frac{N\theta_{i}^2}{128}\right)
        \end{align*}
        which finishes the proof, since by definition $2\theta_i=\theta_{i+1}$, $\Lc_\Dc^{(3/4)\theta_i}(f)\le l_{j+1}$ and  $|\Cc_N|=|\Hc|^N$
    \end{proof}

\section{Reducing to Smaller Tasks}\label{sec:BuckedRedux}
In this section, we prove the following claim:
\begin{restatelem}{\ref{lem:withinConst}}
    There is a constant $c>0$, such that for any $0<\delta<1$ and any $\Theta_i=\left(\theta_i, \theta_{i+1}\right]$, it holds with probability at least $1-\delta$ over the sample $\Sr\sim\Dc^n$
    \begin{align*}
        \forall f\in\Hc:\qquad \frac{\Lc_\Dc^{(3/4)\theta_i}(f)}{2}- \Lc_\Sr^{\theta_i}(f)\le c\left(\frac{\ln(\theta_{i+1}^2n/\ln(|\Hc|))\ln(|\Hc|)}{\theta_{i+1}^2n} + \frac{\ln(e/\delta)}{n}\right)
    \end{align*}
    When $N\ge 64\cdot\theta_{i+1}^{-2}$
\end{restatelem}
which together with Claim \ref{clm:1+2comb} allows us to reduce the main theorem into smaller tasks.

\begin{proof}
    Firstly fix $g\in\Cc_N$, then we remark that:
    \begin{align*}
        \Lc_\Sr^{(7/8)\theta_i}(g)- \Pr_{(\xr,\yr)\sim S}\left[\yr f(\xr) > \theta_i \,\wedge\, \yr g(\xr)\le \frac{7}{8}\theta_i\right]
        &= \Pr_{(\xr,\yr)\sim S}\left[\yr f(\xr)\le\theta_i \,\wedge\, \yr g(\xr)\le \frac{7}{8}\theta_i\right]\le\Lc_\Sr^{\theta_i}(f)\\
        \Lc_\Dc^{(3/4)\theta_i}(f)- \Pr_{(\xr,\yr)\sim \Dc}\left[\yr f(\xr) \le \frac{3}{4}\theta_i \,\wedge\, \yr g(\xr) > \frac{7}{8}\theta_i\right]
        &= \Pr_{(\xr,\yr)\sim \Dc}\left[\yr f(\xr)\le\frac{3}{4}\theta_i \,\wedge\, \yr g(\xr)\le \frac{7}{8}\theta_i\right]\le\Lc_\Sr^{(7/8)\theta_i}(g)
    \end{align*}
    Hence we can bound the difference:
    \begin{align}
        \frac{\Lc_\Dc^{(3/4)\theta_i}(f)}{2}- \Lc_\Sr^{\theta_i}(f)
        &\le \frac{\Lc_\Sr^{(7/8)\theta_i}(g)}{2} - \Lc_\Sr^{(7/8)\theta_i}(g)\label{eq:withinContanttotalbound}\\ 
        &+ \Pr_{(\xr,\yr)\sim \Dc}\left[\yr f(\xr) \le \frac{3}{4}\theta_i \,\wedge\, \yr g(\xr) > \frac{7}{8}\theta_i\right]\notag\\
        &+ \Pr_{(\xr,\yr)\sim S}\left[\yr f(\xr) > \theta_i \,\wedge\, \yr g(\xr)\le \frac{7}{8}\theta_i\right]\notag
    \end{align}
    We bound the first term using Bernstein's inequality. 
    
    Just like in Section \ref{sec:HalfMargin}, we use that:
    \begin{align*}
        \E_{\Sr\sim\Dc^n}[\Lc_\Sr^{(7/8)\theta_i}(g)]=\Lc_\Dc^{(7/8)\theta_i}(g) 
        \qquad\text{and}\qquad
        \Lc_\Sr^{(7/8)\theta_i}(g) 
        = \Pr_{(\xr,\yr)\sim S}\left[\yr g(\xr)\le \frac{7}{8}\theta_i\right] 
        = \frac{1}{n}\sum_{i\in[n]}\indi{\yr_ig(\xr_i)\le (7/8)\theta_i} 
    \end{align*}
    Hence
    \begin{align*}
        \Pr_{\Sr\sim\Dc^n}\left[\Lc_\Dc^{(7/8)\theta_i}(g)-\Lc_\Sr^{(7/8)\theta_i}(g)\ge \frac{t}{n}\right]\le \exp\left(-\frac{\frac{1}{2}t^2}{n\Lc_\Dc^{(7/8)\theta_i}(g) + \frac{1}{3}t}\right)
    \end{align*}
    Now choosing $t$
    \begin{align*}
        t = n\cdot\left(\frac{\Lc_\Dc^{(7/8)\theta_i}(g)}{2} + \frac{Z}{n} \right)\qquad\text{with}\qquad Z = 32\cdot\ln(1/\delta')
    \end{align*}
    and remarking that
    \begin{align*}
        t^2 \ge n^2\max\left(\left(\frac{\Lc_\Dc^{(7/8)\theta_i}(g)}{2}\right)^2, \left(\frac{Z}{n}\right)^2\right)
        \ge \frac{n^2}{4}\max\left(\Lc_\Dc^{(7/8)\theta_i}(g), \frac{Z}{n}\right)^2
    \end{align*}
    As well as:
    \begin{align*}
        n\Lc_\Dc^{(7/8)\theta_i}(g) + \frac{1}{3}t 
        \le n\left(2\Lc_\Dc^{(7/8)\theta_i}(g) + \frac{Z}{n}\right)
        \le 4n\max\left(\Lc_\Dc^{(7/8)\theta_i}(g),\frac{Z}{n}\right)
    \end{align*}
    gives the bound:
    \begin{align*}
        \Pr_{\Sr\sim\Dc^n}\left[\frac{\Lc_\Dc^{(7/8)\theta_i}(g)}{2}-\Lc_\Sr^{(7/8)\theta_i}(g)\ge\frac{Z}{n}\right]
        &\le \exp\left(-\frac{\frac{n^2}{8}\max\left(\Lc_\Dc^{(7/8)\theta_i}(g), \frac{Z}{n}\right)^2}{4n\max\left(\Lc_\Dc^{(7/8)\theta_i}(g),\frac{Z}{n}\right)}\right)\\
        &\le \exp\left(-\frac{Z}{32}\right) = \delta'
    \end{align*}
    setting $\delta' = \delta/|\Cc_N| = \delta/|\Hc|^N$, and union bounding over $\Cc_N$, we get, with probability $1-\delta$ over the sample $\Sr\sim\Dc^n$ that:
    \begin{align*}
        \forall g\in\Cc_n\qquad \frac{\Lc_\Dc^{(7/8)\theta_i}(g)}{2}-\Lc_\Sr^{(7/8)\theta_i}(g)\le32\left(\frac{\ln(1/\delta)}{n}+\frac{\ln(|\Hc|)N}{n}\right)
    \end{align*}
    This bounds one part of \eqref{eq:withinContanttotalbound}, for all $g\in\Cc_N$, i.e. it holds in expectation over $g\sim\Qc_f$. This allows us to take the expectation over $g\sim\Qc_f$, in \eqref{eq:withinContanttotalbound}
    \begin{align}
        \frac{\Lc_\Dc^{(3/4)\theta_i}(f)}{2}- \Lc_\Sr^{\theta_i}(f)
        &\le 32\left(\frac{\ln(1/\delta)}{n}+\frac{\ln(|\Hc|)N}{n}\right)\label{eq:expwithinContanttotalbound}\\ 
        &+ \E_{\gr\sim\Qc_f}\left[\Pr_{(\xr,\yr)\sim \Dc}\left[\yr f(\xr) \le \frac{3}{4}\theta_i \,\wedge\, \yr \gr(\xr) > \frac{7}{8}\theta_i\right]\right]\notag\\
        &+ \E_{\gr\sim\Qc_f}\left[\Pr_{(\xr,\yr)\sim S}\left[\yr f(\xr) > \theta_i \,\wedge\, \yr \gr(\xr)\le \frac{7}{8}\theta_i\right]\right]\notag
    \end{align}
    Finally, we bound the last two terms, as the computations are similar, we will focus on boundin the first of these terms.
    
    Just like section \ref{sec:HalfMargin}, we replace the expectation with a supremum over margins and utilize lemma \ref{lem:condMonotonicity} and \ref{lem:marginProbIsBinom},
    \begin{align*}
        \E_{\gr\sim\Qc_f}\left[\Pr_{(\xr,\yr)\sim S}\left[\yr f(\xr) > \theta_i \,\wedge\, \yr \gr(\xr)\le \frac{7}{8}\theta_i\right]\right]
        &\le\E_{\gr\sim\Qc_f}\left[\Pr_{(\xr,\yr)\sim S}\left[ \yr \gr(\xr)\le \frac{7}{8}\theta_i\,\Bigg|\, \yr f(\xr) > \theta_i\right]\right]\\
        &\le\sup_{\lambda>\theta_i}\left(\Pr_{(\xr,\yr)\sim S}\left[\yr g(\xr)\le \frac{7}{8}\theta_i \,\Bigg|\, \yr f(\xr)=\lambda\right]\right)\\
        &\le \Pr_{(\xr,\yr)\sim S}\left[\yr g(\xr)\le \frac{7}{8}\theta_i \,\Bigg|\, \yr f(\xr)=\theta_i\right]\\
        &=\Pr_{\Hr\sim\text{binom}(N,\frac{1}{2}+\frac{\theta_i}{2})}\left[\Hr\le \floor{\left(\frac{7\theta_i}{16}+\frac{1}{2}\right)N}+1\right]\\
        &=\Pr_{\Hr\sim\text{binom}(N,\frac{1}{2}+\frac{\theta_i}{2})}\left[\Hr - \E[\Hr]\le \floor{\left(\frac{7\theta_i}{16}+\frac{1}{2}\right)N}+1 - \E[\Hr]\right]
    \end{align*}
    Remarking that $N\theta_i\ge32$ and $\E[\Hr]=\frac{N}{2}+\frac{1}{2}N\theta_i$:
    \begin{align*}
        -\frac{N\theta_i}{16}
        =\frac{7N\theta_i}{16}-\frac{N\theta_i}{2}
        <\floor{\left(\frac{7\theta_i}{16}+\frac{1}{2}\right)N}+1 - \E[\Hr]
        \le \frac{7N\theta_i}{16}-\frac{N\theta_i}{2}+1
        = -\frac{N\theta_i}{16} +1
        \le -\frac{N\theta_i}{32}
        \le 0
    \end{align*}
    Allows us to use the Hoeffding bound on the probability above, hence in total we archive the bound:
    \begin{align*}
        \E_{\gr\sim\Qc_f}\left[\Pr_{(\xr,\yr)\sim S}\left[\yr f(\xr) > \theta_i \,\wedge\, \yr \gr(\xr)\le \frac{7}{8}\theta_i\right]\right]
        &\le\exp\left(-\frac{2(\floor{\left(\frac{7\theta_i}{16}+\frac{1}{2}\right)N}+1 - \E[\Hr])^2}{N}\right)\\
        &\le\exp\left(-\frac{2N^2\theta_i^2}{2^{10} N}\right)
        =\exp\left(-\frac{N\theta_{i+1}^2}{2^{11}}\right)
    \end{align*}
    Similar computation gives the same bound for the other term, including both in \eqref{eq:expwithinContanttotalbound} yields:
    \begin{align*}
        \frac{\Lc_\Dc^{(3/4)\theta_i}(f)}{2}- \Lc_\Sr^{\theta_i}(f)
        &\le 32\left(\frac{\ln(1/\delta)}{n}+\frac{\ln(|\Hc|)N}{n}\right) + 2\exp\left(-\frac{N\theta_{i+1}^2}{2^{11}}\right)
    \end{align*}
    Choosing $N = 2^{11}\theta_{i+1}^{-2}\ln(\theta_{i+1}^2n/\ln(|\Hc|))$ then yields:
    \begin{align*}
        \frac{\Lc_\Dc^{(3/4)\theta_i}(f)}{2}- \Lc_\Sr^{\theta_i}(f)
        &\le 32\frac{\ln(1/\delta)}{n}+2^{15}\frac{\ln(|\Hc|)\ln(\theta_{i+1}^2n/\ln(|\Hc|))}{\theta_{i+1}^2n} + 2\frac{\ln(|\Hc|)}{\theta_{i+1}^2n}
    \end{align*}
    which gives the wanted result.
\end{proof}

\section*{Acknowledgement}
This work is funded by the European Union (ERC, TUCLA, 101125203). Views and opinions expressed are however those of the author(s) only and do not necessarily reflect those of the European Union or the European Research Council. Neither the European Union nor the granting authority can be held responsible for them.    

\bibliography{Bibliography}
\bibliographystyle{abbrvnat}

\newpage
\appendix
\section{Additional Proofs}\label{sec:deferred}
\subsection{Reduction Claims}\label{subsec:reduxClaim}
\begin{observation}
    We can assume without loss of generality, that all margins lie in the interval: $[-c_\theta,c_\theta]$ for a constant $c_\theta\in(0,1)$
\end{observation}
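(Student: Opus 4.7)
The plan is to reduce any voting classifier $f \in \Cc(\Hc)$ to a rescaled one whose margins provably lie in $[-c_\theta, c_\theta]$ by adjoining two constant base classifiers. Concretely, fix $c_\theta \in (0,1)$ (e.g.\ $c_\theta = 1/\sqrt{2}$, as chosen in Lemma~\ref{lem:lipConstHard}), let $\mathbf{1}, -\mathbf{1} : \Xc \to \set{-1,+1}$ denote the constant hypotheses, and augment the base set to $\Hc' = \Hc \cup \set{\mathbf{1}, -\mathbf{1}}$. Given $f = \sum_{h \in \Hc} a_h h \in \Cc(\Hc)$, I would define
\begin{align*}
f' \;=\; \sum_{h \in \Hc} c_\theta a_h\, h \;+\; \frac{1-c_\theta}{2}\,\mathbf{1} \;+\; \frac{1-c_\theta}{2}\,(-\mathbf{1}).
\end{align*}
The weights are nonnegative (since $c_\theta \in (0,1)$) and sum to $c_\theta + (1-c_\theta) = 1$, so $f' \in \Cc(\Hc')$. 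Since the contributions of $\mathbf{1}$ and $-\mathbf{1}$ cancel pointwise, $f'(x) = c_\theta f(x)$ for every $x \in \Xc$, and therefore the margin $\yr f'(\xr) = c_\theta \yr f(\xr)$ lies in $[-c_\theta, c_\theta]$ as required.

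Next, I would verify that this rescaling commutes with the quantities appearing in Theorem~\ref{thm:FullStatement}. For any $\theta \in (0,1]$ and any sample $\Sr$,
\begin{align*}
\Lc_\Sr^{c_\theta \theta}(f') = \Pr_{(\xr,\yr)\sim \Sr}[\yr f'(\xr) \le c_\theta \theta] = \Pr_{(\xr,\yr)\sim \Sr}[\yr f(\xr) \le \theta] = \Lc_\Sr^\theta(f),
\end{align*}
and $\Lc_\Dc(f') = \Lc_\Dc(f)$, since $f$ and $f' = c_\theta f$ have identical signs. Moreover $|\Hc'| \le |\Hc| + 2$, so $\ln|\Hc'| \le \ln|\Hc| + \ln 3$ whenever $|\Hc| \ge 1$, and every $\ln(\cdot)$ appearing in the theorem changes by at most a constant factor that can be absorbed into the universal constant $c$. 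Consequently, if the theorem is proved for every $f'' \in \Cc(\Hc')$ whose margins all lie in $[-c_\theta, c_\theta]$ and every admissible margin $\theta' \in (\sqrt{e\ln|\Hc'|/n}, c_\theta]$, applying it to $f'' = f'$ with $\theta' = c_\theta \theta$ recovers the desired bound for $f$ and $\theta$.

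The main point to track carefully is that the parameter range $\theta \in (\sqrt{e \ln|\Hc|/n}, 1]$ translates correctly under the rescaling. Using the reduced form requires $c_\theta \theta > \sqrt{e \ln|\Hc'|/n}$, i.e.\ $\theta^2 > e \ln|\Hc'|/(c_\theta^2 n)$. Since $\ln|\Hc'| = \ln|\Hc| + O(1)$ and $c_\theta$ is a universal constant, this merely shrinks the allowed lower endpoint of $\theta$ by a constant factor, which is absorbed by enlarging $c$. When $\theta > c_\theta$ we instead observe that the right-hand side of Theorem~\ref{thm:FullStatement} evaluated at $\theta = c_\theta$ already upper bounds it for any larger $\theta$ up to a constant, so the upper endpoint loses nothing either. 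I expect this boundary bookkeeping to be the only subtle part; the rest of the argument is just the pointwise identity $f'(x) = c_\theta f(x)$ propagated through each definition.
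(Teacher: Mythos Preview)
Your proposal is correct and takes essentially the same approach as the paper: both adjoin the constant $+1$ and $-1$ hypotheses and map $f\mapsto c_\theta f+\tfrac{1-c_\theta}{2}(\mathbf{1}+(-\mathbf{1}))$, observing that this scales all margins by $c_\theta$, preserves signs, and increases $|\Hc|$ by at most two. Your write-up is in fact more thorough than the paper's, which omits the explicit bookkeeping on how $\theta$, $\Lc_\Sr^\theta$, and the admissible margin range transform.
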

\begin{proof}
    Define hypotheses $h_+$ and $h_-$ to be the constant $+1$ and $-1$ hypotheses, then let $\overline{\Hc} = \Hc\cup\set{h_+,h_-}$ be the hypothesis set with these additions.
    
    We want to map each voting classifier $f\in\Cc_\Hc$ to another voting classifier $\overline{f}\in\Cc_{\overline{\Hc}}$, such that: all $(\xr,\yr)\sim\Dc, \yr f(\xr)\in[-1,+1]$ implies that $ \yr\overline{f}(\xr)\in[-c_\theta,c_\theta]$ and $\sign(\overline{f}) = \sign(f)$
    
    Consider the function $\Phi$
    $$\Phi:\Cc_\Hc \to \Cc_{\overline{\Hc}}
    \qquad\qquad
    \text{given by:}
    \qquad\qquad 
    f\mapsto \overline{f} = c_\theta f+\frac{1-c_\theta}{2}(h_+ + h_-)$$
    Firstly, $\overline{f}\in\Cc_{\overline{\Hc}}$, since $f\in\Cc_\Hc$. Then, $h_+(x) +h_-(x) = 0$ for all $x$, implying the first condition. Lastly, $\sign(h_+)=-\sign(h_-)$, yields the last condition. 

    Hence by exchanging $\Hc$ with $\overline{\Hc}$, and mapping each $f$ to $\overline{f}$, we get smaller margins. This replacement is reversible and only changes the size of the hypothesis class by an additive constant. Hence we can make the exchange without losing generality. 
\end{proof}

\begin{restateclm}{\ref{clm:1+2comb}}
    For any $0<\delta<1$, the following holds:
    \begin{enumerate}
        \item If the sample $\Sr\sim\Dc^n$ satisfies \eqref{eq:Piecewisemain_theorem} and \eqref{eq:withinConst} simultaneously for all $(\Theta_i, L_j)$ and $\Theta_i$, with slightly different constants, then for that sample, \eqref{eq:reduced_main_theorem} holds for all margins $\theta\in\left(\sqrt{e \ln(|\Hc|)/n},1\right]$ and all $f\in\Cc(\Hc)$.
        \item With probability at least $1-\delta$ over the sample $\Sr\sim\Dc^n$,  the above event happens 
    \end{enumerate}
\end{restateclm}

\begin{proof}
    Let in the following $0<\delta<1$, $\Theta_i = \left(\theta_i,\theta_{i+1}\right]$ with $\theta_{i+1}= e2^i\sqrt{\ln|\Hc|/n}$ for $i=1,\dots,\log_2((e/c_\theta)\sqrt{n/\ln|\Hc|})$, $L_0=[0,n^{-1}]$ and $L_j = \left(l_j,l_{j+1}\right]$, with $l_{j+1}=2^jn^{-1}$ for $j=1,\dots,\log_2(n)$. We start by proving part 1.
    
    We assume that $\Sr\sim\Dc^n$ is a sample such that for all $(\Theta_i, L_j)$ and $\Theta_i$, it holds that:
    \begin{align}
        \sup_{\substack{f\in\Cc_\Hc(\Theta_i, L_j)\\\theta\in \Theta_i}}|\Lc_\Dc(f)- \Lc_\Sr^\theta(f)|
        \le c_{\ref{lem:PiecewiseFullStatement}}\left(\sqrt{l_{j+1}\left(\frac{\ln(e/l_{j+1})\ln(|\Hc|)}{\theta_{i+1}^2 n}+\frac{\ln(e/\delta)}{n}\right)}+\frac{\ln(e/l_{j+1})\ln(|\Hc|)}{\theta_{i+1}^2 n}+\frac{\ln(e/\delta)}{n}\right)\label{eq:clm2Subtasks}
    \end{align}
    and
    \begin{align}
        \forall f\in\Cc_\Hc:\qquad 
        \frac{\Lc_\Dc^{(3/4)\theta_i}(f)}{2}- \Lc_\Sr^{\theta_i}(f)\le c_{\ref{lem:withinConst}}\left(\frac{\ln(\theta_{i+1}^2n/\ln(|\Hc|))\ln(|\Hc|)}{\theta_{i+1}^2n} + \frac{\ln(e/\delta)}{n}\right)\label{eq:clm2withinConst}
    \end{align}
    Now fix $\theta\in\left(\sqrt{e \ln(|\Hc|)/n},1\right]$ and $f\in\Cc_\Hc$. 
    
    Then there exists $i,j$ such that $\theta\in\Theta_i = \left(\theta_i,\theta_{i+1}\right]$ and $\Lc_\Dc^{(3/4)\theta_i}(f)\in L_j = \left(l_j,l_{j+1}\right]$, where $\theta_{i+1}= e2^i \sqrt{\ln|\Hc|/n}$ and $l_{j+1}=2^in^{-1}$\\

    We split into two cases, depending on whether or not it holds that:
    \begin{align*}
        \Lc_\Dc^{(3/4)\theta_i}(f) \le 8 c_{\ref{lem:withinConst}}\left(\frac{\ln(\theta_{i+1}^2n/\ln(|\Hc|))\ln(|\Hc|)}{\theta_{i+1}^2 n}+\frac{\ln(e/\delta)}{n}\right)
    \end{align*}
    Firstly, when this is true, we remark that: $\Lc_\Dc(f)\le \Lc_\Dc^{(3/4)\theta_i}(f)$, $\theta\le\theta_{i+1}$ and $(\ln(e\theta^2n/\ln(|\Hc|))/\theta^2n)$ is decreasing in $\theta\ge \sqrt{\ln(|\Hc)/n}$, hence:
    \begin{align*}
        \Lc_\Dc(f)
        \le\Lc_\Dc^{(3/4)\theta_i}(f)
        \le 8 c_{\ref{lem:withinConst}}\left(\frac{\ln(\theta_{i+1}^2n/\ln(|\Hc|))\ln(|\Hc|)}{\theta_{i+1}^2 n}+\frac{\ln(e/\delta)}{n}\right)
        \le 8 c_{\ref{lem:withinConst}}\left(\frac{\ln(\theta^2n/\ln(|\Hc|))\ln(|\Hc|)}{\theta^2 n}+\frac{\ln(e/\delta)}{n}\right)
    \end{align*}
    yielding \eqref{eq:reduced_main_theorem} in this case.

    For the other case, we remark that by definition of $j$, $\Lc_\Dc^{(3/4)\theta_i}(f)\le l_{j+1}$ hence we know:
    \begin{align}
        l_{j+1}\notag
        \ge \Lc_\Dc^{(3/4)\theta_i}(f) 
        &\ge 8 c_{\ref{lem:withinConst}}\left(\frac{\ln(\theta_{i+1}^2n/\ln(|\Hc|))\ln(|\Hc|)}{\theta_{i+1}^2 n}+\frac{\ln(e/\delta)}{n}\right)\\
        &\ge \frac{\ln(\theta_{i+1}^2n/\ln(|\Hc|))\ln(|\Hc|)}{\theta_{i+1}^2 n}+\frac{\ln(e/\delta)}{n}
        \ge \frac{\ln(|\Hc|)}{\theta_{i+1}^2 n}
        \label{eq:clm2LjBound}
    \end{align}
    Firstly, since this is greater than $n^{-1}$, we know $j\ne 0$ and therefore $l_{j+1}=2l_j$ and $l_{j+1}\le 2\Lc_\Dc^{(3/4)\theta_i}(f)$. Applying this and \eqref{eq:clm2LjBound} in \eqref{eq:clm2Subtasks} gives:
    \begin{align*}
        \Lc_\Sr^\theta(f)
        &\le \Lc_\Dc(f)+c_{\ref{lem:PiecewiseFullStatement}}\left(\sqrt{l_{j+1}\left(\frac{\ln(e/l_{j+1})\ln(|\Hc|)}{\theta_{i+1}^2 n}+\frac{\ln(e/\delta)}{n}\right)}+\frac{\ln(e/l_{j+1})\ln(|\Hc|)}{\theta_{i+1}^2 n}+\frac{\ln(e/\delta)}{n}\right)\\
        &\le \Lc^{(3/4)\theta_i}_\Dc(f)+c_{\ref{lem:PiecewiseFullStatement}}\left(\sqrt{2\Lc^{(3/4)\theta_i}_\Dc(f)\left(\frac{\ln(e\theta_{i+1}^2 n/\ln(|\Hc|))\ln(|\Hc|)}{\theta_{i+1}^2 n}+\frac{\ln(e/\delta)}{n}\right)}+\frac{\ln(e\theta_{i+1}^2 n/\ln(|\Hc|))\ln(|\Hc|)}{\theta_{i+1}^2 n}+\frac{\ln(e/\delta)}{n}\right)\\
        &\le \Lc^{(3/4)\theta_i}_\Dc(f)+c_{\ref{lem:PiecewiseFullStatement}}\left(\sqrt{2\Lc^{(3/4)\theta_i}_\Dc(f)\Lc^{(3/4)\theta_i}_\Dc(f)}+\Lc^{(3/4)\theta_i}_\Dc(f)\right)\le 3c_{\ref{lem:PiecewiseFullStatement}}\Lc^{(3/4)\theta_i}_\Dc(f)
    \end{align*}
    Hence $(3c_{\ref{lem:PiecewiseFullStatement}})^{-1}\Lc_\Sr^\theta(f) \le \Lc_\Dc^{(3/4)\theta_i}(f)\le l_{j+1}$, but more importantly $l_{j+1}^{-1}\le 3c_{\ref{lem:PiecewiseFullStatement}}\Lc_\Sr^\theta(f)^{-1}$.\\

    Additionally, applying \eqref{eq:clm2LjBound} together with $\Lc_\Sr^\theta(f)\ge\Lc_\Sr^{\theta_i}(f)$ in \eqref{eq:clm2withinConst} gives:
    \begin{align*}
        \Lc_\Sr^\theta(f)\ge\Lc_\Sr^{\theta_i}(f)
        &\ge \frac{\Lc_\Dc^{(3/4)\theta_i}(f)}{2} -c_{\ref{lem:withinConst}}\left(\frac{\ln(\theta_{i+1}^2n/\ln(|\Hc|))\ln(|\Hc|)}{\theta_{i+1}^2n} + \frac{\ln(e/\delta)}{n}\right)\\
        &\ge \frac{l_{j+1}}{4} - \frac{l_{j+1}}{8} = \frac{l_{j+1}}{8}
    \end{align*}
    Hence $l_{j+1} \le 8\Lc_\Sr^\theta(f)$, combining this with $l_{j+1}^{-1}\le\theta_{i+1}^2n/\ln(|\Hc|)$ and the previous conclusion $l_{j+1}^{-1}\le 3c_{\ref{lem:PiecewiseFullStatement}}\Lc_\Sr^\theta(f)^{-1}$ and applying to \eqref{eq:clm2Subtasks} finally gives us:
    \begin{align*}
        \Lc_\Dc(f)
        &\le \Lc_\Sr^\theta(f) + c_{\ref{lem:PiecewiseFullStatement}}
        \left(\sqrt{l_{j+1}\left(\frac{\ln(e/l_{j+1})\ln(|\Hc|)}{\theta_{i+1}^2 n}+\frac{\ln(e/\delta)}{n}\right)}+\frac{\ln(e/l_{j+1})\ln(|\Hc|)}{\theta_{i+1}^2 n}+\frac{\ln(e/\delta)}{n}\right)\\
        &\le \Lc_\Sr^\theta(f) + c_{\ref{lem:PiecewiseFullStatement}}
        \left(\sqrt{8\Lc_\Sr^\theta(f)\left(\frac{\ln(e3c_{\ref{lem:PiecewiseFullStatement}}/\Lc_\Sr^\theta(f))\ln(|\Hc|)}{\theta_{i+1}^2 n}+\frac{\ln(e/\delta)}{n}\right)}+\frac{\ln(e\theta_{i+1}^2n/\ln(|\Hc|))\ln(|\Hc|)}{\theta_{i+1}^2 n}+\frac{\ln(e/\delta)}{n}\right)
    \end{align*}
    which finishes the proof of the first part, since $\theta\le\theta_{i+1}$\\

    \noindent
    Now that we have established the implications of the events \eqref{eq:Piecewisemain_theorem} and \eqref{eq:withinConst} holding simultaneously, we can continue by proving that this happens with probability $1-\delta$ over $\Sr\sim\Dc^n$.

    From Lemmas \ref{lem:PiecewiseFullStatement} and \ref{lem:withinConst}, we know \eqref{eq:clm2Subtasks} holds with probability $1-\delta_{i,j}$, and \eqref{eq:clm2withinConst} holds with probability $1-\delta_j$. Let now 
    \begin{align*}
    \delta_{i,j} = \left(\frac{\delta}{e}\right)^3\exp\left(-\frac{\ln(e/l_{j+1})\ln(|\Hc|)}{\theta_{i+1}^2}\right)
    \qquad\text{and}\qquad    
    \delta_{i} = \left(\frac{\delta}{e}\right)^3\exp\left(-\frac{\ln(e\theta_{i+1}^2 n)\ln(|\Hc|)}{\theta_{i+1}^2}\right)
    \end{align*}
    Then:
    \begin{align*}
        \sum_{i=1}^{\log_2((c_\theta/e) \sqrt{n/\ln|\Hc|})}\sum_{j=0}^{\log_2(n)}\delta_{i,j}
        &=\sum_{i=1}^{\log_2((c_\theta/e) \sqrt{n/\ln|\Hc|})}\sum_{j=0}^{\log_2(n)}\left(\frac{\delta}{e}\right)^3
        \exp\left(-\frac{\ln(e/l_{j+1})\ln(|\Hc|)}{\theta_{i+1}^2}\right)\\
        &=\left(\frac{\delta}{e}\right)^3\cdot\sum_{i=1}^{\log_2((c_\theta/e) \sqrt{n/\ln|\Hc|})}\sum_{j=0}^{\log_2(n)}
        \exp\left(-\frac{\ln(\frac{en}{2^j})\ln(|\Hc|)n}{2^{2i}}\right)\\
        &=\left(\frac{\delta}{e}\right)^3\cdot\sum_{i=1}^{\log_2((c_\theta/e) \sqrt{n/\ln|\Hc|})}\sum_{j=0}^{\log_2(n)}
        \left(\frac{en}{2^j}\right)^{-\frac{n\ln(|\Hc|)}{2^{2i}}}
    \end{align*}
    summing the other direction, i.e substituting $i\leftarrow\log_2((c_\theta/e) \sqrt{n/\ln|\Hc|})+1-i$ and $j \leftarrow \log_2(n)-j$, this sum equals:
    \begin{align*}
        &=\left(\frac{\delta}{e}\right)^3\cdot\sum_{i=1}^{\log_2((c_\theta/e) \sqrt{n/\ln|\Hc|})}\sum_{j=0}^{\log_2(n)}
        \left(\frac{en}{2^{\log_2(n)-j}}\right)^{-\frac{n\ln(|\Hc|)}{2^{2-2i+2\log_2((c_\theta/e) \sqrt{n/\ln|\Hc|})}}}\\
        &=\left(\frac{\delta}{e}\right)^3\cdot\sum_{i=1}^{\log_2((c_\theta/e) \sqrt{n/\ln|\Hc|})}\sum_{j=0}^{\log_2(n)}
        \left(e2^j\right)^{-\frac{\ln(|\Hc|)^2e^22^{2i-2}}{c_\theta^2}}\\
        &\le\left(\frac{\delta}{e}\right)^3\cdot\sum_{i=1}^{\log_2((c_\theta/e) \sqrt{n/\ln|\Hc|})}\sum_{j=0}^{\log_2(n)}
        \left(e2^j\right)^{-\frac{\ln(|\Hc|)2^{2i-2}}{c_\theta^2}}\\
        &=\left(\frac{\delta}{e}\right)^3\cdot\sum_{i=1}^{\log_2((c_\theta/e) \sqrt{n/\ln|\Hc|})}\sum_{j=0}^{\log_2(n)}
        \left(\frac{1}{e2^j}\right)^{\frac{\ln(|\Hc|)2^{2i-2}}{c_\theta^2}}\\
        &\le\left(\frac{\delta}{e}\right)^3\cdot\sum_{i=1}^{\log_2((c_\theta/e) \sqrt{n/\ln|\Hc|})}\sum_{j=0}^{\log_2(n)}
        \left(\frac{1}{e2^j}\right)^{2^{2i-2}}\\
        &\le\left(\frac{\delta}{e}\right)^3\cdot\sum_{i=1}^{\log_2((c_\theta/e) \sqrt{n/\ln|\Hc|})}
        2\left(\frac{1}{e}\right)^{2^{2i-2}}\\
        &\le\left(\frac{\delta}{e}\right)^3\le \delta/2
    \end{align*}
    i.e union bounding over all pairs $(\Theta_i, L_j)$ with $i=1,\dots, \log_2((c_\theta/e) \sqrt{n/\ln|\Hc|})$ and $j=0,\dots, \log_2(n)$, gives that \eqref{eq:Piecewisemain_theorem} holds for all pairs with probability at least $1-\delta/2$.

    Doing the same for \eqref{eq:withinConst}, gives:
    \begin{align*}
        \sum_{i=1}^{\log_2((c_\theta/e) \sqrt{n/\ln|\Hc|})}\delta_i
        &=\sum_{i=1}^{\log_2((c_\theta/e) \sqrt{n/\ln|\Hc|})}\left(\frac{\delta}{e}\right)^3\exp\left(-\frac{\ln(e\theta_{i+1}^2 n)\ln(|\Hc|)}{\theta_{i+1}^2}\right)\\
        &\le\left(\frac{\delta}{e}\right)^3\sum_{i=1}^{\log_2((c_\theta/e) \sqrt{n/\ln|\Hc|})}\exp\left(-\ln(e\theta_{i+1}^2 n)\right)\\
        &=\left(\frac{\delta}{e}\right)^3\sum_{i=1}^{\log_2((c_\theta/e) \sqrt{n/\ln|\Hc|})}\frac{1}{e\theta_{i+1}^2n}\\
        &=\left(\frac{\delta}{e}\right)^3\sum_{i=1}^{\log_2((c_\theta/e) \sqrt{n/\ln|\Hc|})}\frac{n}{e^3\ln|\Hc|2^{2i}n}\\
        &\le\left(\frac{\delta}{e}\right)^3\sum_{i=1}^{\log_2((c_\theta/e) \sqrt{n/\ln|\Hc|})}\frac{1}{e2^{2i}}\\
        &\le\left(\frac{\delta}{e}\right)^3\le \delta/2
    \end{align*}
    Finally remarking that:
    \begin{align*}
    e/\delta_{i,j} \le \left(\frac{e}{\delta}\right)^4\exp\left(\frac{\ln(e/l_{j+1})\ln(|\Hc|)}{\theta_{i+1}^2}\right)
    \qquad\text{and}\qquad    
    e/\delta_{i} \le \left(\frac{e}{e}\right)^4\exp\left(\frac{\ln(e\theta_{i+1}^2 n)\ln(|\Hc|)}{\theta_{i+1}^2}\right)
    \end{align*}
    implying that
    \begin{align*}
    \frac{\ln(e/\delta_{i,j})}{n} 
    \le \frac{\ln\left((e/\delta)^4\cdot\exp\left(\frac{\ln(e/l_{j+1})\ln(|\Hc|)}{\theta_{i+1}^2}\right)\right) }{n} 
    = 4\frac{\ln(e/\delta) }{n}
    +\frac{\ln(e/l_{j+1})\ln(|\Hc|)}{\theta_{i+1}^2 n}
    \end{align*}
    and
    \begin{align*}
    \frac{\ln(e/\delta_{i})}{n} 
    \le \frac{\ln\left((e/\delta)^4\exp\left(\frac{\ln(e\theta_{i+1}^2 n)\ln(|\Hc|)}{\theta_{i+1}^2}\right)\right)}{n}
    = 4\frac{\ln(e/\delta) }{n}
    +\frac{\ln(e\theta_{i+1}^2 n)\ln(|\Hc|)}{\theta_{i+1}^2 n}
    \end{align*}
    Hence in conclusion, we have proved that with probability $1-\delta$ over the sample $\Sr\sim\Dc^n$ it holds for all $(\Theta_i, L_j)$, that:
    \begin{align*}
        \sup_{\substack{f\in\Cc_\Hc(\Theta_i, L_j)\\\theta\in \Theta_i}}|\Lc_\Dc(f)- \Lc_\Sr^\theta(f)|
        &\le c\left(\sqrt{l_{j+1}\left(\frac{\ln(e/l_{j+1})\ln(|\Hc|)}{\theta_{i+1}^2 n}+\frac{\ln(e/\delta_{i,j})}{n}\right)}+\frac{\ln(e/l_{j+1})\ln(|\Hc|)}{\theta_{i+1}^2 n}+\frac{\ln(e/\delta_{i,j})}{n}\right)\\
        &\le c\left(\sqrt{l_{j+1}\left(2\frac{\ln(e/l_{j+1})\ln(|\Hc|)}{\theta_{i+1}^2 n}+4\frac{\ln(e/\delta)}{n}\right)}+2\frac{\ln(e/l_{j+1})\ln(|\Hc|)}{\theta_{i+1}^2 n}+4\frac{\ln(e/\delta)}{n}\right)
    \end{align*}
    as well as for all $\Theta_i$, it holds that
    \begin{align*}
        \forall f\in\Cc_\Hc:\qquad 
        \frac{\Lc_\Dc^{(3/4)\theta_i}(f)}{2}- \Lc_\Sr^{\theta_i}(f)
        &\le c\left(\frac{\ln(\theta_{i+1}^2n/\ln|\Hc|)\ln(|\Hc|)}{\theta_{i+1}^2n} + \frac{\ln(e/\delta_i)}{n}\right)\\
        &\le c\left(2\frac{\ln(\theta_{i+1}^2n/\ln|\Hc|)\ln(|\Hc|)}{\theta_{i+1}^2n} + 4\frac{\ln(e/\delta)}{n}\right)
    \end{align*}
    finishing the proof of the second part of the claim.\\
    \end{proof}

\subsection{Replacement with $\phi$ and $\rho$}\label{subsec:phiRhoReplace}
\begin{restatelem}{\ref{lem:diffReplacementPhiRho}}
    \begin{align*}
        \E_{\gr\sim\Qc_f}\left[\Pr_{(\xr,\yr)\sim\Dc}[\yr\gr(\xr)>\theta/2\wedge \yr f(\xr) \le 0]\right]
        &\le \E_{(\xr,\yr)\sim\Dc}\left[\phi(\yr f(\xr))\right] \\
        \E_{\gr\sim\Qc_f}\left[\Pr_{(\xr,\yr)\sim S}[\yr\gr(\xr)>\theta/2\wedge \yr f(\xr) \le \theta]\right] 
        &\ge \E_{(\xr,\yr)\sim S}\left[\phi(\yr f(\xr))\right]\\
        \E_{\gr\sim\Qc_f}\left[\Pr_{(\xr,\yr)\sim S}[\yr\gr(\xr)\le\theta/2\wedge \yr f(\xr) > \theta]\right] 
        &\le \E_{(\xr,\yr)\sim S}\left[\rho(\yr f(\xr))\right]\\
        \E_{\gr\sim\Qc_f}\left[\Pr_{(\xr,\yr)\sim\Dc}[\yr\gr(\xr)\le\theta/2\wedge \yr f(\xr) > 0]\right] 
        &\ge \E_{(\xr,\yr)\sim \Dc}\left[\rho(\yr f(\xr))\right]
    \end{align*}
\end{restatelem}
The proof uses the following monotonicity property, which we will prove after the Lemma
\begin{lemma}\label{lem:condMonotonicity}
    for any $\lambda_1,\lambda_2$ such that  $-1<\lambda_1\le \lambda_2<1$:
    $$\Pr_{\gr\sim \Qc_f}[y\gr(x)>\theta/2\mid yf(x)=\lambda_1]\le \Pr_{\gr\sim \Qc_f}[y\gr(x)>\theta/2\mid yf(x)=\lambda_2]$$
\end{lemma}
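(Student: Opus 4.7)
My plan is to reduce the claim to the standard fact that binomial upper tails are monotone in the success probability, via a coupling argument.

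First I would make explicit that, under $\gr\sim\Qc_f$, the distribution of $y\gr(x)$ depends on $(f,(x,y))$ only through the scalar $\lambda:=yf(x)$. Writing $f=\sum_h a_h h$ and recalling that $\gr=\tfrac{1}{N}\sum_{i=1}^N \hr_i$ with $\hr_i\sim\Dc_f$ i.i.d., the random variables $y\hr_i(x)\in\{-1,+1\}$ are i.i.d.\ and satisfy
\begin{align*}
    \E[y\hr_i(x)] \;=\; \sum_h a_h\, y h(x) \;=\; yf(x) \;=\; \lambda,
\end{align*}
so that $\Pr[y\hr_i(x)=+1]=p(\lambda):=(1+\lambda)/2$. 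Hence $y\gr(x)$ has the same distribution as $(2H-N)/N$ with $H\sim\mathrm{Binom}(N,p(\lambda))$.

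Next I would rewrite the conditional probability as a binomial tail. The event $\{y\gr(x)>\theta/2\}$ is exactly $\{H\ge k^\ast\}$, where $k^\ast=\lfloor N(1+\theta/2)/2\rfloor+1$ is an integer threshold that does not depend on $\lambda$. So the lemma reduces to the statement that for $p_1:=p(\lambda_1)\le p(\lambda_2)=:p_2$,
\begin{align*}
    \Pr_{H\sim\mathrm{Binom}(N,p_1)}[H\ge k^\ast]
    \;\le\;
    \Pr_{H\sim\mathrm{Binom}(N,p_2)}[H\ge k^\ast].
\end{align*}

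Finally I would establish this via the standard monotone coupling: draw $N$ i.i.d.\ uniforms $U_1,\dots,U_N\in[0,1]$ and define $H(\lambda):=\sum_{i=1}^N \indi{U_i\le p(\lambda)}$. Then $H(\lambda)\sim\mathrm{Binom}(N,p(\lambda))$, and because $p(\lambda)$ is increasing in $\lambda$ we have $\indi{U_i\le p(\lambda_1)}\le\indi{U_i\le p(\lambda_2)}$ pointwise in $\omega$, hence $H(\lambda_1)\le H(\lambda_2)$ pointwise. In particular $\{H(\lambda_1)\ge k^\ast\}\subseteq\{H(\lambda_2)\ge k^\ast\}$; taking probabilities yields the claim. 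There is essentially no hard step; the only care needed is to record that $k^\ast$ is independent of $\lambda$, so that a single fixed threshold is being compared on both sides of the coupling.
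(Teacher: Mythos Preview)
Your proposal is correct and follows essentially the same route as the paper: compute $\Pr[y\hr_i(x)=+1]=(1+\lambda)/2$ and then establish monotonicity via a coupling with i.i.d.\ uniforms $U_i$, so that $\indi{U_i\le p(\lambda_1)}\le\indi{U_i\le p(\lambda_2)}$ pointwise. The only cosmetic difference is that you pass explicitly through the binomial representation and the fixed threshold $k^\ast$, whereas the paper carries out the coupling directly on the sums $\tfrac{1}{N}\sum_i y\hr_i(x)$ without naming $k^\ast$; the arguments are otherwise identical.
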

\begin{proof}
[Proof of Lemma~\ref{lem:diffReplacementPhiRho}]
Firstly we remark that we can swap what we take probability and expectation over:
$$\E_{\gr\sim\Qc_f}\left[\Pr_{(\xr,\yr)\sim\Dc}[\yr\gr(\xr)>\theta/2\wedge \yr f(\xr) \le 0]\right] = \E_{(\xr,\yr)\sim\Dc}\left[\Pr_{\gr\sim\Qc_f}[\yr\gr(\xr)>\theta/2\wedge \yr f(\xr) \le 0]\right]$$
Hence to prove the first inequality, we just need to prove $\Pr_{\gr\sim\Qc_f}[y\gr(x)>\theta/2\wedge yf(x) \le 0]\le\phi(yf(x))$ when $yf(x)$ is fixed (and similarly for the other three expressions). Lets start with the inequalities concerning $\phi$

By definition, $\phi(\lambda)$ depends on the value of $\lambda$
$$\phi(\lambda)= \begin{dcases}
    \Pr_{\gr\sim\Qc_f}[y\gr(x)>\theta/2\mid yf(x) = \lambda]                                &-1<\lambda\le0\\
    \frac{\theta-\lambda}{\theta}\Pr_{\gr\sim\Qc_f}[y\gr(x)>\theta/2\mid yf(x) = 0]  &\ \ \ 0<\lambda\le\theta\\
    0                                                                           &\ \ \ \theta<\lambda\le1
\end{dcases}$$
hence to prove the first inequality involving $\phi$ , we split into cases:
\begin{itemize}
    \item $-1<yf(x)\le0$ implies: 
    \begin{align*}
        \Pr_{\gr\sim \Qc_f}[y\gr(x)>\theta/2\wedge y f(x) \le 0]
        &=\Pr_{\gr\sim \Qc_f}[y\gr(x)>\theta/2\wedge y f(x) \le 0\mid -1<yf(x)\le0]\\
        &=\Pr_{\gr\sim \Qc_f}[y\gr(x)>\theta/2\mid -1<yf(x)\le0] = \phi(yf(x))
    \end{align*}
    \item if $0<yf(x)\le1$, then $\Pr_{\gr\sim\Qc_f}[y\gr(x)>\theta/2\wedge yf(x) \le 0] = 0 \le \phi(yf(x))$
\end{itemize}
This in total gives the first of the four inequalities.

For the second inequality we need to prove $\Pr_{\gr\sim \Qc_f}[y\gr(x)>\theta/2\wedge y f(x) \le \theta]\ge\phi(y f(x))$, we split up into cases:
\begin{itemize}
    \item $-1<yf(x)\le0$ implies $yf(x)\le \theta$ and hence:
    \begin{align*}
        \Pr_{\gr\sim \Qc_f}[y\gr(x)>\theta/2\wedge y f(x) \le \theta]
        &=\Pr_{\gr\sim \Qc_f}[y\gr(x)>\theta/2\wedge y f(x) \le \theta\mid -1<yf(x)\le0]\\
        &=\Pr_{\gr\sim \Qc_f}[y\gr(x)>\theta/2\mid -1<yf(x)\le0] = \phi(yf(x))
    \end{align*}
    \item if $0<yf(x)\le\theta$, then by definition and monotonicity (Lemma \ref{lem:condMonotonicity}),
    \begin{align*}
        \phi(yf(x)) &\le \Pr_{\gr\sim\Qc_f}[y\gr(x)>\theta/2\mid yf(x) = 0]
        \le\Pr_{\gr\sim\Qc_f}[y\gr(x)>\theta/2\mid 0 < yf(x) \le \theta ]\\
        &= \Pr_{\gr\sim\Qc_f}[y\gr(x)>\theta/2 \wedge yf(x)\le \theta \mid 0 < yf(x) \le \theta ]
        = \Pr_{\gr\sim\Qc_f}[y\gr(x)>\theta/2 \wedge yf(x)\le \theta]
    \end{align*}
    \item if $\theta<yf(x)\le1$, then $\phi(yf(x)) = 0 = \Pr_{\gr\sim \Qc_f}[y\gr(x)>\theta/2\wedge y f(x) \le \theta]$
\end{itemize}
This in total gives the second of the four inequalities

The arguments for $\rho$ are similar.
\end{proof}
Now to complete the argument, we need to prove Lemma \ref{lem:condMonotonicity}.
\begin{proof}[Proof of Lemma~\ref{lem:condMonotonicity}]
    Firstly by Remark \ref{rem:ProbBaseClassMargin}: 
    $$ yf(x) =\lambda_i \quad\Longrightarrow\quad
    \Pr_{\hr\sim \Dc_f}[y\hr(x)=1]=\frac{1}{2} + \frac{\lambda_i}{2}$$
    
    Now let $\Tr\sim U(0,1)$ be uniformly distributed, let $f(x)y=\lambda_i$ and $\hr\sim\Dc_f$, then 
    $$\hr(x)y \eqd 2\cdot\indi{\Tr\le \frac{1}{2} + \frac{\lambda_i}{2}}(\Tr)-1$$ 
    And most importantly, since $\lambda_1 \le \lambda_2$, for all $T\in[0,1]$
    $$ \indi{T\le \frac{1}{2} + \frac{\lambda_1}{2}}(T) \le \indi{T\le \frac{1}{2} + \frac{\lambda_2}{2}}(T)$$
    Meaning that:
    $$ \frac{1}{N}\sum_{i\in[N]}(2\cdot\indi{T\le \frac{1}{2} + \frac{\lambda_1}{2}}(T)-1))>\theta/2 
    \quad\Longrightarrow\quad
    \frac{1}{N}\sum_{i\in[N]}(2\cdot\indi{T\le \frac{1}{2} + \frac{\lambda_2}{2}}(T)-1))>\theta/2$$
    And hence, in probability
    \begin{align*}
        \Pr_{\gr\sim \Qc_f}[y\gr&(x)>\theta/2\mid yf(x)=\lambda_1]\\
        &= \Pr_{\set{\hr_1,\dots,\hr_N}\sim \Dc_f^N}\left[\frac{1}{N}\sum_{i\in[N]}y\hr_i(x)>\theta/2\bmid yf(x)=\lambda_1\right]\\
        &= \Pr_{\set{\Tr_1,\dots,\Tr_N}\sim U(0,1)^N}\left[\frac{1}{N}\sum_{i\in[N]}(2\cdot\indi{\Tr\le \frac{1}{2} + \frac{\lambda_1}{2}}(\Tr)-1))>\theta/2\right]\\
        &\le \Pr_{\set{\Tr_1,\dots,\Tr_N}\sim U(0,1)^N}\left[\frac{1}{N}\sum_{i\in[N]}(2\cdot\indi{\Tr\le \frac{1}{2} + \frac{\lambda_2}{2}}(\Tr)-1))>\theta/2\right]\\
        &= \Pr_{\set{\hr_1,\dots,\hr_N}\sim \Dc_f^N}\left[\frac{1}{N}\sum_{i\in[N]}y\hr_i(x)>\theta/2\bmid yf(x)=\lambda_2\right]\\
        &= \Pr_{\gr\sim \Qc_f}[y\gr(x)>\theta/2\mid yf(x)=\lambda_2]
    \end{align*}
    the wanted monotonicity holds
\end{proof}

\subsection{Evaluating $\phi$ and $\rho$}
\begin{lemma}\label{lem:marginProbIsBinom}
For all $f\in\Cc(\Hc)$, $\eta, \lambda\in[-1,1]$ it holds that:
    \begin{align*}
        \Pr_{\gr\sim\Qc_f}[y\gr(x)>\eta\mid yf(x)=\lambda] = \Pr_{\Hr\sim\text{Binom}(N,p_h)}[\Hr\ge k^*]
    \end{align*}
    where $p_h = \frac{1}{2}+\frac{\lambda}{2}$ and $k^* = \floor{(\frac{\eta}{2}+\frac{1}{2})N}+1$
\end{lemma}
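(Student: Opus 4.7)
The plan is to re-express the margin $y\gr(x)$ as a simple linear function of a sum of $N$ i.i.d.\ Bernoulli random variables, and then translate the event $y\gr(x) > \eta$ into an equivalent tail event on the resulting binomial.

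First I would fix $(x,y)$ and a voting classifier $f = \sum_{h\in\Hc} a_h h$ with $yf(x) = \lambda$. From the identity
\[
\lambda \;=\; yf(x) \;=\; \sum_{h:\,h(x)=y} a_h \;-\; \sum_{h:\,h(x)\neq y} a_h
\]
together with $\sum_{h} a_h = 1$, I get $\sum_{h:\,h(x)=y} a_h = (1+\lambda)/2 = p_h$. Since sampling $\hr \sim \Dc_f$ gives hypothesis $h$ with probability $a_h$, this immediately yields $\Pr_{\hr\sim\Dc_f}[y\hr(x) = 1] = p_h$ and $\Pr_{\hr\sim\Dc_f}[y\hr(x) = -1] = 1-p_h$.

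Next, by construction $\gr = \frac{1}{N}\sum_{i=1}^N \hr_i$ where the $\hr_i$ are i.i.d.\ from $\Dc_f$, so
\[
y\gr(x) \;=\; \frac{1}{N}\sum_{i=1}^N y\hr_i(x) \;=\; \frac{2\Hr - N}{N},
\]
where $\Hr := |\{i \in [N] : y\hr_i(x) = 1\}|$ is a sum of $N$ i.i.d.\ Bernoulli$(p_h)$ variables and thus $\Hr \sim \mathrm{Binom}(N, p_h)$. Substituting, the event $y\gr(x) > \eta$ becomes $\Hr > (\eta+1)N/2$.

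The last bookkeeping step converts this strict real-valued inequality into $\Hr \geq k^*$: since $\Hr$ is integer-valued, $\Hr > (\eta+1)N/2$ is equivalent to $\Hr \geq \lfloor (\eta/2 + 1/2)N\rfloor + 1 = k^*$. Chaining the equivalences gives the claimed equality of probabilities. There is essentially no obstacle here; the only place to be careful is the conversion of the strict inequality on a possibly non-integer threshold into the closed inequality $\Hr \geq k^*$, which relies on the elementary fact that for integer $m$ and real $c$, $m > c$ iff $m \geq \lfloor c\rfloor + 1$.
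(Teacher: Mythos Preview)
Your proof is correct and follows essentially the same approach as the paper: both compute $\Pr_{\hr\sim\Dc_f}[y\hr(x)=1]=(1+\lambda)/2$, express $y\gr(x)$ in terms of the count $\Hr$ of correct base classifiers, and convert the strict inequality into $\Hr\ge k^*$ via integrality. The only cosmetic difference is that the paper derives $p_h$ by computing $\E_{\hr\sim\Dc_f}[y\hr(x)]$ in two ways (its Remark~\ref{rem:ProbBaseClassMargin}), whereas you get it directly from the decomposition $\lambda=\sum_{h(x)=y}a_h-\sum_{h(x)\neq y}a_h$.
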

Firstly we argue
     \begin{remark}\label{rem:ProbBaseClassMargin}
        $yf(x) = \lambda \Longrightarrow \Pr_{\hr\sim\Dc_f}[y\hr(x)=1] = \frac{1}{2}+\frac{\lambda}{2}$
    \end{remark}
    \begin{proof}
        Computing the expected margin of a base classifier in two ways gives the claim
        \begin{align*}
        \E_{\hr\sim \Dc_f}[y\hr(x)] 
        &= y\E_{\hr\sim \Dc_f}[\hr(x)] 
        = y \sum_{h\in\Hc} \Pr_{\hr\sim\Dc_f}[\hr=h]h(x)
        = y \sum_{h\in\Hc} a_h h(x)
        = y f(x)=\lambda\\      
        \E_{\hr\sim \Dc_f}[y\hr(x)] 
        &= \Pr_{\hr\sim \Dc_f}[y\hr(x)=1] -\Pr_{\hr\sim \Dc_f}[y\hr(x)=-1]
        = 2\Pr_{\hr\sim \Dc_f}[y\hr(x)=1] -1
        \end{align*}
    \end{proof}
\begin{proof}
Each $g\in\Cc_N$ sampled using $\Qc_f$ is defined by $1/N\sum_{i\in[N]}h_i$, where the $h_i$'s are sampled from $\Hc$ using $\Dc_f$. Hence the margin of $g$ depends on the margins of the $h_i$'s, or equivalently on the correctness of their prediction on $(x,y)$.
\begin{align*}
    yg(x)= \frac{1}{N}\sum_{i\in[N]}yh_i(x) 
    = \frac{|\set{\text{correct }h_i}|-|\set{\text{wrong }h_i}|}{N}.
\end{align*}
Therefore we have an equivalence of events, when $yf(x)=\lambda$:
\begin{align*}
    yg(x)>\eta
    \quad\Longleftrightarrow\quad 
    |\set{\text{correct }h_i}| > \left(\frac{\eta}{2}+\frac{1}{2}\right)N
    \quad\Longleftrightarrow\quad 
    |\set{\text{correct }h_i}| \ge \floor{\left(\frac{\eta}{2}+\frac{1}{2}\right)N}+1
\end{align*}
hence we have an equality of probabilities.
\begin{align*}
    \Pr_{\gr\sim\Qc_f}&[y\gr(x)>\theta/2\mid yf(x) = \lambda]
    = \Pr_{(\hr_1,\dots,\hr_N)\sim\Dc_f^N}\left[|\set{\text{correct }h_i}|\ge\floor{\left(\frac{1}{2}+\frac{\theta}{4}\right)N} +1\bmid yf(x) = \lambda\right]
\end{align*}
Now by Remark \ref{rem:ProbBaseClassMargin}, the probability $p_{h_i}$ that $h_i$ is correct on $(x,y)$, is $p_h = \frac{1}{2}+\frac{\lambda}{2}$, for all $i\in[N]$. 

Let $k^* = \floor{\left(\frac{1}{2}+\frac{\theta}{4}\right)N} +1$, then the above event corresponds to getting at least $k^*$ successes out of $N$ consecutive Bernoulli trials with success probability $p_h$. Which is exactly the statement of the lemma.
\end{proof}

\end{document}